\def\eqref#1{equation~\ref{#1}}
\def\1{\bm{1}}
\def\eps{{\epsilon}}
\DeclareMathAlphabet{\mathsfit}{\encodingdefault}{\sfdefault}{m}{sl}
\SetMathAlphabet{\mathsfit}{bold}{\encodingdefault}{\sfdefault}{bx}{n}
\newcommand{\R}{\mathbb{R}}
\definecolor{umn_maroon}{RGB}{122, 0, 25}
\definecolor{myred}{HTML}{D62728}
\definecolor{myblue}{HTML}{1F77B4}
\definecolor{mygreen}{HTML}{00FF00}
\crefname{section}{Sec.}{Secs.}
\Crefname{section}{Section}{Sections}
\Crefname{table}{Table}{Tables}
\crefname{table}{Tab.}{Tabs.}
\newtheorem{theorem}{Theorem}[section]
\renewcommand{\mathbf}{\boldsymbol}
\newcommand{\mb}{\mathbf}
\newcommand{\mc}{\mathcal}
\newcommand{\bb}{\mathbb}
\newcommand{\set}[1]{\left\{ #1 \right\}}
\newcommand{\epss}{\varepsilon}
\newcommand{\RR}{\bb R}
\newcommand{\paren}{\pqty}
\newcommand{\brac}{\bqty}
\newcommand{\wh}{\widehat}
\newcommand{\T}{\intercal}
\newcommand{\innerprod}[2]{\left\langle #1,  #2 \right\rangle}
\title{Early Stopping for Deep Image Prior}
\author{\name Hengkang Wang \email wang9881@umn.edu \\
      \addr University of Minnesota, Twin Cities
      \AND
      \name Taihui Li \email lixx5027@umn.edu \\
      \addr University of Minnesota, Twin Cities
      \AND
      \name Zhong Zhuang \email zhuan143@umn.edu\\
      \addr University of Minnesota, Twin Cities
      \AND
      \name Tiancong Chen \email chen6271@umn.edu \\
      \addr University of Minnesota, Twin Cities
      \AND
      \name Hengyue Liang \email liang656@umn.edu \\
      \addr University of Minnesota, Twin Cities
      \AND
      \name Ju Sun \email jusun@umn.edu \\
      \addr University of Minnesota, Twin Cities
      }
\begin{document}

\maketitle
\begin{abstract}
Deep image prior (DIP) and its variants have shown remarkable potential to solve inverse problems in computational imaging, \emph{needing no separate training data}. Practical DIP models are often substantially overparameterized. During the learning process, these models first learn the desired visual content and then pick up potential modeling and observational noise, i.e., performing early learning then overfitting (ELTO). Thus, the practicality of DIP hinges on early stopping (ES) that can capture the transition period. In this regard, most previous DIP works for computational imaging tasks only demonstrate the potential of the models, reporting peak performance against ground truth, but providing no clue about how to operationally obtain near-peak performance \emph{without access to ground truth}. In this paper, we set out to break this practicality barrier of DIP and propose an effective ES strategy that consistently detects near-peak performance in various computational imaging tasks and DIP variants. Simply based on the running variance of DIP intermediate reconstructions, our ES method not only outpaces the existing ones---which only work in very narrow regimes, but also remains effective when combined with methods that try to mitigate overfitting. The code to reproduce our experimental results is available at \url{https://github.com/sun-umn/Early_Stopping_for_DIP}.
\end{abstract}

\section{Introduction}\label{sec:introduction}

Inverse problems (IPs) are prevalent in computational imaging, ranging from basic image denoising, super-resolution, and deblurring, to advanced 3D reconstruction and major tasks in scientific and medical imaging~\citep{Szeliski2021Computer}. Despite the disparate settings, all these problems take the form of recovering a visual object $\mb x$ from $\mb y = f(\mb x)$, where $f$ models the forward physical process to obtain the observation $\mb y$. Typically, these visual IPs are not determined in a unique way: $\mb x$ cannot be determined uniquely from $\mb y$. This is exacerbated by potential modeling (e.g., linear $f$ to approximate a nonlinear process) and observational (e.g., Gaussian or shot) noise, i.e., $\mb y \approx f\paren{\mb x}$. To overcome nonuniqueness and improve noise stability, researchers often encode a variety of problem-specific priors on $\mb x$ when formulating IPs. 

Traditionally, IPs are phrased as regularized data fitting problems: 
\begin{align}  \label{eq:reg_df}
    \min_{\mb x} \; \ell\paren{\mb y, f\paren{\mb x}} + \lambda R\paren{\mb x}  \quad \quad \ell\paren{\mb y, f\paren{\mb x}}: \text{data-fitting loss}, \;  R\paren{\mb x}: \text{regularizer} 
\end{align}
where $\lambda$ is the regularization parameter. Here, the loss $\ell$ is often chosen according to the noise model, and the regularizer $R$ encodes priors on $\mb x$. The advent of deep learning has revolutionized the way IPs are solved. On the radical side, deep neural networks (DNNs) are trained to directly map any given $\mb y$ to an $\mb x$; on the mild side, pre-trained or trainable {deep learning} models are taken to replace certain nonlinear mappings in iterative numerical algorithms for solving \cref{eq:reg_df} (e.g. plug-and-play and algorithm unrolling); see recent surveys~\cite{OngieEtAl2020Deep,JanaiEtAl2020Computer} on these developments. All of these {deep-learning}-based methods rely on large training sets to adequately represent the underlying priors and/or noise distributions. \textbf{This paper concerns another family of striking ideas that do not require separate training data}. 

\paragraph{\textcolor{umn_maroon}{Deep image prior (DIP)}}
\cite{ulyanov2018deep} proposes parameterizing $\mb x$ as $\mb x = G_{\mb \theta} \paren{\mb z}$, where $G_{\mb \theta}$ is a trainable DNN parameterized by $\mb \theta$ and $\mb z$ is a frozen or trainable random seed. \textbf{No separate training data other than $\mb y$ are used!} Plugging the reparametrization into \cref{eq:reg_df}, we obtain
\begin{align}\label{eq:dip}
        \min_{\mb \theta}\;  \ell \paren{\mb y, f \circ G_{\mb \theta}\paren{\mb z}} + \lambda R \circ G_{\mb \theta}\paren{\mb z}. 
\end{align}
$G_{\mb \theta}$ is often ``overparameterized''---containing substantially more parameters than the size of $\mb x$, and ``structured''---e.g., consisting of convolution networks to encode structural priors in natural visual objects. The resulting optimization problem is solved using standard first-order methods (e.g., (adaptive) gradient descent). When $\mb x$ has multiple components with different physical meanings, one can naturally parametrize $\mb x$ using multiple DNNs. This simple idea has led to surprisingly competitive results in numerous visual IPs, from low-level image denoising,  super-resolution, inpainting~\citep{ulyanov2018deep,heckel2018deep,liu2018image} and blind deconvolution~\citep{Ren_2020_CVPR,wang2019image,asim2019blind,TranEtAl2021Explore,zhuang_blind_2022}, to mid-level image decomposition and fusion~\citep{Gandelsman_2019_CVPR,MaEtAl2021Unsupervised}, and to advanced {computational imaging} problems~\citep{darestani2021accelerated,hand2018phase,williams2019deep,yoo2021timedependent,2020,cascarano2021combining,hashimoto2021direct,gong2021direct,vanveen2020compressed,TayalEtAl2021Phase,zhuang2022practical}; see the survey~\cite{QayyumEtAl2021Untrained}.

\begin{wrapfigure}{r}{0.5\textwidth}
    \vspace{-1em}
    \centering
    \includegraphics[width=0.9\linewidth]{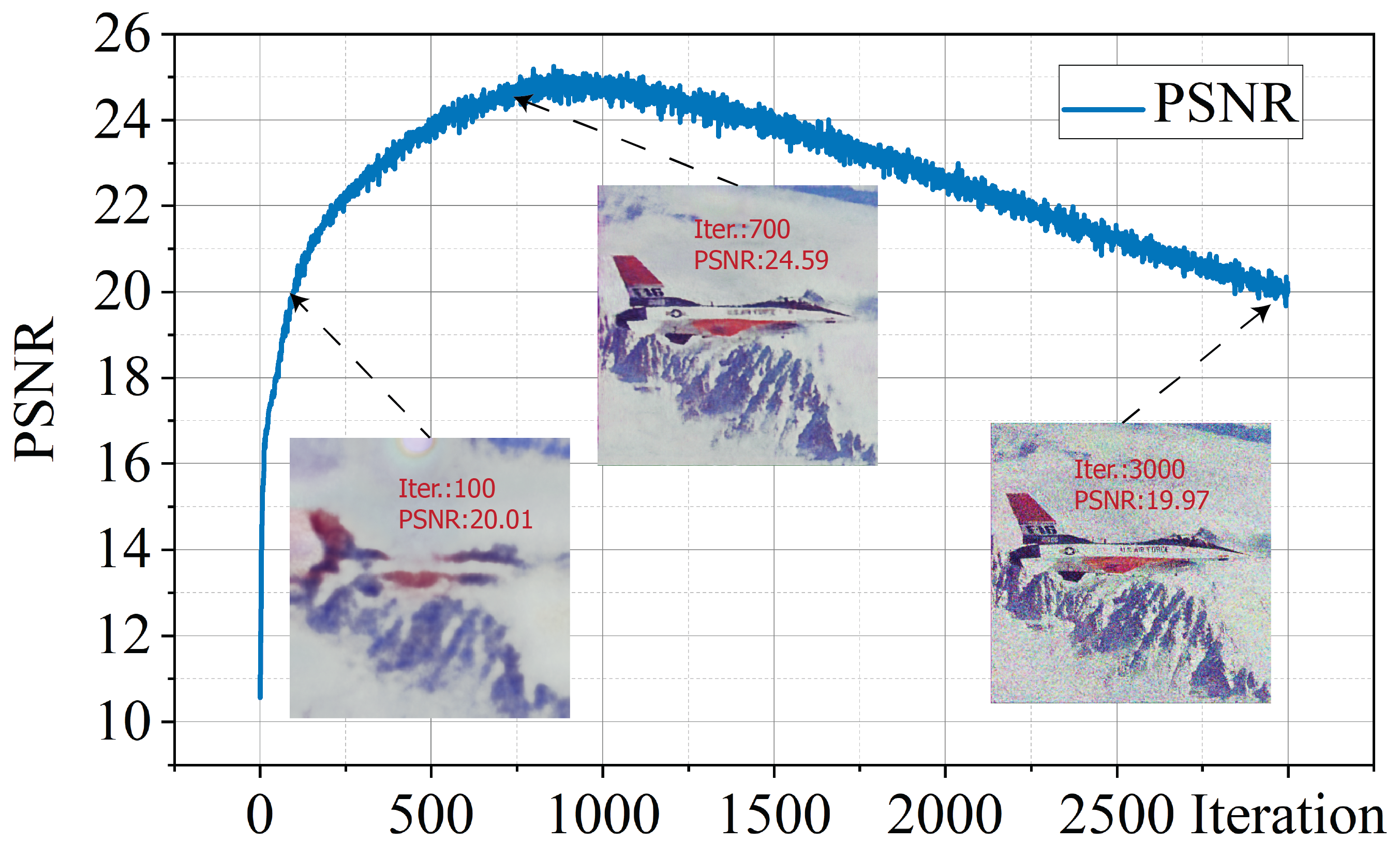}
    \caption{The ``early-learning-then-overfitting'' (ELTO) phenomenon in DIP for image denoising. The quality of the estimated image climbs first to a peak and then drops once the noise is picked up by the model $G_{\mb \theta}(\mb z)$ also.
    }
    \label{fig:overfit}
\end{wrapfigure}
\paragraph{\textcolor{umn_maroon}{Overfitting issue in DIP}}
A critical detail that we have glossed over is \textbf{overfitting}. Since $G_{\mb \theta}$ is often substantially overparameterized, $G_{\mb \theta}(\mb z)$ can represent arbitrary elements in the $\mb x$ domain. Global optimization of~\eqref{eq:dip} would normally lead to $\mb y = f \circ G_{\mb \theta}\paren{\mb z}$, but $G_{\mb \theta}(\mb z)$  may not reproduce $\mb x$, e.g., when $f$ is non-injective, or $\mb y \approx f\paren{\mb x}$ so that $G_{\mb \theta}(\mb z)$ also accounts for the modeling and observational noise. Fortunately, DIP models and first-order optimization methods together offer a blessing: in practice, $G_{\mb \theta}(\mb z)$ has a bias towards the desired visual content and learns it much faster than learning noise. Therefore, the quality of reconstruction climbs to a peak before the potential degradation due to noise; see \cref{fig:overfit}. This ``early-learning-then-overfitting'' (ELTO) phenomenon has been repeatedly reported in previous work and is also supported by theories on simple $G_{\mb \theta}$ and linear $f$~\citep{heckel2020denoising,HeckelSoltanolkotabi2020Compressive}. \textbf{The successes of the DIP models claimed above are on the premise that appropriate early stopping (ES) around performance peaks can be made}. 

\paragraph{\textcolor{umn_maroon}{Is ES for DIP trivial?}} 
Natural ideas trying to perform ES can fail quickly. \textbf{(1) Visual inspection}: This subjective approach is fine for small-scale tasks involving few problem instances, but quickly becomes infeasible for many scenarios, such as (a) large-scale batch processing, (b) recovery of visual contents tricky to visualize and/or examine by eyes (e.g. 3D or 4D visual objects), and (c) scientific imaging of unfamiliar objects (e.g., MRI imaging of rare tumors and microscopic imaging of new virus species); \textbf{(2) Tracking full-reference/no-reference image quality metrics (FR/NR-IQMs) or fitting loss}: Without the ground truth $\mb x$, computing any FR-IQM and thereby tracking its trajectory (e.g., the PNSR curve in \cref{fig:overfit}) is out of the question. We consider the tracking of NR-IQMs as a family of baseline methods in \cref{sec:baseline}; the performance is much worse than ours. We also explore the possibility of using the loss curve for ES here, but are unable to find correlations between the trend of the loss and that of the PSNR curve, shown in \cref{fig:loss_curve}; \textbf{(3) Tuning the iteration number}: This ad-hoc solution is taken in most previous work. But since the peak iterations of DIP vary considerably across images and tasks (see, e.g., \cref{fig:helper_example,fig:mri_curve,sec:denoising_eg,sec:competing_ap}), this could entail numerous trial-and-error steps and lead to suboptimal stopping points; \textbf{(4) Validation-based ES}: ES easily reminds us of validation-based ES in supervised learning. The DIP approach to IPs, as summarized in \cref{eq:dip} \textbf{does not belong to} supervised learning, as it only deals with a single instance $\mb y$, without separate $(\mb x, \mb y)$ pairs as training data. There are recent ideas~\citep{yaman2021zeroshot,ding_validation_2022} that hold part of the observation $\mb y$ out as a validation set to emulate validation-based ES in supervised learning, but they quickly become problematic for nonlinear IPs due to the significant violation of the underlying i.i.d. assumption; see \cref{sec:bid_exp}.  

\paragraph{\textcolor{umn_maroon}{Prior work addressing the overfitting}}
\label{sec:prior_work}
There are three main approaches to counteracting the overfitting of DIP models. \textbf{(1) Regularization}: \cite{heckel2018deep} mitigates overfitting by restricting the size of $G_{\mb \theta}$ to the underparametrization regime. \cite{MetzlerEtAl2018Unsupervised,shi2021measuring,jo2021rethinking,ChengEtAl2019Bayesian} control the network capacity by regularizing the layer-wise weights or the network Jacobian. \cite{liu2018image,mataev2019deepred,sun2021solving,cascarano2021combining} use additional regularizer(s) $R\paren{G_{\mb \theta}(\mb z)}$, such as the total-variation norm or trained denoisers. These methods require the right regularization level---which depends on the noise type and level---to avoid overfitting; with an improper regularization level, they can still lead to overfitting (see \cref{fig:helper_example} and \cref{sec:exp_ID}). Moreover, when they do succeed, the performance peak is postponed to the last iterations, often increasing the computational cost by several folds. 
\textbf{(2) Noise modeling}: \cite{you2020robust} models sparse additive noise as an explicit term in their optimization objective. \cite{jo2021rethinking} designs regularizers and ES criteria specific to Gaussian and shot noise. \cite{DingEtAl2021Rank} explores subgradient methods with diminishing step size schedules for impulse noise with the $\ell_1$ loss, with preliminary success. These methods do not work beyond the types and levels of noise they target, whereas our knowledge of the noise in a given visual IP is typically limited. \textbf{(3) Early stopping (ES)}: \cite{shi2021measuring} tracks progress based on a ratio of no-reference blurriness and sharpness, but the criterion only works for their modified DIP models, as acknowledged by the authors. \cite{jo2021rethinking} provides noise-specific regularizer and ES criterion, but it is not clear how to extend the method to unknown types and levels of noise. \cite{Li2021} proposes monitoring DIP reconstruction by training a coupled autoencoder. Although its performance is similar to ours, the extra autoencoder training slows down the whole process dramatically; see \cref{sec:expriments}. \cite{yaman2021zeroshot,ding_validation_2022} emulate validation-based ES in supervised learning by splitting elements of $\mb y$ into ``training'' and ``validation'' sets so that validation-based ES can be performed. But in IPs, especially nonlinear ones (e.g., in blind image deblurring (BID), $\mb y \approx \mb k \ast \mb x$ where $\ast$ is the linear convolution), elements of $\mb y$ can be far from being i.i.d., and so validation may not work well. Moreover, holding out part of the observation in $\mb y$ can substantially reduce the peak performance; see \cref{sec:bid_exp}.

\begin{table}[!htpb]
\centering 
\caption{Summary of performance of our DIP$+$ES-WMV and competing methods on image denoising and blind image deblurring (BID). \textbf{$\checkmark$}: working reasonably well (PSNR $\ge$ $2dB$ less of the original DIP peak); -: not working well (PSNR $\le$ $2dB$ less of the original DIP peak): N/A: not applicable (i.e., we do not perform comparison due to certain reasons). Note that DF-STE, DOP, and SB are based on modified DIP models.}

\label{tab:overall}
\setlength{\tabcolsep}{2.5mm}{
\begin{tabular}{ccccccccccc}
\hline
        & \multicolumn{8}{c}{Image denoising}                                                                                                                                                                                                                                                                                                                                                                                                                                                                                                   & \multicolumn{2}{c}{BID}                                                                                       \\ \hline
        & \multicolumn{2}{c}{Gaussian}                                                                                                       & \multicolumn{2}{c}{Impulse}                                                                                                        & \multicolumn{2}{c}{Speckle}                                                                                                        & \multicolumn{2}{c}{Shot}                                                                                      & \multicolumn{2}{c}{Real world}                                                                                \\ \hline
        & \multicolumn{1}{c}{Low}                                         & \multicolumn{1}{c}{High}                                        & \multicolumn{1}{c}{Low}                                         & \multicolumn{1}{c}{High}                                        & \multicolumn{1}{c}{Low}                                         & \multicolumn{1}{c}{High}                                        & \multicolumn{1}{c}{Low}                                         & High                                        & \multicolumn{1}{c}{Low}                                         & High                                        \\ \hline
DIP$+$ES-WMV (\textcolor{red}{Ours})  & \multicolumn{1}{c}{\textbf{$\checkmark$}} & \multicolumn{1}{c}{\textbf{$\checkmark$}} & \multicolumn{1}{c}{\textbf{$\checkmark$}} & \multicolumn{1}{c}{\textbf{$\checkmark$}}                                & \multicolumn{1}{c}{\textbf{$\checkmark$}} & \multicolumn{1}{c}{\textbf{$\checkmark$}} & \multicolumn{1}{c}{\textbf{$\checkmark$}} & \textbf{$\checkmark$} & \multicolumn{1}{c}{\textbf{$\checkmark$}} & \textbf{$\checkmark$} \\ 
DIP+NR-IQMs & \multicolumn{1}{c}{-}                                           & \multicolumn{1}{c}{-}                                           & \multicolumn{1}{c}{-}                                           & \multicolumn{1}{c}{-}                                           & \multicolumn{1}{c}{-}                                           & \multicolumn{1}{c}{-}                                           & \multicolumn{1}{c}{-}                                           & -                                           & \multicolumn{1}{c}{N/A}                                           & N/A                                           \\ 
DIP+SV-ES   & \multicolumn{1}{c}{\textbf{$\checkmark$}} & \multicolumn{1}{c}{\textbf{$\checkmark$}} & \multicolumn{1}{c}{\textbf{$\checkmark$}} & \multicolumn{1}{c}{\textbf{$\checkmark$}}                                & \multicolumn{1}{c}{\textbf{$\checkmark$}} & \multicolumn{1}{c}{\textbf{$\checkmark$}} & \multicolumn{1}{c}{\textbf{$\checkmark$}} & \textbf{$\checkmark$} & \multicolumn{1}{c}{N/A}                                           & N/A                                           \\ 
DIP+VAL     & \multicolumn{1}{c}{\textbf{$\checkmark$}} & \multicolumn{1}{c}{\textbf{$\checkmark$}} & \multicolumn{1}{c}{\textbf{$\checkmark$}} & \multicolumn{1}{c}{\textbf{$\checkmark$}}                                & \multicolumn{1}{c}{\textbf{$\checkmark$}} & \multicolumn{1}{c}{\textbf{$\checkmark$}} & \multicolumn{1}{c}{\textbf{$\checkmark$}} & \textbf{$\checkmark$} & \multicolumn{1}{c}{-}                                           & - \\ 
DF-STE  & \multicolumn{1}{c}{\textbf{$\checkmark$}} & \multicolumn{1}{c}{\textbf{$\checkmark$}}                                & \multicolumn{1}{c}{N/A}                                           & \multicolumn{1}{c}{N/A}                                           & \multicolumn{1}{c}{N/A}                                           & \multicolumn{1}{c}{N/A}                                           & \multicolumn{1}{c}{\textbf{$\checkmark$}} & \textbf{$\checkmark$}                                & \multicolumn{1}{c}{N/A}                                           & N/A                                           \\ 

DOP     & \multicolumn{1}{c}{N/A}                                           & \multicolumn{1}{c}{N/A}                                           & \multicolumn{1}{c}{\textbf{$\checkmark$}} & \multicolumn{1}{c}{\textbf{$\checkmark$}} & \multicolumn{1}{c}{N/A}                                           & \multicolumn{1}{c}{N/A}                                           & \multicolumn{1}{c}{N/A}                                           & N/A                                           & \multicolumn{1}{c}{N/A}                                           & N/A                                           \\ 
SB      & \multicolumn{1}{c}{{\textbf{$\checkmark$}}}                                           & \multicolumn{1}{c}{{\textbf{$\checkmark$}}}                                           & \multicolumn{1}{c}{N/A}                                           & \multicolumn{1}{c}{N/A}                                           & \multicolumn{1}{c}{N/A}                                           & \multicolumn{1}{c}{N/A}                                           & \multicolumn{1}{c}{N/A}                                           & N/A                                           & \multicolumn{1}{c}{N/A}                                           & N/A                                            
                                           \\ \hline

\end{tabular}
}
\end{table}

\paragraph{\textcolor{umn_maroon}{Our contribution}}
We advocate the ES approach---\textbf{the iteration process stops once a good ES point is detected}, as (1) the regularization and noise modeling approaches, even if effective, often do not improve peak performance but push it until the last iterations; there could be $\ge 10\times$ more iterations spent than climbing to the peak in the original DIP models; (2) both need deep knowledge about the noise type/level, which is practically unknown for most applications. If their key models and hyperparameters are not set appropriately, overfitting probably remains, and ES is still needed. \textbf{In this paper, we build \textbf{a novel ES criterion} for various DIP models simply by monitoring the trend of the running variance of the reconstruction sequence}. Our ES method is \textbf{(1) Effective}: {The gap between our detected and the peak performance, i.e., the detection gap}, is typically very small, as measured by standard visual quality metrics (PSNR and SSIM). Our method works well for DIP and its variants, including sinusoidal representation networks~\citep[SIREN]{sitzmann2020implicit} and deep decoder~\citep{heckel2018deep}, on different noisy types/levels and in $5$ visual IPs, including both linear and non-linear ones. Furthermore, our method can help several regularization-based methods, e.g., Gaussian process-DIP~\citep[GP-DIP]{ChengEtAl2019Bayesian}, DIP with total variation regularization~\citep[DIP-TV]{liu2018image,cascarano2021combining} to perform reasonable ES when they fail to prevent overfitting; \textbf{(2) Efficient}: The per-iteration overhead is a fraction---for the standard version in \cref{alg:framework}, or negligible---for the variant in \cref{alg:framework_emavg}, relative to the per-iteration cost of \cref{eq:dip}; \textbf{(3) Robust}: Our method is relatively insensitive to the two hyperparameters, i.e. window size and patience number. We keep the same hyperparameters for all experiments \cref{sec:method,sec:expriments} except for the ablation study. In contrast, the hyperparameters of most of the methods reviewed above are sensitive to the noise type/level. We summarize the performance of our DIP+ES method against competing methods for image denoising and BID in \cref{tab:overall}; we present the detailed results in \cref{sec:expriments}.

\begin{wrapfigure}{r}{0.45\textwidth}
    \centering 
    \vspace{-1em}
    \includegraphics[width=0.9\linewidth]{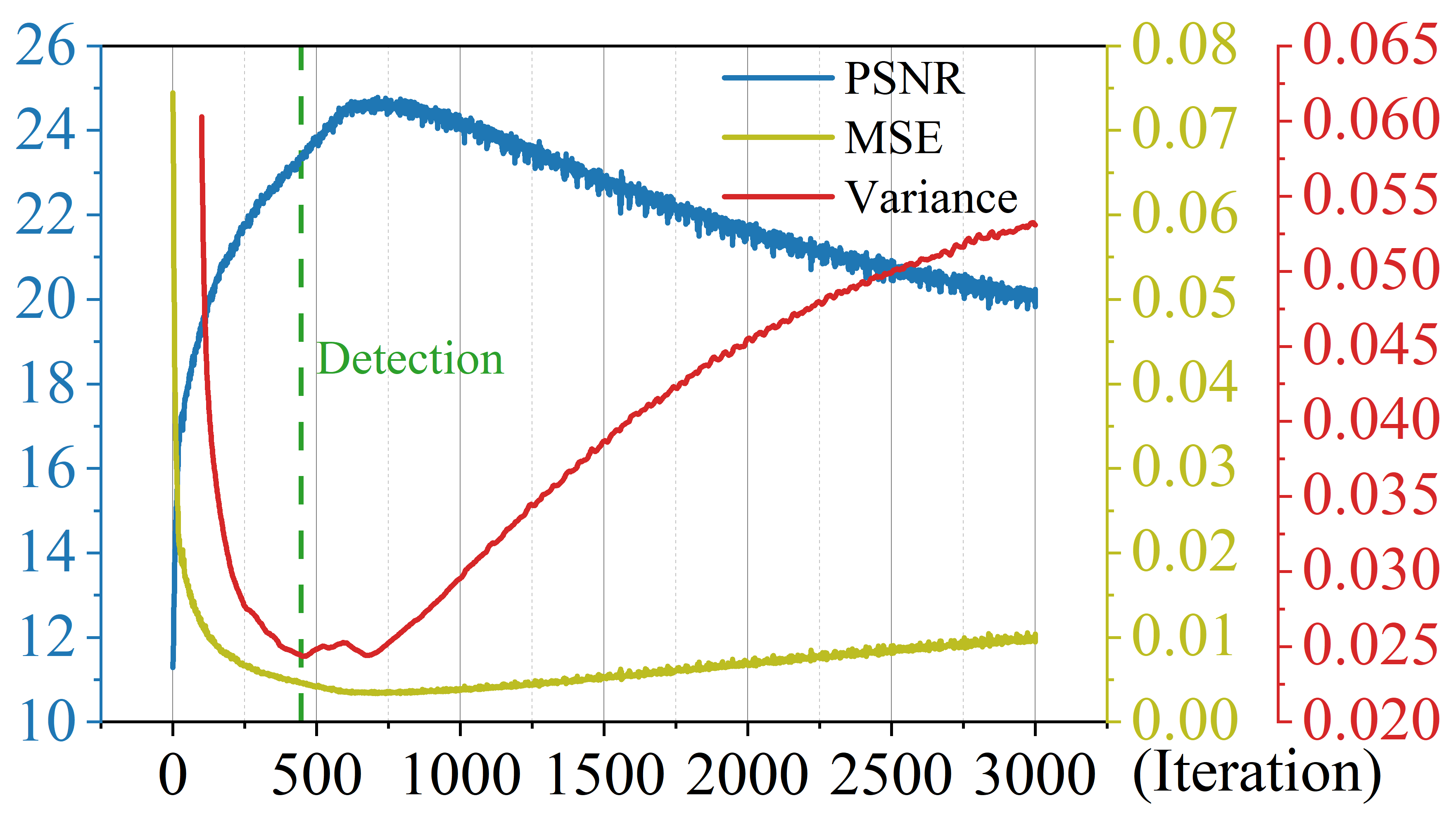}
    \caption{Relationship between the PSNR, MSE, and VAR curves. Our method relies on the VAR curve, whose valley is often well aligned with the MSE valley---that corresponds to the PSNR peak.
    }
    \label{fig:mse_var_trend}
    \vspace{-4em}
\end{wrapfigure} 

\paragraph{\textcolor{umn_maroon}{Remarks on diffusion models for IPs}}

Recently, diffusion-based models (DBMs) have shown great promise in solving linear IPs~\cite{wang_zero-shot_2022,zhu_denoising_2023}. However, we note three things about these ideas: (1) their performance seems to be sensitive to the match of noise type and level between the training of the diffusion models and those in the actual IPs. Mismatch can lead to miserable results, as we demonstrate in \cref{tab:sr_2,tab:denoising_dm}; (2) DBMs in solving IPs can suffer from overfitting issues similar to that in DIP also (see \cref{sec:diffusion}); (3) there has been limited success in tackling nonlinear IPs by DBMs so far, see, e.g., the very recent attempt~\cite{chung2023diffusion}. It remains to be seen how effective these ideas can be on general nonlinear IPs. 
\section{Our Early-Stopping (ES) Method}\label{sec:method}
\paragraph{\textcolor{umn_maroon}{Intuition for our method}} \label{sec:method_ituit}
We assume that $\mb x$ is the unknown groundtruth visual object of size $N$, $\set{\mb \theta^t}_{t \ge 1}$ is the iterate sequence and $\set{\mb x^t}_{t \ge 1}$ the reconstruction sequence where $\mb x^t \doteq G_{\mb \theta^t}(\mb z)$. Since we do not know $\mb x$, we cannot compute the PNSR or any FR-IQM curve. But we observe from \cref{fig:mse_var_trend} that the MSE (resp. PSNR; recall $\mathrm{PSNR}(\mb x^t) = 10 \log_{10} \norm{\mb x}_{\infty}^2/\mathrm{MSE}(\mb x^t)$) curve follows a U (resp. bell) shape: $\norm{\mb x^t - \mb x}_F^2$ initially drops rapidly to a low level and then climbs back due to the noise effect, i.e., the ELTO phenomenon in \cref{sec:introduction}; we hope to detect the valley of this U-shaped MSE curve. 

Then how to gauge the MSE curve \textbf{without knowing $\mb x$}? We consider the running variance (VAR): 
\begin{align}  \label{eq:lp_argument_key2}
    \mathrm{VAR}\paren{t} \doteq \frac{1}{W}\sum_{w=0}^{W-1} \|\mb x^{t+w} - 1/W \cdot \sum_{i=0}^{W-1} \mb x^{t+i}\|_F^2. 
\end{align}
Initially, the models quickly learn the desired visual content, resulting in a monotonic and rapidly decreasing MSE curve (see \cref{fig:mse_var_trend}). So we expect the running variance of $\set{\mb x^t}_{t \ge 1}$ to also drop quickly, as shown in \cref{fig:mse_var_trend}. When the iteration is near the MSE valley, all $\mb x^t$' s are near, but scattered around $\mb x$. So $\frac{1}{W}\sum_{i=0}^{W-1} \mb x^{t+i} \approx \mb x$ and $\mathrm{VAR}\paren{t} \approx \frac{1}{W}\sum_{w=0}^{W-1} \norm{\mb x^{t+w} - \mb x}_F^2$. Afterward, the noise effect kicks in and the MSE curve bounces back, leading to a similar bounce back in the VAR curve as the $\mb x^t$ sequence gradually moves away from $\mb x$. 

\begin{wraptable}{r}{0.45\linewidth}
    \centering 
    \vspace{-2em}
    \caption{ES-WMV (our method) on real-world image denoising for \textbf{1024 images}: mean and \scriptsize{(std)} on the images. \footnotesize{(\textbf{D}: detected)}}
    \label{tab:reall}
    \setlength{\tabcolsep}{1mm}{
    \begin{tabular}{c c c c c}
    \hline
    \scriptsize{$\ell$ (loss)}
    &
    \multicolumn{1}{c}{\scriptsize{PSNR (\textbf{D})}} &
    \multicolumn{1}{c}{\scriptsize{PSNR Gap}}
    &
    \multicolumn{1}{c}{\scriptsize{SSIM (\textbf{D})}} &
    \multicolumn{1}{c}{\scriptsize{SSIM Gap}}
    \\
    \hline
    \scriptsize{MSE}
    & \scriptsize{34.04} \tiny({3.68})
    & \textcolor{red}{\scriptsize{0.92}} \tiny({0.83})
    & \scriptsize{0.92} \tiny({0.07})
    & \textcolor{red}{\scriptsize{0.02}} \tiny({0.04})
    \\
    
    \scriptsize{$\ell_1$}
    & \scriptsize{33.92} \tiny({4.34})
    & \textcolor{red}{\scriptsize{0.92}} \tiny({0.59})
    & \scriptsize{0.93} \tiny({0.05})
    & \textcolor{red}{\scriptsize{0.02}} \tiny({0.02})
    \\
    
    \scriptsize{Huber}
    & \scriptsize{33.72} \tiny({3.86})
    & \textcolor{red}{\scriptsize{0.95}} \tiny({0.73})
    & \scriptsize{0.92} \tiny({0.06})
    & \textcolor{red}{\scriptsize{0.02}} \tiny({0.03})
    \\
    \hline
    \end{tabular}
    }
\end{wraptable}
This argument suggests a U-shaped VAR curve and the curve should follow the trend of the MSE curve, with approximately aligned valleys, which in turn are aligned with the PSNR peak. To quickly verify this, we randomly sample $1024$ images from the RGB track of the NTIRE 2020 Real Image Denoising Challenge~\citep{abdelhamed2020ntire}, and perform DIP-based image denoising (i.e. $\min\; \ell(\mb y, G_{\mb \theta}(\mb z))$ where $\mb y$ denotes the noisy image). \cref{tab:reall} reports the average detected PSNR/SSIM and the average detection gaps based on our ES method (see \cref{alg:framework}) that tries to detect the valley of the VAR curve. On average, the detection gaps are $\le 0.95$ in PSNR and $\le 0.02$ in SSIM, and the difference in visual qualities is typically barely noticeable by eyes! Furthermore, we provide histograms of the PSNR and SSIM gaps in \cref{fig:hist}. For more than $95\%$ of the images, our ES method obtains a PSNR gap less than $2dB$.


\begin{wrapfigure}{r}{0.5\textwidth}
\vspace{-2em}
\begin{minipage}{0.5\textwidth}
    {\small
\begin{algorithm}[H]
\caption{DIP with ES--WMV}
\label{alg:framework} 
\begin{algorithmic}[1]
\Require random seed $\mb z$, randomly-initialized $\mb \theta^0$, window size $W$, patience $P$, empty queue $\mc Q$, iteration counter $k = 0$, $\mathrm{VAR}_{\min} = \infty$
\Ensure reconstruction $\mb x^{*}$
\While{not stopped}
\State update $\mb \theta$ via \cref{eq:dip} to obtain $\mb \theta^{k+1}$ and $\mb x^{k+1}$
\State push $\mb x^{k+1}$ to $\mc Q$, pop queue if $\abs{\mc Q} > W$
\If{$\abs{\mc Q} = W$}
\State compute $\mathrm{VAR}$ of elements in $\mc Q$ via \cref{eq:lp_argument_key2}
\If{$\mathrm{VAR} < \mathrm{VAR}_{\min}$}
\State $\mathrm{VAR}_{\min} \leftarrow \mathrm{VAR}$, $\mb x^{*} \leftarrow \mb x^{k+1}$
\EndIf
\If{$\mathrm{VAR}_{\min}$ stagnates for $P$ iterations}
\State stop and return $\mb x^\ast$ 
\EndIf
\EndIf
\State $k = k+1$
\EndWhile
\end{algorithmic}
\end{algorithm}
    }
\end{minipage}
\end{wrapfigure}

\paragraph{\textcolor{umn_maroon}{Detecting transition by running variance}}
Our lightweight method only involves computing the VAR curve and numerically detecting its valley---\textbf{the iteration stops once the valley is detected}. To obtain the curve, we set a window size parameter $W$ and compute the windowed moving variance (WMV). To robustly detect the valley, we introduce a patience number $P$ to tolerate up to $P$ consecutive steps of variance stagnation. Obviously, the cost is dominated by the calculation of variance per step, which is $O(WN)$ ($N$ is the size of the visual object). In comparison, a typical gradient update step for solving \cref{eq:dip} costs at least $\Omega(\abs{\mb \theta} N)$, where $\abs{\mb \theta}$ is the number of parameters in the DNN $G_{\mb \theta}$. Since $\abs{\mb \theta}$ is typically much larger than $W$ (default: $100$), our running VAR and detection incur very little computational overhead. Our entire algorithmic pipeline is summarized in \cref{alg:framework}.
\begin{figure*}[!htbp]
    \centering 
    \includegraphics[width=\linewidth]{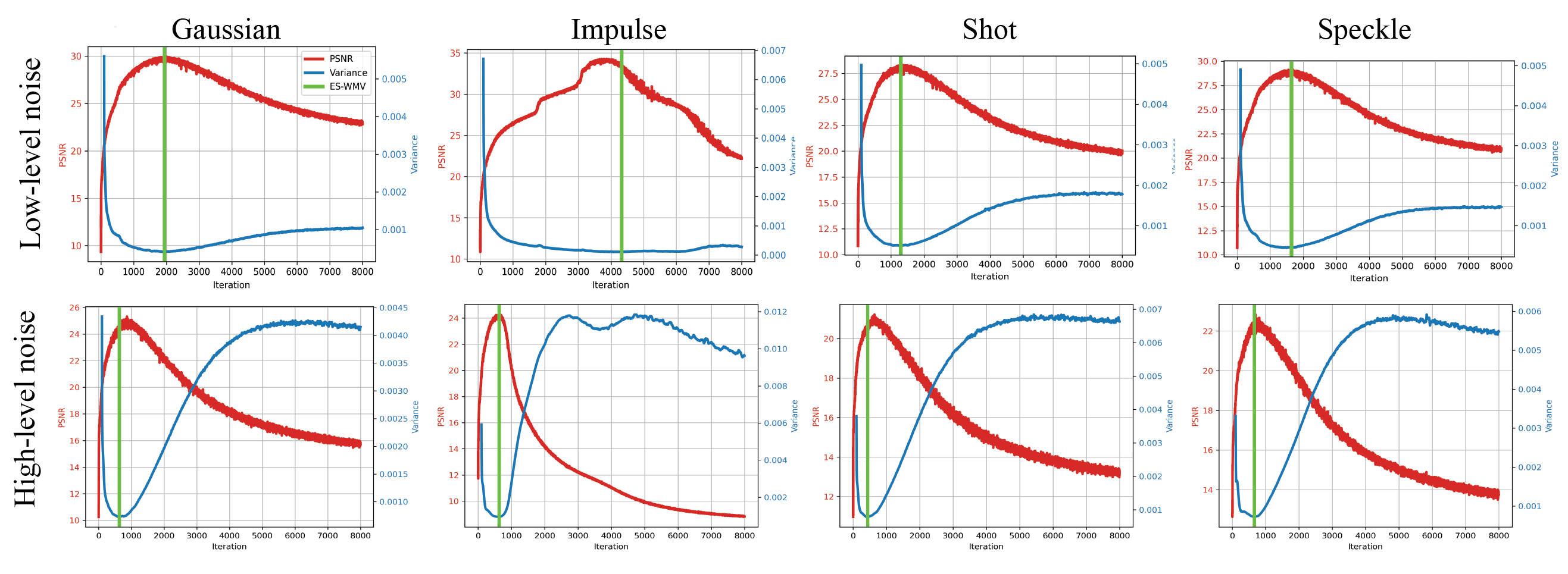}
    \vspace{-1em}
    \caption{Our ES-WMV method on DIP for denoising ``F16" with different noise types and levels (top: low-level noise; bottom: high-level noise). \textcolor{myred}{Red curves} are PSNR curves, and \textcolor{myblue}{blue curves} are VAR curves. The \textcolor{mygreen}{green bars} indicate the detected ES point. }
    \label{fig:denoising_example}
\end{figure*} 
To confirm the effectiveness, we provide qualitative samples in \cref{fig:denoising_example,fig:helper_example}, with more quantitative results included in the experiment part (\cref{sec:expriments}; see also \cref{tab:reall}). \cref{fig:denoising_example} shows that for image denoising with different noise types/levels, our ES method can detect ES points that achieve near-peak performance. Similarly, our method remains effective in several popular DIP variants, as shown in \cref{fig:helper_example}. Note that although our detection for DIP-TV in \cref{fig:helper_example} is a bit far from the peak in terms of iteration count (as the VAR curve is almost flat after the peak), the detection gap is still small ($\sim 1.29\mathrm{dB}$).

\paragraph{\textcolor{umn_maroon}{Seemingly similar ideas}} 
Our running variance and its U-shaped curve are reminiscent of the classical U-shaped bias-variance tradeoff curve and therefore validation-based ES~\citep{geman_neural_1992,yang_rethinking_2020}. But there are crucial differences: (1) our learning setting is not supervised; (2) the variance in supervised learning is with respect to the sample distribution, while our variance here pertains to the $\set{\mb x^t}_{t \ge 1}$ sequence. As discussed in \cref{sec:introduction}, we cannot directly apply validation-based ES, although it is possible to heuristically emulate it by splitting the elements in $\mb y$~\citep{yaman2021zeroshot,ding_validation_2022}---which might be problematic for nonlinear IPs. Another line of related ideas is the detection of variance-based online change points in time series analysis~\citep{AminikhanghahiCook2016survey}, where the running variance is often used to detect shifts in means under the assumption that the means are piecewise constant. Here, the piecewise constancy assumption does not hold for our $\set{\mb x^t}_{t \ge 1}$. 

\begin{figure*}[!htbp]
    \centering 
    \includegraphics[width=\linewidth]{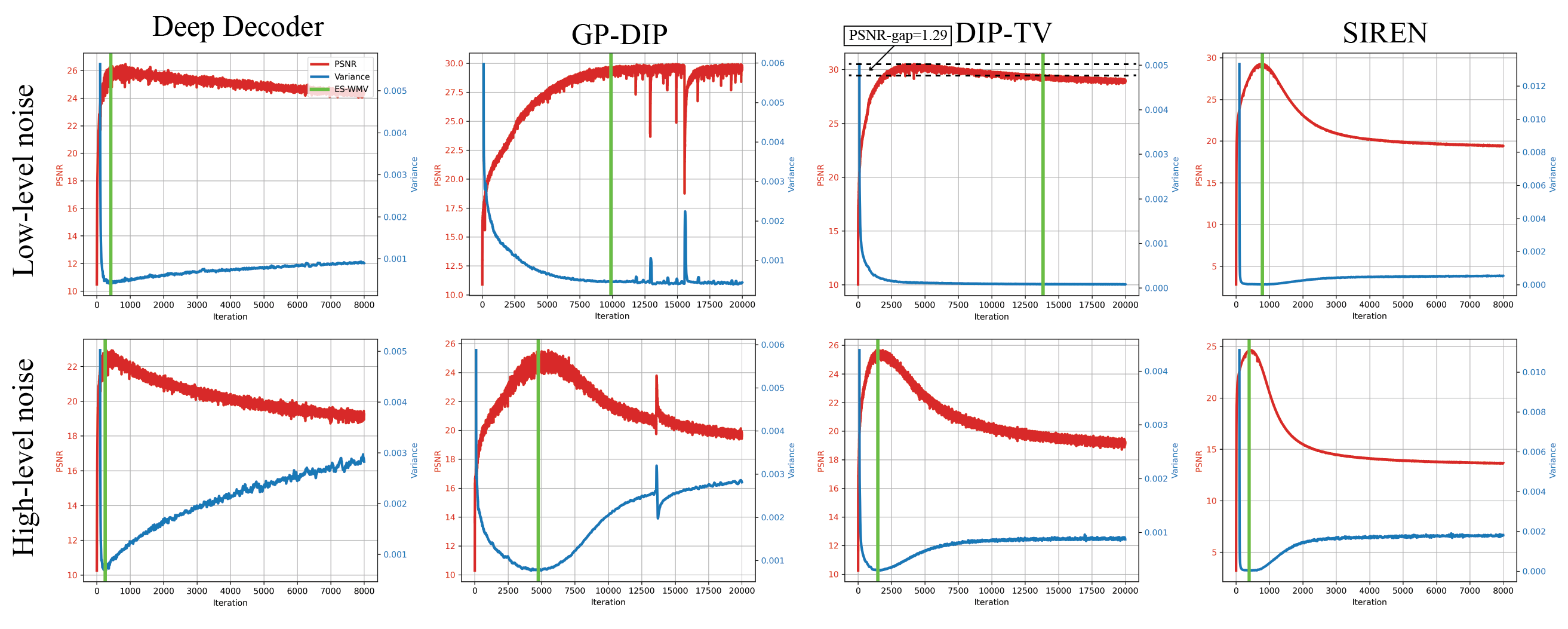}
    \vspace{-1em}
    \caption{ES-WMV on {deep decoder}, GP-DIP, DIP-TV, and SIREN for denoising ``F16'' with different levels of Gaussian noise (top: low-level noise; bottom: high-level noise). \textcolor{myred}{Red curves} are PSNR curves, and \textcolor{myblue}{blue curves} are VAR curves. The \textcolor{mygreen}{green bars} indicate the detected ES point. (We sketch the details of the above DIP variants in \cref{sec:details_variants})}
    \label{fig:helper_example}
\end{figure*}

\paragraph{\textcolor{umn_maroon}{Theoretical justification}}
We can make our heuristic argument in \cref{sec:method_ituit} more rigorous by restricting ourselves to additive denoising, that is, $\mb y = \mb x + \mb n$, {where the noise $\mb n \sim \mc N\left(\mathbf{0}, \xi^{2}/n \cdot \mathbf{I}\right)$,} and appealing to the popular linearization strategy (i.e. neural tangent kernel~\cite{JacotEtAl2018Neural,heckel2020denoising}) in understanding DNN. The idea is based on the assumption that during DNN training $\mb \theta$ does not move much away from initialization $\mb \theta^0$, so that the learning dynamic can be approximated by that of a linearized model, i.e. suppose that we take the MSE loss, 
\begin{equation}  \label{eq:linearized_obj}
    \norm{\mb y - G_{\mb \theta} \paren{\mb z}}_2^2 \approx  
    \\
    \norm{\mb y - G_{\mb \theta^0} \paren{\mb z} - \mb J_G\paren{\mb \theta^0}\paren{\mb \theta - \mb \theta^0}}_2^2 \doteq \wh{f}\paren{\mb \theta}, 
\end{equation}
where $\mb J_G\paren{\mb \theta^0}$ is the Jacobian of $G$ with respect to $\mb \theta$ at $\mb \theta^0$, and 
$G_{\mb \theta^0} \paren{\mb z} + \mb J_G\paren{\mb \theta^0}\paren{\mb \theta - \mb \theta^0}$ is the first-order Taylor approximation to $G_{\mb \theta}(\mb z)$ around $\mb \theta^0$. $\wh{f}\paren{\mb \theta}$ is simply a linear least-squares objective. We can directly calculate the running variance based on the linear model, as shown below.   
\begin{theorem}\label{prop:first_stage}
     Let $\sigma_i$'s and $\mb w_i$'s be the singular values and left singular vectors of $\mb J_G(\mb \theta^0)$, and suppose that we run a gradient descent with step size $\eta$ on the linearized objective $\wh{f}\paren{\mb \theta}$ to obtain $\set{\mb \theta^t}$ and $\set{\mb x^t}$ with $\mb x^t \doteq G_{\mb \theta^0} \paren{\mb z} + \mb J_G(\mb \theta^0)(\mb \theta^t - \mb \theta^0)$. Then, provided that $\eta \le 1/\max_i(\sigma_i^2)$, 
    \begin{align}
        \mathrm{VAR}\paren{t} = \sum_i C_{W, \eta, \sigma_i} \innerprod{\mb w_i}{\wh{\mb y}}^2 \paren{1- \eta \sigma_i^2}^{2t}, 
    \end{align}
    where $\wh{\mb y} = \mb y - G_{\mb \theta^0} (\mb z)$, and $C_{W, \eta, \sigma_i} \ge 0$ depend only on $W$, $\eta$, and $\sigma_i$ for all $i$. 
\end{theorem}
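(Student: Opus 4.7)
The plan is to reduce the claim to a one-line spectral computation by unrolling the linearized gradient descent through the SVD of $\mb J \doteq \mb J_G(\mb \theta^0)$. First I would set $\mb u^t \doteq \mb \theta^t - \mb \theta^0$ so that gradient descent on $\hat f$ obeys the affine recursion $\mb u^{t+1} = (\mb I - \eta \mb J^\T \mb J)\mb u^t + \eta\, \mb J^\T \hat{\mb y}$ with $\mb u^0 = \mb 0$. Writing $\mb J = \sum_i \sigma_i \mb w_i \mb v_i^\T$ and unrolling the resulting geometric series yields $\mb u^t = \sum_{i:\sigma_i > 0}\tfrac{1-(1-\eta\sigma_i^2)^t}{\sigma_i}\innerprod{\mb w_i}{\hat{\mb y}}\,\mb v_i$, and multiplying by $\mb J$ collapses this to the clean closed form
\[
\mb x^t - G_{\mb \theta^0}(\mb z) \;=\; \sum_i \brac{1-(1-\eta\sigma_i^2)^t}\innerprod{\mb w_i}{\hat{\mb y}}\,\mb w_i.
\]

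Second, I would form the window mean $\bar{\mb x}_t \doteq \tfrac{1}{W}\sum_{k=0}^{W-1}\mb x^{t+k}$ and subtract. The constants $G_{\mb \theta^0}(\mb z)$ and the $1$'s cancel, while the residual factor $(1-\eta\sigma_i^2)^{t+k}$ splits as $(1-\eta\sigma_i^2)^t\cdot(1-\eta\sigma_i^2)^k$, giving
\[
\mb x^{t+w} - \bar{\mb x}_t \;=\; \sum_i (1-\eta\sigma_i^2)^t\,\innerprod{\mb w_i}{\hat{\mb y}}\,\beta_{i,w}\,\mb w_i,
\]
where $\beta_{i,w} \doteq \tfrac{1}{W}\sum_{k=0}^{W-1}(1-\eta\sigma_i^2)^k - (1-\eta\sigma_i^2)^w$ depends only on $W$, $\eta$, and $\sigma_i$. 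Taking squared Euclidean norms and using orthonormality of $\set{\mb w_i}$ kills all cross terms, leaving $\norm{\mb x^{t+w}-\bar{\mb x}_t}_2^2 = \sum_i (1-\eta\sigma_i^2)^{2t}\innerprod{\mb w_i}{\hat{\mb y}}^2 \beta_{i,w}^2$. Averaging over $w \in \{0,\dots,W-1\}$ and setting $C_{W,\eta,\sigma_i} \doteq \tfrac{1}{W}\sum_{w=0}^{W-1}\beta_{i,w}^2 \ge 0$ then yields the claimed identity, with nonnegativity immediate.

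There is essentially no hard step here: the entire argument is a deterministic spectral calculation, and no concentration, linearization-error control, or approximation is needed beyond the already-made neural-tangent-kernel reduction. The step-size bound $\eta \le 1/\max_i \sigma_i^2$ is not used for the algebraic identity itself; it merely guarantees $\abs{1-\eta\sigma_i^2}\le 1$, so that the closed-form iterate is bounded and the $(1-\eta\sigma_i^2)^{2t}$ modes actually decay rather than diverge. The only bookkeeping subtlety is to exclude null-space directions of $\mb J$ (those with $\sigma_i = 0$) from the geometric-series inversion—these modes contribute nothing to either $\mb x^t - G_{\mb \theta^0}(\mb z)$ or $\mathrm{DISP}_2^2(t)$—so the argument goes through verbatim even when $\mb J$ is rank-deficient.
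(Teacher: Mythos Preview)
Your derivation of the closed form for $\mb x^t - G_{\mb \theta^0}(\mb z)$ via the SVD of $\mb J$ is the same as the paper's: both unroll the linear recursion to obtain $\mb J\mb c^t = \sum_i\bigl(1-(1-\eta\sigma_i^2)^t\bigr)\innerprod{\mb w_i}{\hat{\mb y}}\,\mb w_i$, then plug into the windowed variance and use orthonormality of the $\mb w_i$'s to decouple the modes.

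Where you genuinely differ is in handling the nonnegativity of $C_{W,\eta,\sigma_i}$. The paper first expands the variance via the identity $\mathrm{VAR} = \tfrac{1}{W}\sum_w\norm{\mb v_w}_2^2 - \norm{\tfrac{1}{W}\sum_w \mb v_w}_2^2$, sums the resulting geometric series explicitly to obtain
\[
C_{W,\eta,\sigma_i} \;=\; \frac{1}{W^2\eta\sigma_i^2}\!\left[W\,\frac{1-(1-\eta\sigma_i^2)^{2W}}{2-\eta\sigma_i^2} \;-\; \frac{\bigl(1-(1-\eta\sigma_i^2)^W\bigr)^2}{\eta\sigma_i^2}\right],
\]
and then proves this expression is nonnegative by introducing an auxiliary function $h(\xi,W)$, checking $\partial_W h \ge 0$, and verifying $h(\xi,1)=0$. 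Your route sidesteps all of this: by centering first and writing the $i$-th coefficient of $\mb x^{t+w}-\bar{\mb x}_t$ as $(1-\eta\sigma_i^2)^t\beta_{i,w}$, you exhibit $C_{W,\eta,\sigma_i} = \tfrac{1}{W}\sum_{w=0}^{W-1}\beta_{i,w}^2$ directly as an average of squares (indeed, the empirical variance of the scalar sequence $\{(1-\eta\sigma_i^2)^w\}_{w=0}^{W-1}$), so nonnegativity is automatic. This is cleaner and more informative; the paper's explicit formula buys a concrete expression for $C_{W,\eta,\sigma_i}$ but at the cost of a less transparent nonnegativity argument. Your remark about null-space directions and the role of the step-size bound is also accurate and not spelled out in the paper.
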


\begin{wrapfigure}{r}{0.35\textwidth}
    \centering
    \vspace{-1em}
    \includegraphics[width=1\linewidth]{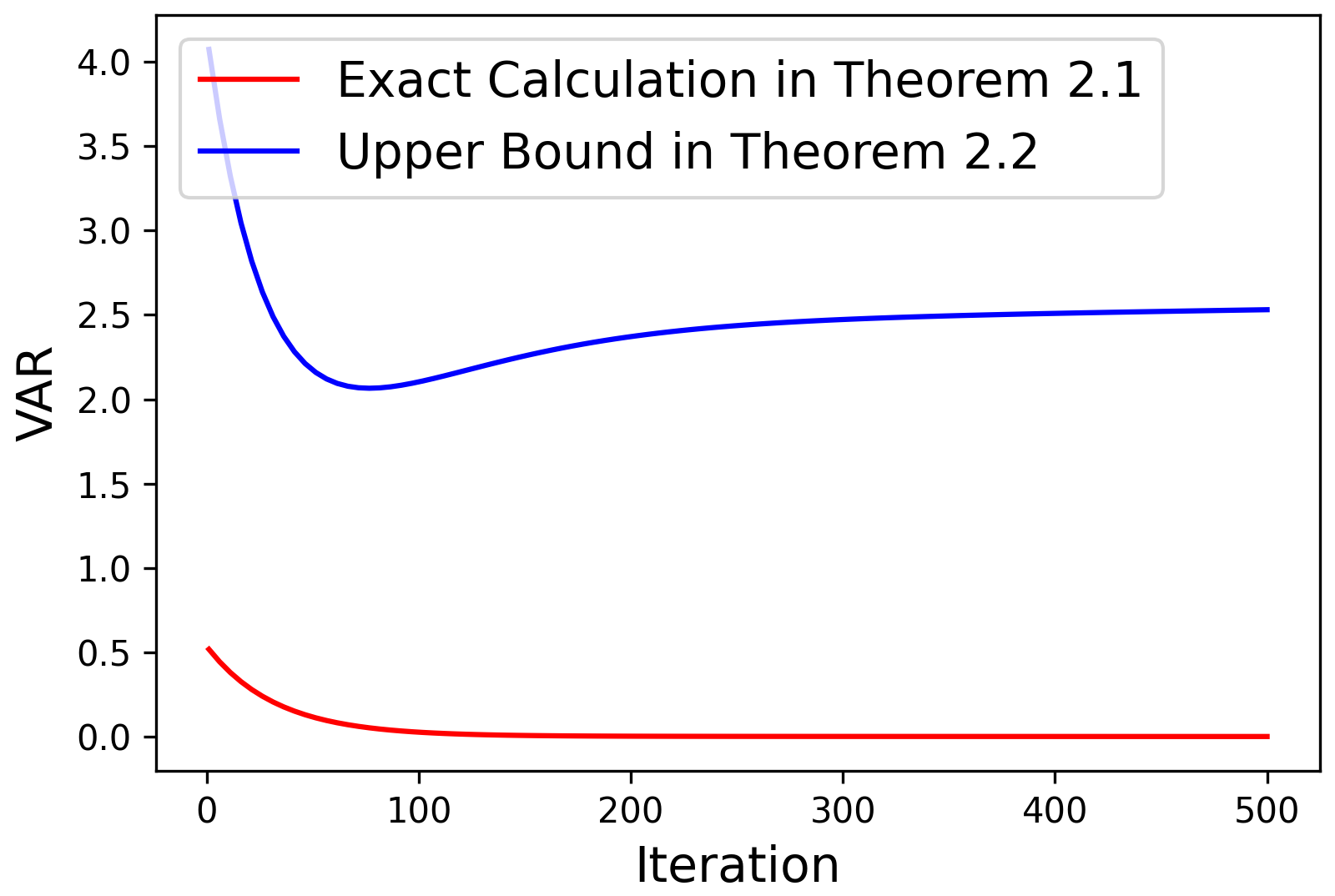}
    \vspace{-2em}
    \caption{The exact and upper bounds predicted by \cref{prop:first_stage,thm:upper_bound}. }
    \label{fig:theo_ex}
\end{wrapfigure}

The proof can be found in \cref{sec:proof_exact_trend}. 
\cref{prop:first_stage} shows that if the learning rate (LR) $\eta$ is sufficiently small, the WMV of $\set{\mb x^t}$ decreases monotonically. 
We can develop a complementary upper bound for the WMV that has a U shape. 
To this end, we make use of Theorem 1 of \cite{heckel2020denoising}, which can be summarized (some technical details omitted; precise statement is reproduced in \cref{sec:proof_upper}) as follows: consider the two-layer model $G_{\mb C} \paren{\mb B} = \operatorname{ReLU}(\mathbf{U B C}) \mathbf{v}$, where $\mb C\in\R^{n\times k}$ models $1\times 1$ trainable convolutions, $\mb v\in\RR^{k\times 1}$ contains fixed weights, $\mb U$ is an upsampling operation, and $\mb B$ is the fixed random seed. Let $\mb J$ be a reference Jacobian matrix solely determined by the upsampling operation $\mb U$, and $\sigma_i$'s and $\mb w_i$'s the singular values and left singular vectors of $\mb J$. Assume that $\mb x \in \mathrm{span}\set{\mb w_1, \dots, \mb w_p}$. Then, when $\eta$ is sufficiently small, with high probability, 
\begin{equation}
\left\|G_{\mb C^t} \paren{\mb B} - \mb x \right\|_{2} \leq\left(1-\eta \sigma_{p}^{2}\right)^t\| \mb x \|_{2}+E(\mb n)+\epss\| \mb y \|_{2}, 
\end{equation}
where $\epss >0$ is a small scalar related to the structure of the network and $E(\mb n)$ is the error introduced by noise: 
$E^2(\mb n)\doteq \sum_{j=1}^{n}((1-\eta \sigma_{j}^{2})^{t}-1)^{2}\langle\mathbf{w}_{j}, \mb n\rangle^{2}$.
So, if the gap $\sigma_p/\sigma_{p+1} > 1$, $\left\|G_{\mb C^t} \paren{\mb B} - \mb x \right\|_{2}$ is dominated by $\left(1-\eta \sigma_{p}^{2}\right)^t\| \mb x \|_{2}$ when $t$ is small and then by $E(\mb n)$ when $t$ is large. However, since the former decreases and the latter increases as $t$ grows, the upper bound has a U shape with respect to $t$. On the basis of this result, we have the following. 
\begin{theorem} \label{thm:upper_bound}
Assume the same setting as Theorem 2 of~\cite{heckel2020denoising}. With high probability, our WMV is upper bounded by 
\begin{equation}
    \frac{12}{W} \norm{\mb x}_2^2 \frac{ \left(1-\eta \sigma_{p}^{2}\right)^{2t}}{1-(1-\eta \sigma_p^2)^2} + \\
    12 \sum_{i=1}^n \paren{\paren{1 - \eta \sigma_i^2}^{t + W-1} - 1}^2 \paren{\mb w_i^\T \mb n}^2 + 12\epss^2 \norm{\mb y}_2^2. 
\end{equation}
\end{theorem}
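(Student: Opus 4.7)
The plan is to reduce the windowed moving variance to a moving average of squared errors $\|\mb x^{t+w} - \mb x\|_2^2$, and then apply the bound from Theorem 1 of~\cite{heckel2020denoising} (restated in the excerpt) term-by-term. The key identity is that the variance about the empirical mean $\bar{\mb x} = \frac{1}{W}\sum_{i=0}^{W-1}\mb x^{t+i}$ can be compared to the variance about the fixed ground truth $\mb x$. Specifically, combining the triangle-like inequality $\|a-b\|_2^2\le 2\|a-c\|_2^2+2\|c-b\|_2^2$ (applied with $c=\mb x$) with Jensen's inequality $\|\mb x-\bar{\mb x}\|_2^2 \le \frac{1}{W}\sum_{i=0}^{W-1}\|\mb x - \mb x^{t+i}\|_2^2$ yields the clean reduction
\begin{align*}
\mathrm{DISP}_2^2(t) \;=\; \frac{1}{W}\sum_{w=0}^{W-1}\|\mb x^{t+w}-\bar{\mb x}\|_2^2 \;\le\; \frac{4}{W}\sum_{w=0}^{W-1}\|\mb x^{t+w}-\mb x\|_2^2.
\end{align*}
This accounts for the factor of $4$ inside the $12 = 4\cdot 3$ in the claimed bound.

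Next I would invoke the Heckel--Soltanolkotabi bound at each iteration index $t+w$ to obtain
\begin{align*}
\|\mb x^{t+w} - \mb x\|_2 \le (1-\eta\sigma_p^2)^{t+w}\|\mb x\|_2 + E_{t+w}(\mb n) + \eps\|\mb y\|_2,
\end{align*}
with $E_{t+w}^2(\mb n) = \sum_{j=1}^n((1-\eta\sigma_j^2)^{t+w}-1)^2\langle \mb w_j,\mb n\rangle^2$. Squaring and applying $(a+b+c)^2\le 3(a^2+b^2+c^2)$ supplies the remaining factor of $3$. I would then sum over $w=0,\dots,W-1$ and handle the three resulting pieces separately. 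For the signal piece, a finite geometric sum gives $\frac{1}{W}\sum_{w=0}^{W-1}(1-\eta\sigma_p^2)^{2(t+w)} \le \frac{(1-\eta\sigma_p^2)^{2t}}{W(1-(1-\eta\sigma_p^2)^2)}$. For the $\eps$-piece the average is just $\eps^2\|\mb y\|_2^2$. For the noise piece, I would use monotonicity in $w$: since $\eta\sigma_j^2\in[0,1]$ under the step-size assumption, $|{(1-\eta\sigma_j^2)^{t+w}-1}|$ is nondecreasing in $w$, so each summand $\frac{1}{W}\sum_{w=0}^{W-1}((1-\eta\sigma_j^2)^{t+w}-1)^2$ is bounded by its $w=W-1$ value, giving $\sum_{j=1}^n((1-\eta\sigma_j^2)^{t+W-1}-1)^2\langle \mb w_j,\mb n\rangle^2$. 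Assembling the three pieces and multiplying by the prefactor $12$ recovers the bound in the theorem.

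The main obstacle, such as it is, is not algebraic but probabilistic: the Heckel bound holds at a \emph{single} iteration with high probability, whereas the statement must hold simultaneously for the $W$ consecutive iterates in the window. I would address this by observing that the randomness in Heckel's result comes from the initialization of $\mb C^0$ (and the fixed seed $\mb B$), not from a per-iteration event, so the inequality holds for every $t$ along a single training trajectory on the same high-probability event; thus no union bound over $w$ is needed, and the ``with high probability'' qualifier transfers directly to our bound. A minor bookkeeping point is verifying the step-size condition $\eta\sigma_j^2\le 1$ used in the monotonicity argument, which is already part of the assumptions in~\cite{heckel2020denoising}.
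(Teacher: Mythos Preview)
Your proposal is correct and follows essentially the same route as the paper: the reduction $\mathrm{DISP}_2^2(t)\le \frac{4}{W}\sum_{w}\|\mb x^{t+w}-\mb x\|_2^2$ via the triangle-type inequality and Jensen, then $(a+b+c)^2\le 3(a^2+b^2+c^2)$ on the Heckel--Soltanolkotabi bound, the geometric sum for the signal term, and the monotonicity-in-$w$ argument for the noise term are exactly the steps the paper takes. Your probabilistic remark is slightly more careful than the paper's own presentation; note that the restated Heckel theorem already asserts the bound \emph{for all} $t\le 100/(\eta\sigma_p^2)$ on a single high-probability event, so no union bound is needed.
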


The exact statement and proof can be found in \cref{sec:proof_upper}. Using a reasoning process similar to that above, we can conclude that the upper bound in \cref{thm:upper_bound} also has a U shape. To interpret the results, \cref{fig:theo_ex} shows the curves (as functions of $t$) predicted by \cref{prop:first_stage,thm:upper_bound}. 
The actual VAR curve should be between the two curves. These results are primitive and limited, similar to the situations for many {deep learning} theories that provide loose upper and lower bounds; we leave a complete theoretical justification for future work.  

\paragraph{\textcolor{umn_maroon}{A memory-efficient variant}}
While \cref{alg:framework} is already lightweight and effective in practice, we can modify it slightly to avoid maintaining $\mc Q$ and therefore saving memory. The trick is to use exponential moving variance (EMV), together with exponential moving average (EMA), shown in \cref{sec:alg2}. The hard window size parameter $W$ is now replaced by the soft forgetting factor $\alpha$: the larger the $\alpha$, the smaller the impact of the history, and hence a smaller effective window. We systematically compare
ES-WMV with ES-EMV in \cref{sec:es_wmv_emv} for image denoising tasks. The latter has slightly better detection due to the strong smoothing effect ($\alpha = 0.1$). For this paper, we prefer to remain simple and leave systematic evaluations of ES-EMV on other IPs for future work.


\section{Experiments}\label{sec:expriments}
{We test ES-WMV for DIP in \textbf{image denoising, inpainting, demosaicing, super-resolution, MRI reconstruction, and blind image deblurring}}, spanning both linear and nonlinear IPs. For image denoising, we also systematically evaluate ES-WMV for main DIP variants, including {deep decoder}~\citep{heckel2018deep}, DIP-TV~\citep{cascarano2021combining}, GP-DIP~\citep{ChengEtAl2019Bayesian}, and demonstrate ES-WMV as a reliable helper to detect good ES points. Details of the DIP variants are discussed in \cref{sec:details_variants}. We also compare ES-WMV with the main competing methods, including DF-STE~\citep{jo2021rethinking}, SV-ES~\citep{Li2021}, DOP~\citep{you2020robust}, SB~\citep{shi2021measuring}, and VAL~\citep{yaman2021zeroshot,ding_validation_2022}. Details of the main ES-based methods can be found in \cref{sec:details_ES_methods}. We use both PSNR and SSIM to assess reconstruction quality and report PSNR and SSIM gaps (the difference between our detected
and peak numbers) as indicators of our detection performance. {\textbf{Common acronyms, pointers to external codes, detailed experiment settings, real-world denoising, image inpainting, and image demosaicing are in \cref{sec:acronyms,sec:external_codes,sec:exp_setting,sec:real_ap,sec:inpainting,sec:raw_img_demosaicing}, respectively.}}

\begin{wrapfigure}{r}{0.55\textwidth}
    \centering
    \vspace{-1em}
    \includegraphics[width=1\linewidth]{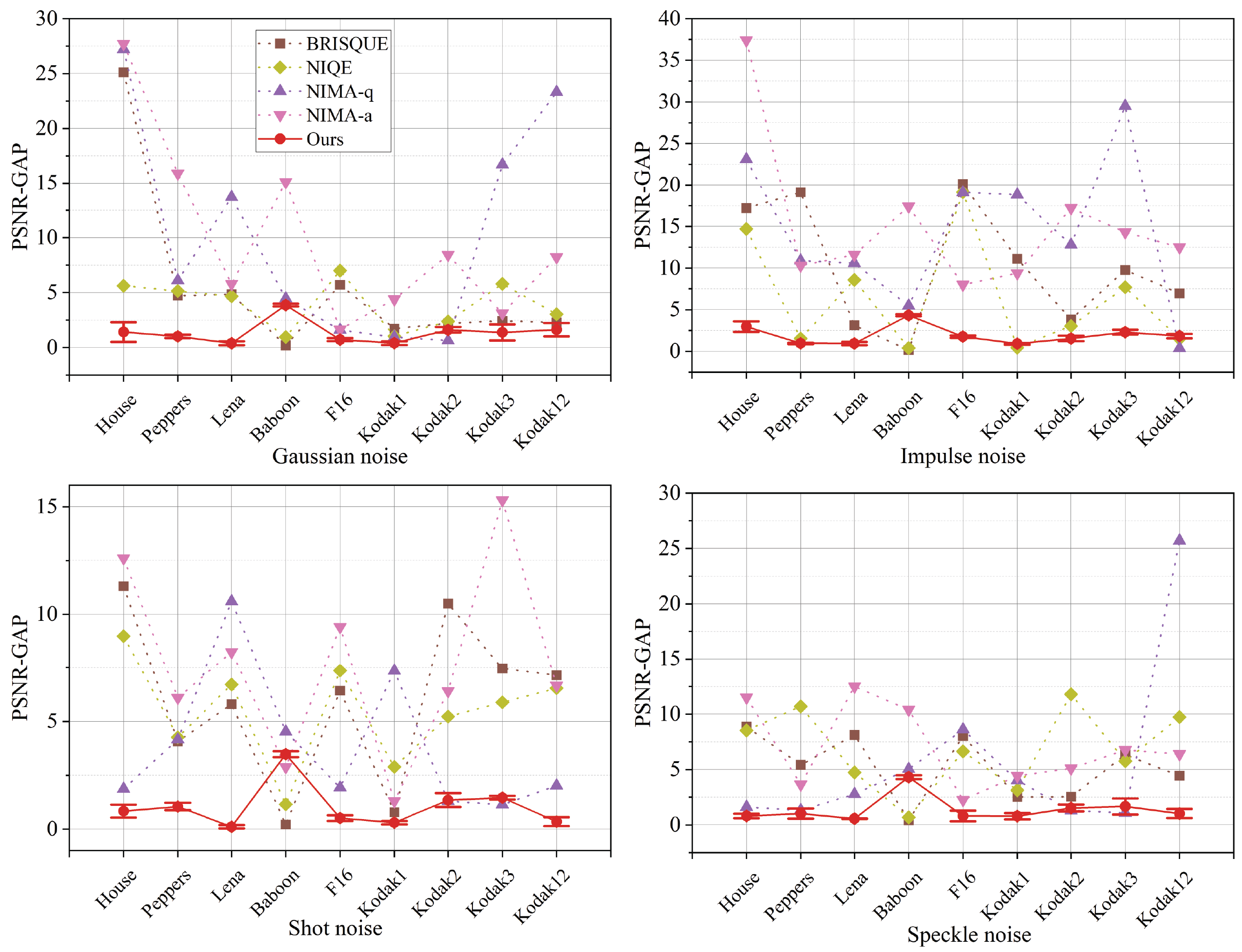}
    \vspace{-1em}
    \caption{Baseline ES vs our ES-WMV on denoising with \textbf{low-level noise}. For NIMA, we report both technical quality assessment (NIMA-q) and aesthetic assessment (NIMA-a). Smaller PSNR gaps are better.}
    \label{fig:baseline_l_psnr}
    \vspace{-1em}
\end{wrapfigure}

\subsection{Image denoising}
\label{sec:exp_ID}

Prior work dealing with DIP overfitting mostly focuses on image denoising and typically only evaluates their methods on one or two kinds of noise with low noise levels, e.g., low-level Gaussian noise. To stretch our evaluation, we consider $4$ types of noise: Gaussian, shot, impulse, and speckle. We take the classical 9-image dataset~\citep{Dabov2007}, and for each noise type, generate two noise levels, low and high, i.e., level 2 and 4 of~\cite{hendrycks2019robustness}, respectively. In \cref{tab:reall} and \cref{sec:real_ap}, we also report the performance of our ES-WMV on real-world denoising evaluated on \textbf{large-scale datasets}. In addition, we also compare DIP-based denoising with a {state-of-the-art} diffusion-model-based denoising in \cref{tab:denoising_dm}.

\paragraph{\textcolor{umn_maroon}{Comparison with baseline ES methods}} \label{sec:baseline}
It is natural to expect that NR-IQMs, such as the classical BRISQUE~\citep{mittal2012no}, NIQE~\citep{mittal2012making}, and modern DNN-based NIMA~\citep{talebi2018nima}, can be used to monitor the quality of intermediate reconstructions and hence induce natural ES criteria.  Therefore, we set $3$ baseline methods using BRISQUE, NIQE, and NIMA, respectively, and seek the optimal $\mb x^t$ using these metrics. \cref{fig:baseline_l_psnr} presents the comparison (in terms of PSNR gaps) of these $3$ methods with our ES-WMV on denoising with low-level noise by using DIP; results on high-level noise and also as measured by SSIM are included in \cref{sec:baselines_ap}. Visual comparisons between our ES-WMV and the baseline methods are shown in \cref{fig:noref_2,fig:noref_1}. While \textbf{our method enjoys favorable detection gaps ($\le 2$)} for most tested noise types/levels (except for Baboon, Kodak1, Kodak2 for certain noise types/levels; DIP itself is suboptimal in terms of denoising such images with substantial high-frequency components), \textbf{the baseline methods can see huge detection gaps up to $10$.}

\begin{figure*}[!htbp]
    \centering 
    \includegraphics[width=0.95\linewidth]{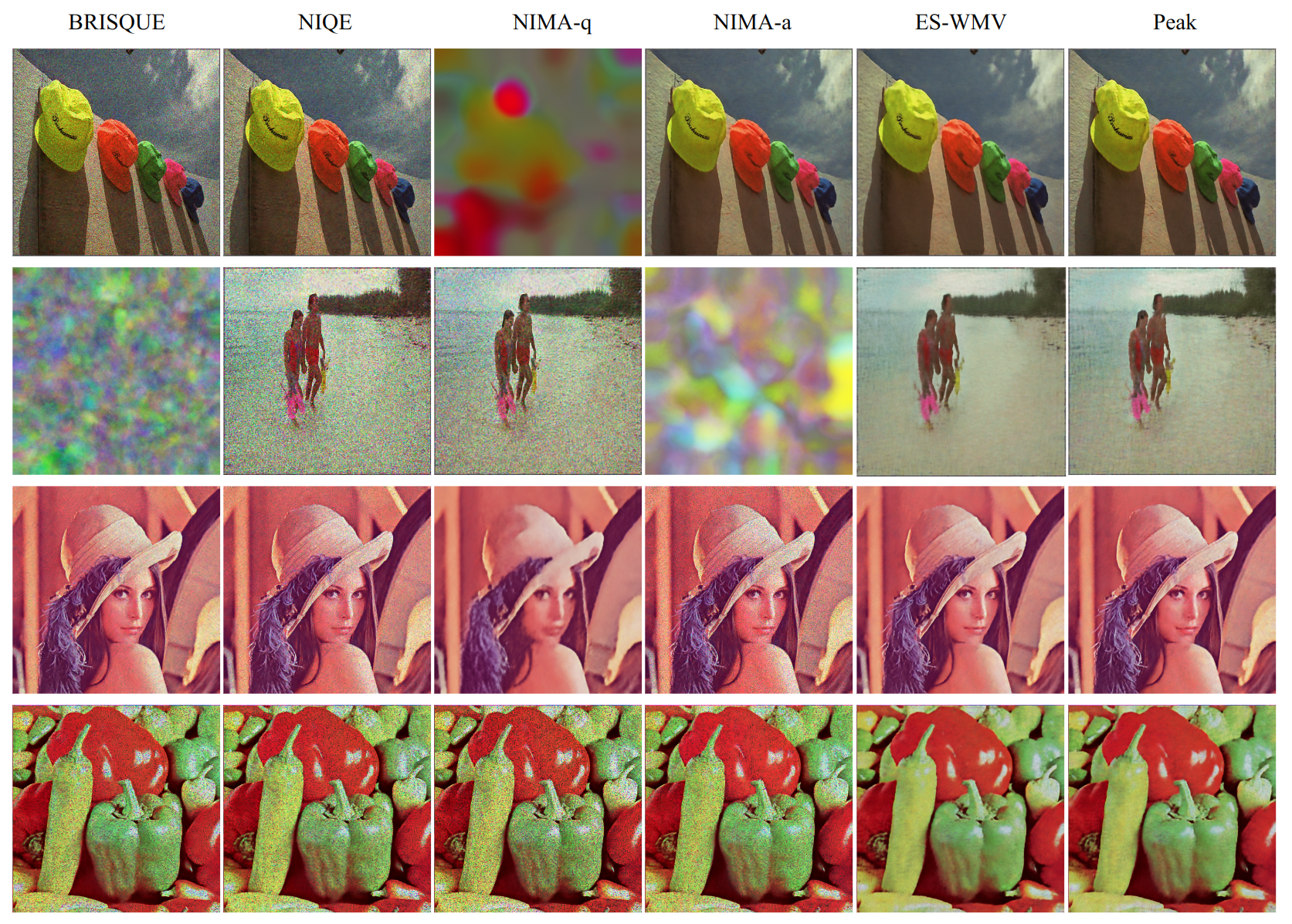}
    \caption{Visual comparisons of the detected images by NR-IQMs and ES-WMV. From top to bottom: shot noise (low), shot noise (high), speckle noise (low), speckle noise (high).}
    \label{fig:noref_2}
    \vspace{-1em}
\end{figure*} 


\paragraph{\textcolor{umn_maroon}{Competing methods}}
\label{sec:exp_competing_methods}
\begin{wrapfigure}{r}{0.55\textwidth}
    \centering
    \vspace{-2em}
    \includegraphics[width=1\linewidth]{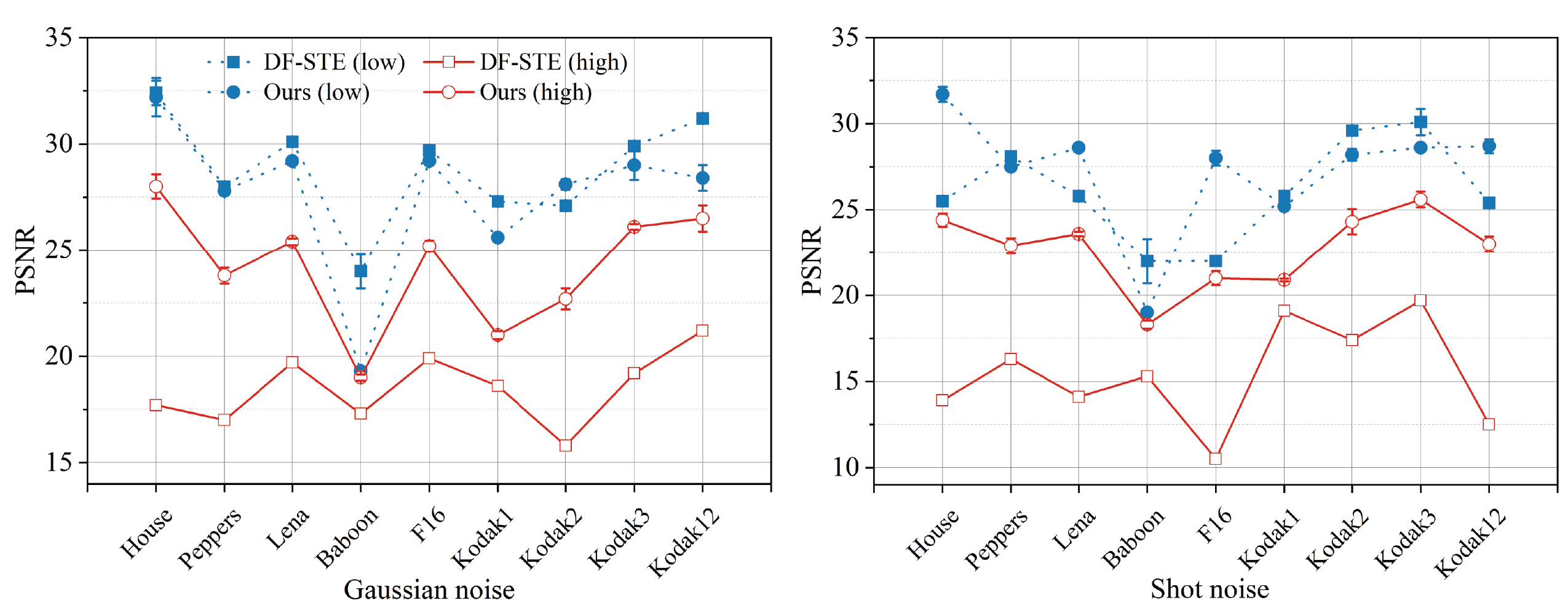}
    \vspace{-2em}
    \caption{Comparison of DF-STE and ES-WMV for Gaussian and shot noise in terms of PSNR.}
    \label{fig:df_ste_psnr}
\end{wrapfigure}
DF-STE~\citep{jo2021rethinking} is specific for Gaussian and Poisson denoising, and noise variance is needed for their tuning parameters. \cref{fig:df_ste_psnr} presents the comparison of our method with DF-STE in terms of PSNR; SSIM results are in \cref{sec:competing_ap}. Here, we directly report the final PSNRs obtained by both methods. For low-level noise, there is no clear winner. \textbf{For high-level noise, ES-WMV outperforms DF-STE by considerable margins.} Although the right variance level is provided to DF-STE in order to tune their regularization parameters, DF-STE stops after only very few epochs, leading to very low performance and almost zero standard deviations---since they return almost the noisy input. However, we do not perform any parameter tuning for ES-WMV. Furthermore, we compare the two methods on the CBSD68 dataset in \cref{sec:competing_ap} that leads to a similar conclusion.

We report the results of SV-ES in \cref{sec:competing_ap} since ES-WMV performs largely comparable to SV-ES. However, ES-WMV is much faster in wall-clock time, as reported in \cref{tab:wall-clock time}: for each epoch, the overhead of our ES-WMV is less than $3/4$ of the DIP update itself, while SV-ES is around $25\times$ of that. 
\begin{wraptable}{r}{0.5\linewidth}
    \centering
    \caption{Wall-clock time (secs) of DIP and three ES methods per epoch on \textit{NVIDIA Tesla K40 GPU}: mean and \scriptsize{(std)}\normalsize. The total wall clock time should contain both DIP and a certain ES method.}
    \label{tab:wall-clock time}
    \setlength{\tabcolsep}{1mm}{
    \begin{tabular}{c c c c c}
    \hline
    &
    \multicolumn{1}{c}{\footnotesize{DIP}} &
    \multicolumn{1}{c}{\footnotesize{SV-ES}}
    &
    \multicolumn{1}{c}{\footnotesize{ES-WMV}} &
    \multicolumn{1}{c}{\footnotesize{ES-EMV}}
    \\
    \hline
    \footnotesize{Time}
    &\scriptsize{0.448} \tiny({0.030})
    & \textbf{\scriptsize{13.027} \tiny({3.872})}
    & \scriptsize{0.301} \tiny({0.016})
    & \textcolor{red}{\scriptsize{0.003}} \tiny({0.003})
    \\
    \hline
    \end{tabular}
    }
    \vspace{-1em}
    \end{wraptable}
There is no surprise: while our method only needs to update the running variance of $\set{\mb x^t}_{t \ge 1}$ each time, \textbf{SV-ES needs to train a coupled autoencoder which is extremely expensive.}

DOP is \textbf{designed specifically just for impulse noise}, so we compare ES-WMV with DOP on impulse noise (see \cref{sec:competing_ap}). The loss is changed to $\ell_1$ to account for the sparse noise. In terms of the final PSNRs, DOP outperforms DIP with ES-WMV by a small gap, but even the peak PSNR of DIP with $\ell_1$ lags behind DOP by about $2$dB for high noise levels. 

\begin{minipage}{\textwidth}
\begin{minipage}{0.45\textwidth}
    \captionof{table}{Comparison between ES-WMV and SB for image denoising on the CBSD68 dataset with varying noise level $\sigma$. The higher PSNR detected and earlier detection are better, which are in \textcolor{red}{red}: mean and \scriptsize{(std)}.}
    \label{tab:sb_table}
    \setlength{\tabcolsep}{0.6mm}{
    \begin{tabular}{c c c c c c c}
    \hline
    &
    \multicolumn{2}{c}{\scriptsize{$\sigma=15$}} &
    \multicolumn{2}{c}{\scriptsize{$\sigma=25$}} &
    \multicolumn{2}{c}{\scriptsize{$\sigma=50$}}
    \\
    \hline
    &
    \multicolumn{1}{c}{\scriptsize{PSNR}} &
    \multicolumn{1}{c}{\scriptsize{Epoch}} &
    \multicolumn{1}{c}{\scriptsize{PSNR}}
    &
    \multicolumn{1}{c}{\scriptsize{Epoch}} &
    \multicolumn{1}{c}{\scriptsize{PSNR}} &
    \multicolumn{1}{c}{\scriptsize{Epoch}}
    \\
    \hline
    \scriptsize{WMV}
    & \scriptsize{28.7}\tiny({3.2})
    & \textcolor{red}{\scriptsize{3962}}\tiny({2506})
    & \textcolor{red}{\scriptsize{27.4}}\tiny({2.6})
    & \textcolor{red}{\scriptsize{3068}}\tiny({2150})
    & \textcolor{red}{\scriptsize{24.2}}\tiny({2.3})
    & \textcolor{red}{\scriptsize{1548}}\tiny({1939})
    \\
    
    \scriptsize{SB}
    & \textcolor{red}{\scriptsize{29.0}}\tiny({3.1})
    & \scriptsize{4908}\tiny({1757})
    & \scriptsize{27.3}\tiny({2.2})
    & \scriptsize{5099}\tiny({1776})
    & \scriptsize{23.0}\tiny({1.0})
    & \scriptsize{5765}\tiny({1346})
    \\
    \hline
    \end{tabular}
    \vspace{-1em}
    }
\end{minipage}
\hspace{0.05\textwidth}
\begin{minipage}{0.47\textwidth}
    \includegraphics[width=1\linewidth]{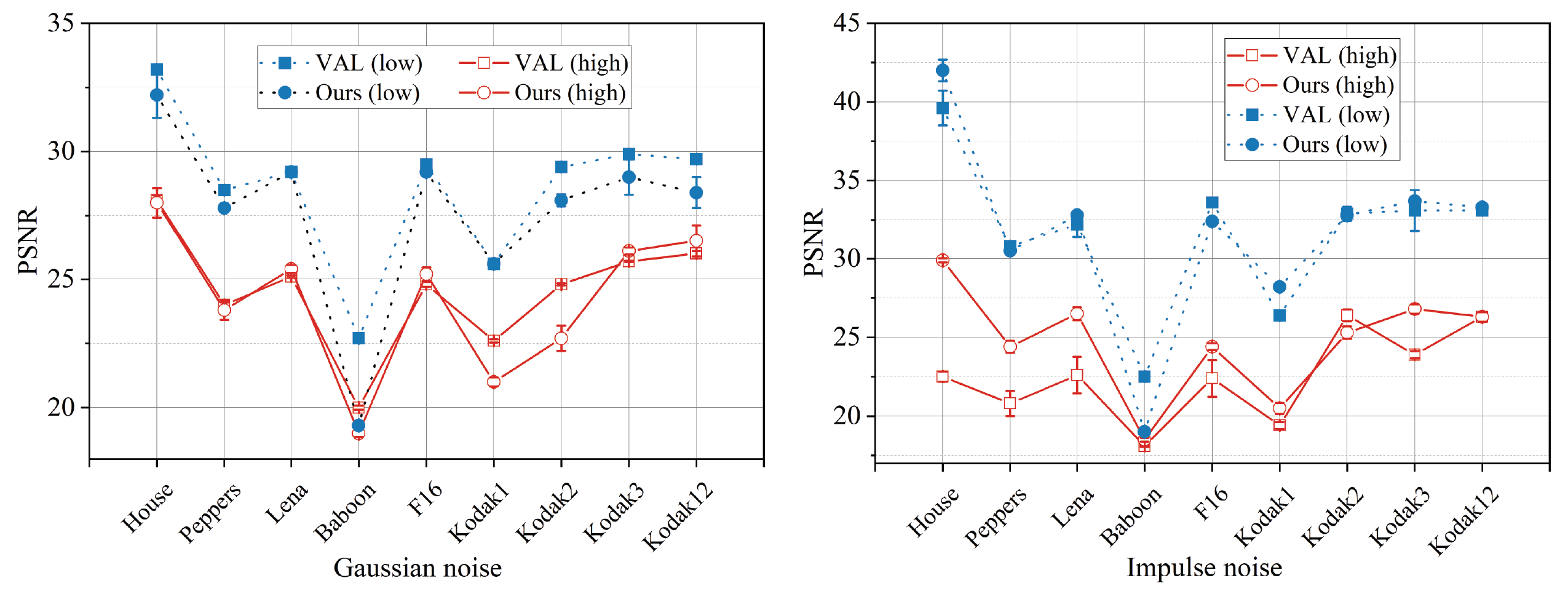}
    \vspace{-2em}
    \captionof{figure}{Comparison of VAL and ES-WMV for Gaussian and impulse noise in terms of PSNR.}
    \vspace{-1em}
    \label{fig:val_psnr}
\end{minipage}
\end{minipage}
\vspace{2em}
 
\begin{wrapfigure}{r}{0.55\textwidth}
    \vspace{-1em}
    \centering
    \includegraphics[width=1\linewidth]{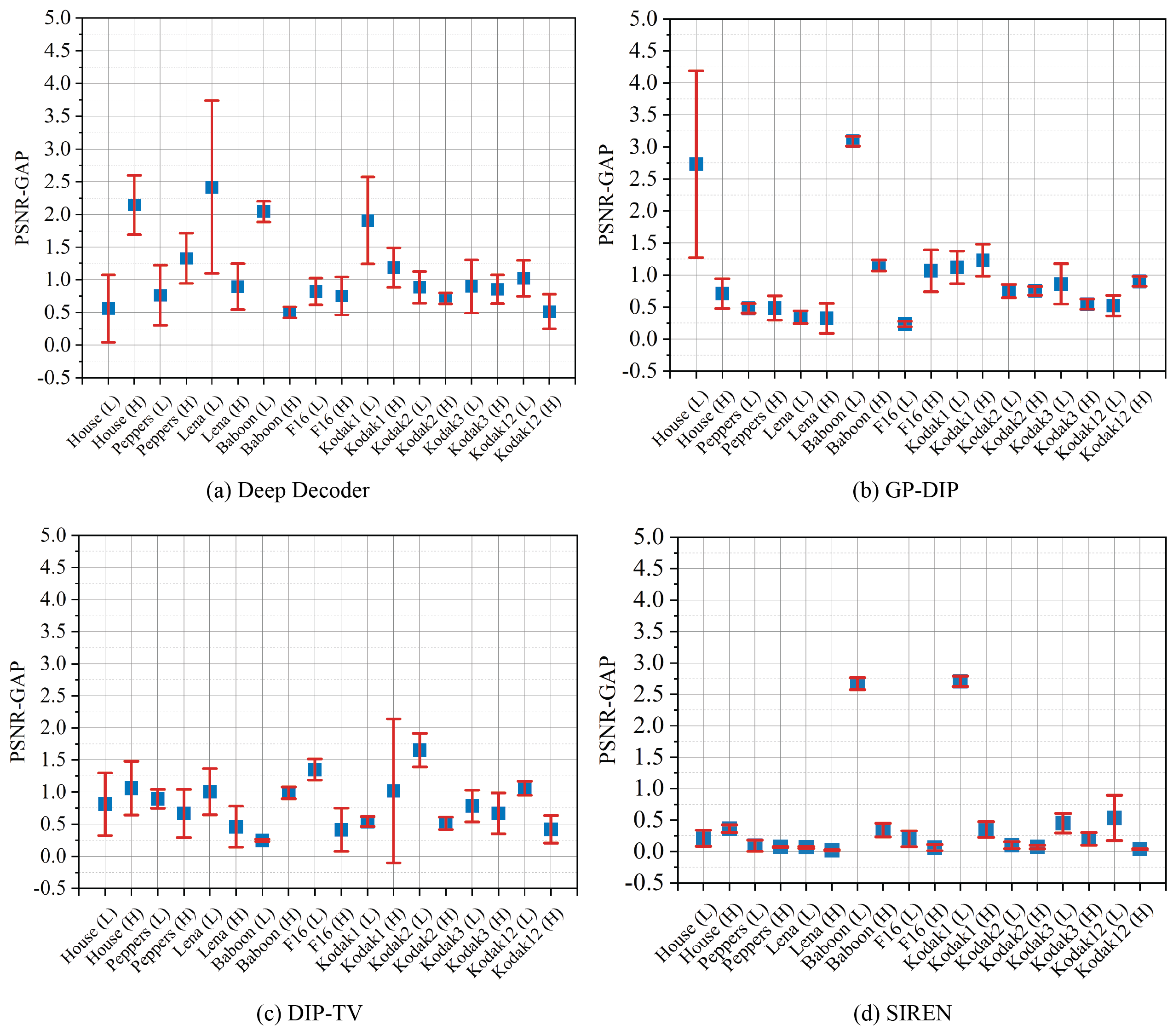}
    \vspace{-2em}
    \caption{Performance of ES-WMV on {deep decoder}, GP-DIP, DIP-TV, and SIREN for Gaussian denoising in terms of PSNR gaps. L: low noise level; H: high noise level.}
    \vspace{-1em}
    \label{fig:dd_gp_tv}
\end{wrapfigure}
\textbf{The ES method in SB is acknowledged by its authors to fail for vanilla DIP~\citep{shi2021measuring}}. Moreover, their modified model still suffers from the overfitting issue beyond very low noise levels, as shown in \cref{fig:SB_compare}. Their ES method fails to stop at appropriate places when the noise level is high. Hence, we test both ES-WMV and SB on their modified DIP model in~\citep{shi2021measuring}, based on the two datasets they test: the classic $9$-image dataset~\citep{Dabov2007} and the CBSD68 dataset~\citep{MartinEtAl2001database}. 
Qualitative results on $9$ images are shown in \cref{sec:competing_ap}; detected PSNR and stop epochs on the CBSD68 dataset are reported in \cref{tab:sb_table}. For SB, the detection threshold parameter is set to $0.01$. It is evident that both methods have similar detection performance for low noise levels, but ES-WMV outperforms SB when the noise level is high. Also, ES-WMV tends to stop much earlier than SB, saving computational cost. 

We compare VAL with our ES-WMV on the $9$-image dataset with low-/high-level Gaussian and impulse noise. Since \cite{ding_validation_2022} takes $90\%$ pixels to train DIP and this usually decreases the peak performance, we report the final PSNRs detected by both methods (see \cref{fig:val_psnr}). The two ES methods \textbf{perform very comparably in image denoising}, probably due to a mild violation of the i.i.d. assumption only, and also to a relatively low degree of information loss due to data splitting. \textbf{The more complex nonlinear BID in \cref{sec:bid_exp} reveals their gap.}


\paragraph{\textcolor{umn_maroon}{ES-WMV as a helper for DIP variants}}
\label{sec:helper_methods}
{Deep decoder}, DIP-TV, and GP-DIP represent different regularization strategies to control overfitting. However, a critical issue is setting the right hyperparameters for them so that overfitting is removed while peak-level performance is preserved. Therefore, practically, these methods are not free from overfitting, especially when the noise level is high. Thus, instead of treating them as competitors, we test whether ES-WMV can reliably detect good ES points for them. We focus on Gaussian denoising and report the results in \cref{fig:dd_gp_tv} (a)-(c) and \cref{sec:helper_ap}. \textbf{ES-WMV is able to attain $\le 1$ PNSR gap for most cases}, with few outliers; we provide a detailed analysis about some of the outliers in \cref{sec:sup_failure}. 

\paragraph{\textcolor{umn_maroon}{ES-WMV as a helper for implicit neural representations (INRs)}} 
INRs, such as \cite{tancik_fourier_2020} and \cite{sitzmann2020implicit}, use multilayer perceptrons to represent highly nonlinear functions in low-dimensional problem domains and have achieved superior results in complex 3D visual tasks. We further extend our ES-WMV to help the INR family and take SIREN~\citep{sitzmann2020implicit} as an example. SIREN parameterizes $\mb x$ as the discretization of a continuous function: this function takes in spatial coordinates and returns the corresponding function values. 
Here, we test SIREN, which is reviewed in \cref{sec:details_variants}, as a replacement for DIP models for Gaussian denoising and summarize the results in \cref{fig:dd_gp_tv} and \cref{fig:dd_gp_tv_ssim}. \textbf{ES-WMV is again able to detect near-peak performance for most images.}

\subsection{Image Super-Resolution} 
\label{sec:super_resolution}

\begin{wraptable}{r}{0.6\textwidth}
\centering 
\vspace{-1em}
\caption{Comparison of ES-WMV for DIP and DDNM+~\cite{wang_zero-shot_2022} for \textbf{\boldsymbol{$2 \times$} image super-resolution with low-level Gaussian and impulse noise}: mean and \scriptsize{(std)}. \normalsize{The highest PSNR and SSIM for each task are in \textcolor{red}{red}}. In particular, we set the best hyperparameter for {DDNM+} ($\sigma_y=0.12$), \textbf{which is unfair for the DIP + ES-WMV combination as we fix its hyperparameter setting}.}
\label{tab:sr_2}
\setlength{\tabcolsep}{1.2mm}{
\begin{tabular}{c c c c c}
    \hline
    & \multicolumn{2}{c}{\footnotesize{PSNR}}                            & \multicolumn{2}{c}{\footnotesize{SSIM}}                            \\ \hline
    & \multicolumn{1}{c}{\footnotesize{Gaussian}} &\footnotesize{Impulse} & \multicolumn{1}{c}{\footnotesize{Gaussian}} & \footnotesize{Impulse}
    \\ \hline
\footnotesize{DIP (peak)}           & \multicolumn{1}{c}{\scriptsize{22.88} \tiny{(1.58)}}            & \textcolor{red}{\scriptsize{28.28}} \tiny{(2.73)}           & \multicolumn{1}{c}{\scriptsize{0.61} \tiny{(0.09)}}             & \textcolor{red}{\scriptsize{0.88}} \tiny{(0.06)}            \\ 
\footnotesize{DIP + ES-WMV} & \multicolumn{1}{c}{\scriptsize{22.11} \tiny{(1.90)}}             & \scriptsize{26.77} \tiny{(3.76)}            & \multicolumn{1}{c}{\scriptsize{0.54} \tiny{(0.11)}}             & \scriptsize{0.86} \tiny{(0.06)}            \\ 
\footnotesize{{DDNM+} ($\sigma_y=.12$)}  & \multicolumn{1}{c}{\textcolor{red}{\scriptsize{25.37}} \tiny{(2.00)}}            & \scriptsize{18.50} \tiny{(0.68)}           & \multicolumn{1}{c}{\textcolor{red}{\scriptsize{0.74}} \tiny{(0.11)}}             & \scriptsize{0.50} \tiny{(0.08)}            \\ 
\footnotesize{{DDNM+} ($\sigma_y=.00$)}         & \multicolumn{1}{c}{\scriptsize{16.91} \tiny{(0.42)}}            & \scriptsize{16.59} \tiny{(0.34)}           & \multicolumn{1}{c}{\scriptsize{0.31} \tiny{(0.09)}}             & \scriptsize{0.49} \tiny{(0.06)}            \\ \hline
\end{tabular}
}
\end{wraptable}

In this task, we try to recover a clean image $\mb x_0$ from a noisy downsampled version $\mb y = \mc D_t \paren{\mb x_0} + \mb \eps$, where $\mc D_t \paren{\cdot}: [0, 1]^{3 \times tH \times tW} \rightarrow [0, 1]^{3 \times H \times W}$ is a downsampling operator that resizes an image by the factor $t$ and $\mb \eps$ models extra additive noise. We consider the following DIP-reparametrized formulation 
$
    \min_{\mb \theta} \; \ell(\mb \theta) \doteq \norm{ \mc D_t \paren{G_{\mb \theta}\paren{\mb z}} - \mb y}_F^2 
$,
where $G_{\mb \theta}$ is a trainable DNN parameterized by $\mb \theta$ and $\mb z$ is a frozen random seed. Then we conduct experiments for $2\times$ super-resolution with low-level Gaussian and impulse noise. We test our ES-WMV for DIP and a state-of-the-art zero-shot method based on pre-trained diffusion model---DDNM+~\cite{wang_zero-shot_2022} on the standard super-resolution dataset Set14~\cite{boissonnat_single_2012}, as shown in \cref{tab:sr_2,fig:sr_2,sec:appendix_sr}. We note that \textbf{DDNM+ relies on pre-trained models from large external training datasets, while DIP does not.} We observe that (1) \textbf{Our ES-WMV is again able to detect near-peak performance for most images}: the average PSNR gap is $\leq 1.50$ and the average SSIM gap is $\leq 0.07$; (2) DDNM+ is sensitive to the noise type and level: from \cref{tab:sr_2}, DDNM+ trained assuming Gaussian noise level $\sigma_y=0.12$ outperforms DIP and DIP+ES-WMV when there is Gaussian measurement noise at the level $\sigma_y=0.12$, \textbf{which is unrealistic in practice, as the noise level is often unknown beforehand}. When the noise level is not set correctly, e.g., as $\sigma_y=0$ in the {DDNM+} ($\sigma_y=.00$) row of \cref{tab:sr_2}, the performance of DDNM+ is much worse than that of DIP and DIP+ES-WMV. Also, for super-resolution with impulse noise, DIP is also a clear winner that leads {DDNM+} by a large margin; 
and (3) in \cref{sec:diffusion}, we show that DDNM+ may also suffer from the overfitting issue and our ES-WMV can help DDNM+ to stop around the performance peak as well.

\begin{figure*}[!htbp]
    \centering 
    \includegraphics[width=0.95\linewidth]{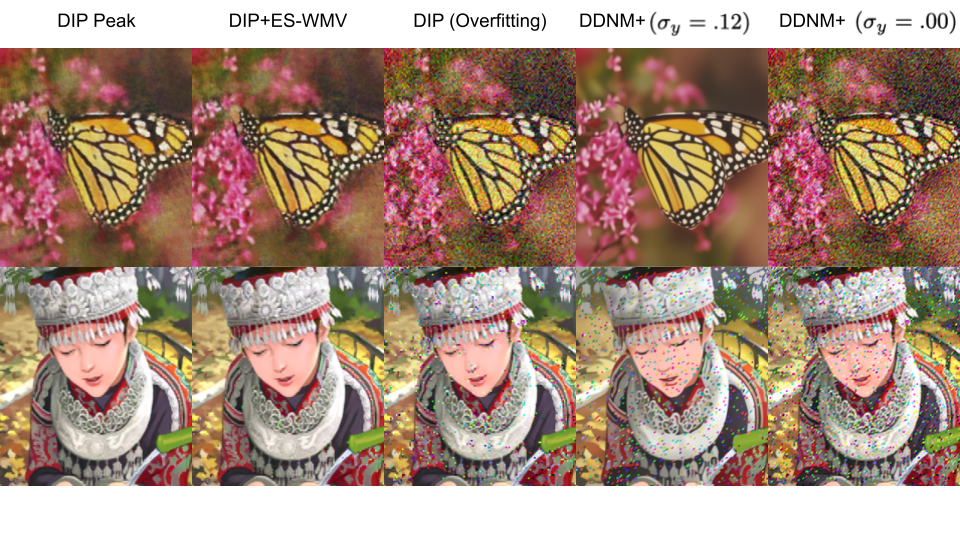}
    \vspace{-2em}
    \caption{Visual comparisons for $2 \times$ image super-resolution task with additional noise. Top row: with additional low-level Gaussian noise; Bottom row: with additional low-level impulse noise.}
    \label{fig:sr_2}
\end{figure*} 

\subsection{MRI reconstruction}
\begin{wraptable}{r}{0.45\linewidth}
    \centering
    \vspace{-1em}
    \caption{ConvDecoder on MRI reconstruction for \textbf{30 cases}: mean and \scriptsize{(std)}. \footnotesize{(\textbf{D}: Detected)}}
    \label{tab:mri_30}
    \setlength{\tabcolsep}{1mm}{
    \begin{tabular}{c c c c}
    \hline
    
    \multicolumn{1}{c}{\footnotesize{PSNR(\textbf{D})}} &
    \multicolumn{1}{c}{\footnotesize{PSNR Gap}}
    &
    \multicolumn{1}{c}{\footnotesize{SSIM(\textbf{D})}} &
    \multicolumn{1}{c}{\footnotesize{SSIM Gap}}
    \\
    \hline
    \scriptsize{32.63} \tiny({2.36})
    & \textcolor{red}{\scriptsize{0.23}} \tiny({0.32})
    & \scriptsize{0.81} \tiny({0.09})
    & \textcolor{red}{\scriptsize{0.01}} \tiny({0.01})
    \\
    \hline
    \end{tabular}
    }
\end{wraptable}
We further test ES-WMV on MRI reconstruction, a classical linear IP with a nontrivial forward mapping: $\mb y  \approx \mc F \paren{x}$, where $\mc F$ is the subsampled Fourier operator, and we use $\approx$ to indicate that the noise encountered in practical MRI imaging may be hybrid (e.g., additive, shot) and uncertain. Here, we take the $8$-fold undersampling and parameterize $\mb x$ using ``Conv-Decoder''~\citep{darestani2021accelerated}, a variant of {deep decoder}. Due to the heavy over-parameterization, overfitting occurs and ES is needed. \cite{darestani2021accelerated} directly sets the stopping point at the $2500$-th epoch, and we run our ES-WMV. We visualize the performance on two random cases (C1: $1001339$ and C2: $1000190$ sampled from \cite{darestani2021accelerated}, part of the fastMRI datatset~\citep{ZbontarEtAl2018fastMRI}) in \cref{fig:mri_curve} (quality measured in SSIM, consistent with~\cite{darestani2021accelerated}). It is clear that ES-WMV detects near-peak performance for both cases and is adaptive enough to yield comparable or better ES points than heuristically fixed ES points. Furthermore, we test our ES-WMV on ConvDecoder for \textbf{30 cases} from the fastMRI dataset (see \cref{tab:mri_30}), which \textbf{shows the precise and stable detection of ES-WMV}.

\subsection{Blind image deblurring (BID)}
\label{sec:bid_exp}

\begin{wrapfigure}{r}{0.5\textwidth}
    \centering
   \vspace{-2em}
    \includegraphics[width=\linewidth]{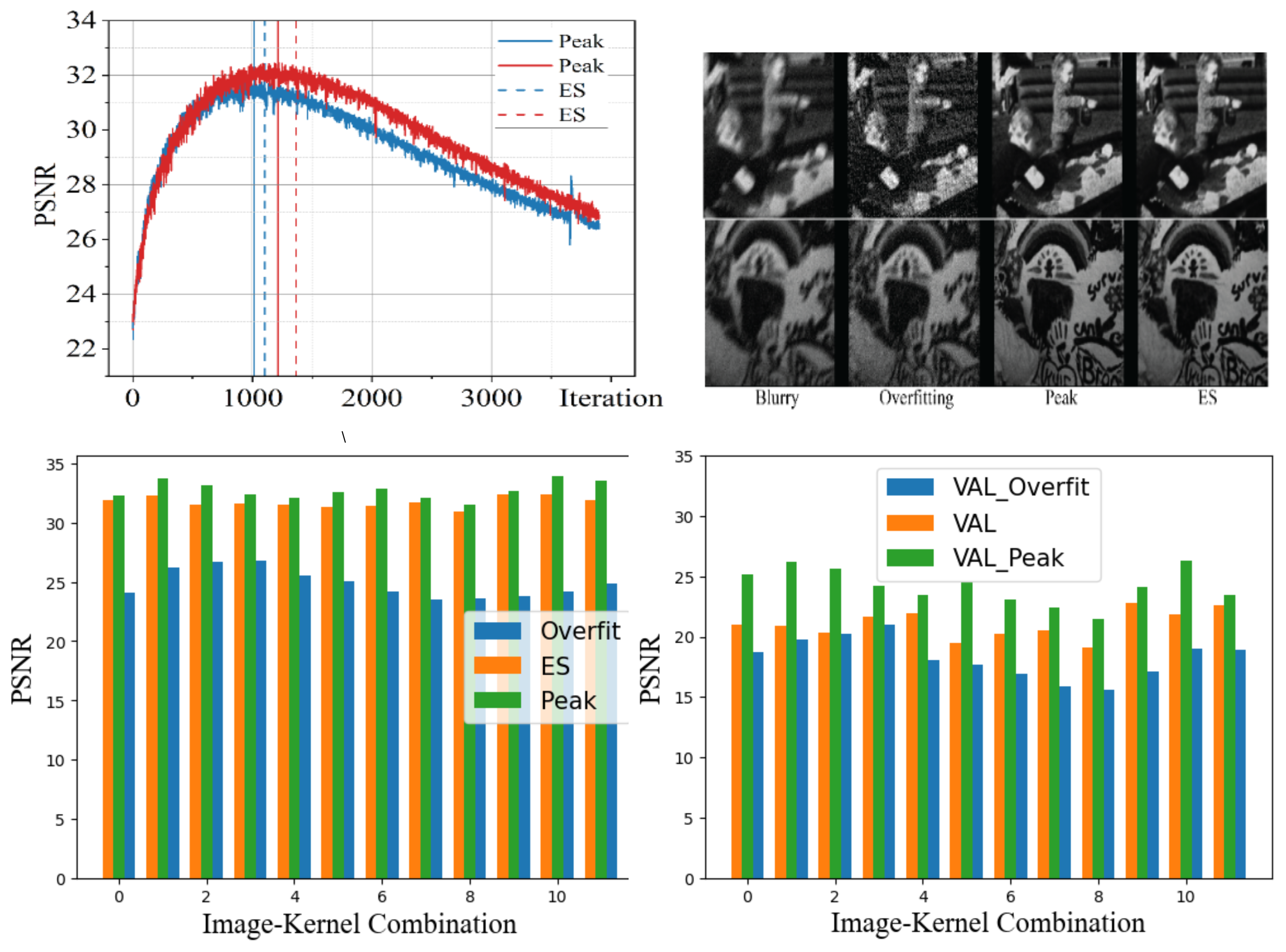}
    \vspace{-2em}
    \caption{Top left: ES-WMV in BID; top right: visual results of ES-WMV; bottom: quantitative results of ES-WMV and VAL, respectively.}
    \vspace{-1em}
    \label{fig:BD}
\end{wrapfigure}

In BID, a blurry and noisy image is given, and the goal is to recover a sharp and clean image. The blur is mostly caused by motion and/or optical non-ideality in the camera, and the forward process is often modeled as $\mb y = \mb k \ast \mb x + \mb n$, 
where $\mb k$ is the blur kernel, $\mb n$ models additive sensory noise, and $\ast$ is linear convolution to model the spatial uniformity of the blur effect~\citep{Szeliski2021Computer}. BID is a very challenging visual IP due to bilinearity: $\paren{\mb k, \mb x} \mapsto \mb k \ast \mb x$. Recently, \cite{Ren_2020_CVPR,wang2019image,asim2019blind,TranEtAl2021Explore} have tried to use DIP models to solve BID by modeling $\mb k$ and $\mb x$ as two separate DNNs, i.e., 
$
    \min_{\mb \theta_k, \mb \theta_x} \norm{\mb y - G_{\mb \theta_k}(\mb z_k) \ast G_{\mb \theta_x} (\mb z_x) }_2^2 + \lambda \norm{\nabla G_{\mb \theta_x} (\mb z_x)}_1/\norm{\nabla G_{\mb \theta_x} (\mb z_x)}_2
$, 
where the regularizer~\cite{li_robust_2023} is to promote sparsity in the gradient domain for the reconstruction of $\mb x$, as standard in BID. We follow~\cite{Ren_2020_CVPR} and choose multilayer perceptron (MLP) with softmax activation for $G_{\mb \theta_k}$, and the canonical DIP model (CNN-based encoder-decoder architecture) for $G_{\mb \theta_x} (\mb z_x)$. We change their regularizer from the original $\norm{\nabla G_{\mb \theta_x} (\mb z_x)}_1$ to the current, as their original formulation is tested only at a very low noise level $\sigma = 10^{-5}$ and no overfitting is observed. We set the test with a higher noise level $\sigma = 10^{-3}$, and find that its original formulation does not work. The benefit of the modified regularizer on BID is discussed in~\cite{krishnan2011blind}. 

\begin{wrapfigure}{r}{0.55\linewidth}
    \centering
    \vspace{-1em}
    \includegraphics[width=\linewidth]{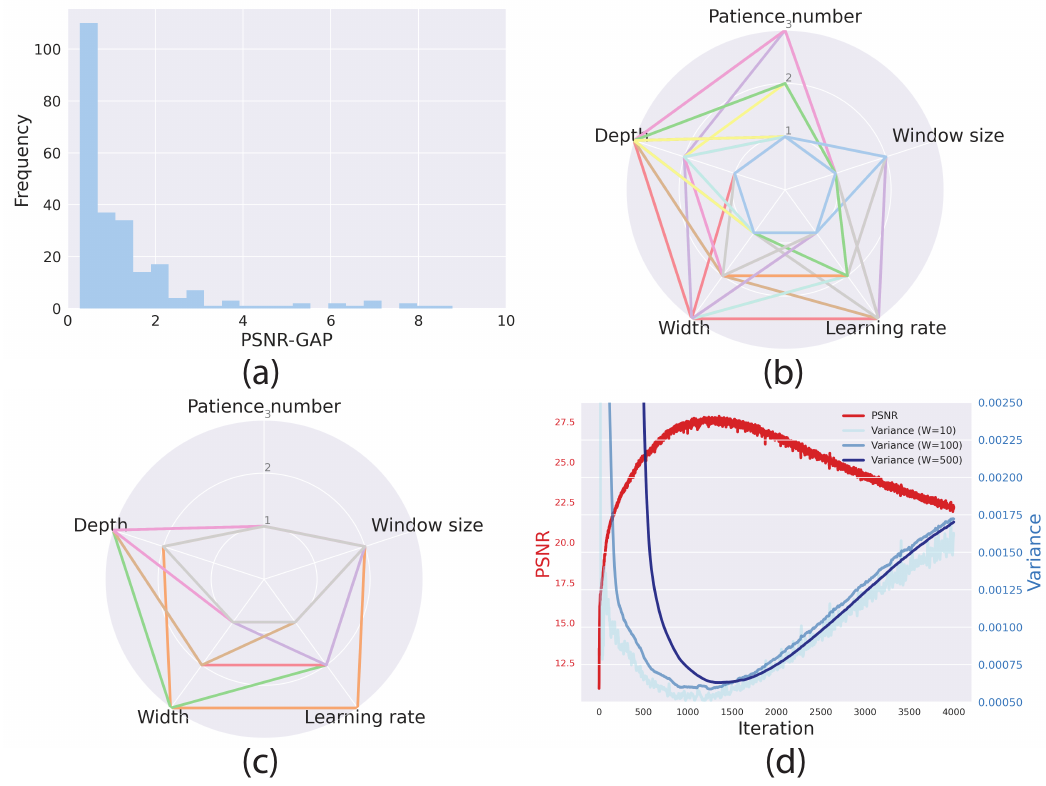}
    \caption{(a) Histogram of PSNR gaps for $243$ different hyperparamter combinations; (b) radar plot of the cases whose PSNR gaps are larger than $2$ dB; (c) radar plot of the cases whose PSNR gaps are larger than $2$ dB and window size = $100$; (d) an example case to visualize VAR curves with different window sizes. (For radar plots, there are three grid values for each hyperparameter. The farther away from the center, the higher the value.)}
    \vspace{-3em} 
    \label{fig:ab_1}
\end{wrapfigure}

First, we take $4$ images and $3$ kernels from the standard Levin dataset~\citep{levin2011understanding}, resulting in $12$ image-kernel combinations. The high noise level leads to substantial overfitting, as shown in \cref{fig:BD} (top left). However, ES-WMV can reliably detect good ES points and lead to impressive visual reconstructions (see \cref{fig:BD} (top right)). We systematically compare VAL and our ES-WMV on this difficult nonlinear IP, as we suspect that nonlinearity can break down VAL as discussed in \cref{sec:introduction}, and subsampling the observation $\mb y$ for training-validation splitting may be unwise. Our results (\cref{fig:BD} (bottom left/right)) confirm these predictions: \textbf{the peak performance detected by VAL is much worse after $10\%$ of the elements in $\mb y$ are removed for validation}. In contrast, our ES-WMV returns quantitatively near-peak performance, much better than leaving the process to overfit. In \cref{tab:level_bid}, we further test both low- and high-level noise on the entire Levin dataset for completeness.

\subsection{{Ablation study}}
The window size $W$ (default: $100$) and the patience number $P$ (default: $1000$) are the only hyperparameters for our ES-WMV. Moreover, in this abalation study, we also include the key DIP hyperparameters, which obviously can also affect our ES performance---in our experiments above, we have used the default published DIP hyperparameters for each IP, as our ES method works under the condition that DIP performs reasonably well for the IP under consideration. To this end, we select the learning rate, which typically determines the learning pace and peak performance of DIP, and the depth/width of the network, which rules the network capacity. 

\begin{figure*}[!b]
    \centering 
    \includegraphics[width=0.9\linewidth]{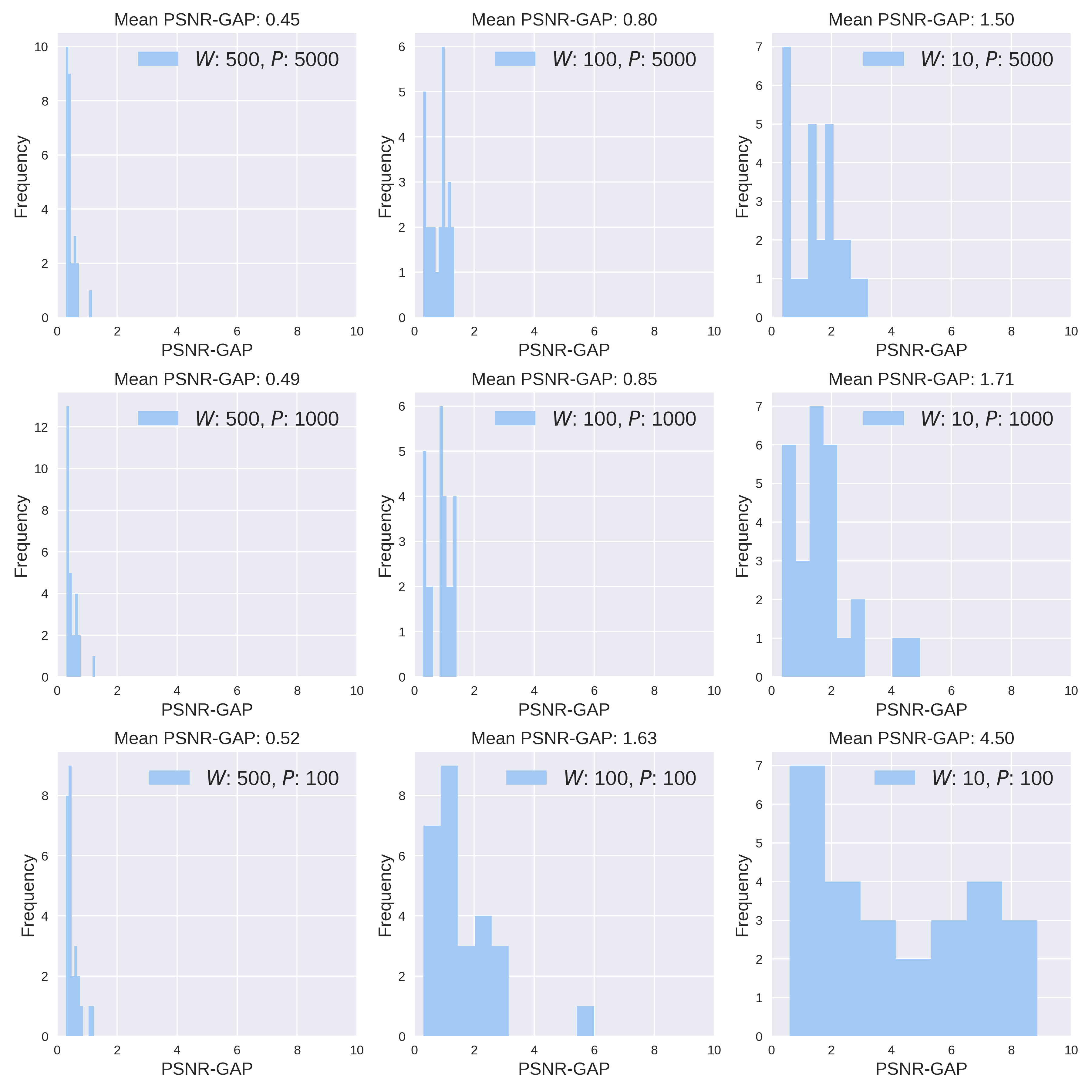}
    \caption{$9$ histograms for $9$ different hyperparameter combinations of our ES method. In each histogram, we show the frequencies of PSNR-GAPs for $27$ various DIP hyperparameter possibilities. $W$: window size; $P$: patience number.}
    \label{fig:ab_2}
\end{figure*} 

Our base task is Gaussian denoising on the classic $9$-image dataset~\citep{Dabov2007} with medium-level noise. We take the same default U-Net backbone model as the experiments in \cref{fig:denoising_example}, and perform experiments on the following hyperparameter grid: window size $\set{500, 100, 10}$, patience number $\set{5000, 1000, 100}$, DIP learning rate $\set{0.01, 0.001, 0.0001}$, DIP model width $\set{256,128,64}$, and DIP model depth $\set{7,5,3}$, resulting in a total of $243$ hyperparameter combinations. For each combination, we calculate the mean PSNR gap, on which our subsequent analysis is based. First of all, we see from \cref{fig:ab_1}(a) that for most ($\ge 83\%$) hyperparameter combinations, the mean PSNR gap falls below $2$ dB. For cases larger than $2$dB, we use the radar plot \cref{fig:ab_1}(b) to explore the deciding factors and find that most of these cases tend to have small ($10$) or medium ($100$) window sizes. This is not surprising, as a small window size can lead to a very fluctuating VAR curve, as shown in \cref{fig:ab_1}(d). To further explore other deciding factors, we focus on the subset with mean PSNR gap $\ge 2$dB and window size $=100$ and plot their settings in \cref{fig:ab_1}(c). We find that these cases invariably have a small patience number ($100$), which can trap our valley detection algorithm into a local fluctuation. So, overall, it seems that our window size and patience number are deciding factors for failures, relative to DIP hyperparameters such as learning rate and network capacity. 

Hence, we next look closely at the combined effect of window size (W) and patience number (P) on ES performance. For this, we plot $9$ histograms for the $9$ different (W, P) combinations in \cref{fig:ab_2} (i.e., each histogram is over the $27$ DIP hyperparameters). The trend is clear: the larger the patience number, the smaller the PSNR gaps; the larger the window size, the smaller the detected PSNR gaps. The average PSNR gap of our default hyperparameter combinations is below $1$ dB (the center one). If we further increase the patience number and the window size, the average PSNR gap can even be lower than $0.5$ dB (top left corner).
Overall, this confirms again our above observation that window size and patience number are the deciding factors for the detection performance of our ES method. Also, this suggests that our ES method can operate well with small PSNR gaps over a wide range of combinations ($W$, $P$), unless both are very small. 



\section{Discussion}\label{sec:discussion}
We have proposed a simple yet effective ES detection method (ES-WMV, and the ES-EMV variant) that works robustly on multiple visual IPs and DIP variants. In comparison, most competing ES methods are noise- or DIP-model-specific and only work for limited scenarios; \cite{Li2021} has comparable performance but slows down the running speed too much; validation-based ES~\citep{ding_validation_2022} works well for the simple denoising task while significantly lags behind our ES method on nonlinear IPs, e.g. BID. 


\textbf{As for limitations}, our theoretical justification is only partial, sharing the same difficulty of analyzing DNNs in general; our ES method struggles with images with substantial high-frequency components; our detection is sometimes off the peak in terms of iteration numbers when helping certain DIP variants, e.g. DIP-TV with low-level Gaussian noise (\cref{fig:helper_example}), but the detected PSNR gap is still small. DIP variants typically do not improve peak performance and also do not necessarily avoid overfitting, especially for high-level noise. We recommend the original DIP with our ES method for visual IPs discussed in this paper for the best performance and overall speed. Besides ES, there are other major technical barriers to making DIP models practical and competitive for visual IPs. A major one is efficiency: one needs to train a DNN using iterative methods for every instance; our recent works~\cite{li_deep_2023,li_random_2023} have made progress on this issue. 

\section*{Acknowledgements}
Zhong Zhuang, Hengkang Wang, Tiancong Chen and Ju Sun are partly supported by NSF CMMI 2038403. We thank the anonymous reviewers and the associate editor for their insightful comments that have substantially helped us to improve the presentation of this paper. The authors acknowledge the Minnesota Supercomputing Institute (MSI) at the University of Minnesota for providing resources that contributed to the research results reported within this paper.

\bibliography{main}
\bibliographystyle{tmlr}

\appendix
\section{Appendix}\label{sec:appendix}

\subsection{Acronyms}
\label{sec:acronyms}
\begin{table}[H]
  \centering
  \begin{tabular}{c  c}
    \multicolumn{2}{c}{\textbf{List of Common Acronyms} (in alphabetic order)} \\
    \hline
      CNN  & convolutional neural network \\
      DIP  & deep image prior \\
      DIP-TV & DIP with total variation regularization \\ 
      DNN  & deep neural network \\
      ELTO & early-learning-then-overfitting \\
      ES & early stopping \\
      EMA & exponential moving average \\
      EMV  & exponential moving variance \\
      FR-IQM  &  full-reference image quality metric \\ 
      GP-DIP & Gaussian process DIP \\
      INR & implicit neural representations \\
      IP  & inverse problem \\ 
      MSE  & mean squared error    \\
      NR-IQM  &  no-reference image quality metric \\
      PSNR  &  peak signal-to-noise ratio \\
      SIREN & sinusoidal representation networks \\ 
      VAR  &  variance \\ 
      WMV  & windowed moving variance \\
    \hline
  \end{tabular}
\end{table}

\subsection{Proof of \ref{prop:first_stage}}
\label{sec:proof_exact_trend} 
\begin{proof}
To simplify the notation, we write $\wh{\mb y} \doteq \mb y - G_{\mb \theta^0} \paren{\mb z}$, $\mb J \doteq \mb J_G\paren{\mb \theta^0}$, and $\mb c \doteq \mb \theta - \mb \theta^0$. So, the least-squares objective in \cref{eq:linearized_obj} is equivalent to 
\begin{align} 
\norm{\wh{\mb y} - \mb J \mb c}_{2}^2 
\end{align} 
and the gradient update reads 
\begin{align}
    \mb c^t = \mb c^{t-1} - \eta \mb J^\T \paren{\mb J \mb c^{k-1} - \wh{\mb y}},  
\end{align}
where $\mb c^0 = \mb 0$ and $\mb x^t = \mb J \mb c^t + G_{\mb \theta^0} \paren{\mb z}$. The residual at time $t$ can be computed as
\begin{align}
\mathbf{r}^{t} 
& \doteq\wh{\mb y}-\mathbf{J} \mb c^t \\
&=\wh{\mb y}-\mathbf{J} \left(\mb c^{t-1}-\eta \mathbf{J}^\T \left(\mathbf{J} \mb \theta^{t-1}-\wh{\mb y}\right)\right) \\
&=\left(\mathbf{I}-\eta \mathbf{J} \mathbf{J}^\T \right)\left(\wh{\mb y}-\mathbf{J} \mb c^{t-1}\right) \\
&= \left(\mathbf{I}-\eta \mathbf{J} \mathbf{J}^\T \right)^2 \left(\wh{\mb y}-\mathbf{J} \mb c^{t-2}\right) = \dots \\ 
&=\left(\mathbf{I}-\eta \mathbf{J} \mathbf{J}^\T \right)^{t}\left(\wh{\mb y}-\mathbf{J} \mb c^{0}\right) \quad (\text{using $\mb c^0  = \mb 0$}) \\
&=\left(\mathbf{I}-\eta \mathbf{J} \mathbf{J}^\T \right)^{t} \wh{\mb y}. 
\end{align}
Assume that the SVD of $\mb J$ is as $\mb J = \mb W \mb \Sigma \mb V^\T$. Then 
\begin{align} 
    \mb r^t 
    = \left(\mathbf{I}-\eta \mathbf{W} \mb \Sigma^{2} \mathbf{W}^\T \right)^{t} \wh{\mb y} 
    = \sum_{i}\left(1-\eta \sigma_{i}^{2}\right)^{t} \mathbf{w}_{i}^\T \wh{\mb y}  \mathbf{w}_{i}
\end{align} 
and so 
\begin{align}
    \mathbf{J} \mb c^t 
    = \wh{\mb y} - \mb r^t 
    = \sum_{i}\left(1-\left(1-\eta \sigma_{i}^{2}\right)^{t}\right) \mathbf{w}_{i}^\T \wh{\mb y}  \mathbf{w}_{i}. 
\end{align}
Consider a set of $W$ vectors $\mc V = \set{\mb v_1, \dots, \mb v_W}$. We have the empirical variance. 

\begin{align}
    \mathrm{VAR}\paren{\mc V} 
    = \frac{1}{W} \sum_{w=1}^W \norm{\mb v_w - \frac{1}{W} \sum_{j=1}^W \mb v_j}_2^2
    = \frac{1}{W} \sum_{w=1}^W \norm{\mb v_w}_2^2 - \norm{\frac{1}{W} \sum_{w=1}^W \mb v_w}_2^2.
\end{align}

Therefore, the variance of the set $\set{\mb x^t, \mb x^{t+1}, \dots, \mb x^{t+W -1}}$, same as the variance of the set $\set{\mb J\mb c^t, \mb J \mb c^{t+1}, \dots, \mb J \mb c^{t+W -1}}$, can be calculated as 
\begin{align}
    & \frac{1}{W}\sum_{w=0}^{W-1}\sum_{i}\paren{\mb w_i^\T \wh{\mb y}}^2 \left(1-\left(1-\eta \sigma_{i}^{2}\right)^{t+w}\right)^2 - \frac{1}{W^2}\sum_i \paren{\mb w_i^\T \wh{\mb y}}^2 \paren{\sum_{w=0}^{W-1}1-\left(1-\eta \sigma_{i}^{2}\right)^{t+w}}^2\\
    =\; & \frac{1}{W^2}\sum_{i}\paren{\mb w_i^\T \wh{\mb y}}^2 \left[W \sum_{w=0}^{W-1} \left(1-\left(1-\eta \sigma_{i}^{2}\right)^{t+w}\right)^2-\left(\sum_{w=0}^{W-1}1-\left(1-\eta \sigma_{i}^{2}\right)^{t+w}\right)^2\right]\\
    =\; & \frac{1}{W^2}\sum_{i}\paren{\mb w_i^\T \wh{\mb y}}^2 \left[\left(W^2+W\frac{(1-\eta\sigma_i^2)^{2t}(1-(1-\eta\sigma_i^2)^{2W})}{1-(1-\eta\sigma_i^2)^2}-2W\frac{(1-\eta\sigma_i^2)^t(1-(1-\eta\sigma_i^2)^W)}{\eta\sigma_i^2} \right)\right. \nonumber\\
    & \quad \quad \left.-\left(W^2- 2W \frac{(1-\eta\sigma_i^2)^t(1-(1-\eta\sigma_i^2)^W)}{\eta\sigma_i^2} + \frac{\paren{1-\eta \sigma_i^2}^{2t} \paren{1- \paren{1-\eta\sigma_i^2}^W}^2  }{\eta^2 \sigma_i^4} \right)\right]\\
    =\; & \frac{1}{W^2}\sum_{i}\left\langle\mathbf{w}_{i}, \wh{\mb y}\right\rangle^2\frac{(1-\eta\sigma_i^2)^{2t}}{\eta\sigma_i^2}\left[W\frac{1-(1-\eta\sigma_i^2)^{2W}}{2-\eta\sigma_i^2}-\frac{(1-(1-\eta\sigma_i^2)^W)^2}{\eta\sigma_i^2}\right]. 
\end{align}
So the constants $C_{W, \eta, \sigma_i}$'s are defined as 
\begin{align} 
    C_{W, \eta, \sigma_i} \doteq \frac{1}{W^2 \eta\sigma_i^2} \left[W\frac{1-(1-\eta\sigma_i^2)^{2W}}{2-\eta\sigma_i^2}-\frac{(1-(1-\eta\sigma_i^2)^W)^2}{\eta\sigma_i^2}\right]. 
\end{align} 
To see they are nonnegative, it is sufficient to show that 
\begin{multline} 
  W\frac{1-(1-\eta\sigma_i^2)^{2W}}{2-\eta\sigma_i^2}-\frac{(1-(1-\eta\sigma_i^2)^W)^2}{\eta\sigma_i^2} \ge 0 \\ \Longleftrightarrow \eta\sigma_i^2 W\paren{1-(1-\eta\sigma_i^2)^{2W}} - \paren{2-\eta\sigma_i^2} (1-(1-\eta\sigma_i^2)^W)^2 \ge 0. 
\end{multline} 
Now consider the function. 
\begin{align} 
    h\paren{\xi, W} = \xi W\paren{1-(1-\xi)^{2W}} - \paren{2-\xi} (1-(1-\xi)^W)^2    \quad  \xi \in [0, 1], W \ge 1. 
\end{align} 
First, one can easily check that $\partial_{W} h\paren{\xi, W} \ge 0$ for all $W \ge 1$ and all $\xi \in [0, 1]$, that is, $h(\xi, W)$ increases monotonically with respect to $W$. Thus, to prove $C_{W, \eta, \sigma_i} \ge 0$, it suffices to show that $h(\xi, 1) \ge 0$. Now 
\begin{align} 
    h(\xi, 1) = \xi \paren{1-(1-\xi)^2} - (2-\xi) \xi^2 = 0, 
\end{align} 
completing the proof. 
\end{proof}

\subsection{Proof of \ref{thm:upper_bound}}
\label{sec:proof_upper} 

We first re-state Theorem 2 in~\cite{heckel2020denoising}. 
\begin{theorem}[\cite{heckel2020denoising}]
    \label{thm:denoising_main_theorem}
Let $\mathbf{x} \in \mathbb{R}^{n}$ be a signal in the span of the first $p$ trigonometric basis functions, and consider a noisy observation $\mathbf{y}=\mathbf{x}+\mb n$, where the noise $\mb n \sim \mc N\left(\mathbf{0}, \xi^{2}/n \cdot \mathbf{I}\right)$. To denoise this signal, we fit a two-layer generator network $G_{\mb C} \paren{\mb B}=\operatorname{ReLU}(\mb U \mb B \mb  C) \mathbf{v}$, where $\mb v = [1, \dots, 1, -1, \dots, -1]/\sqrt{k}$, and $\mb B \sim_{iid} \mc N\paren{0, 1}$, and $\mb U$ is an upsampling operator that implements circular convolution with a given kernel $\mb u$. Denote $\mb \sigma \doteq \norm{\mb u}_2 \lvert \mb F g(\mb u \circledast \mb u/\norm{\mb u}_2^2) \rvert ^{1/2}$ where $g(t) = (1-\cos^{-1} (t)/\pi) t$ and $\circledast$ denote the circular convolution. Fix any $\epss \in (0, \sigma_p/\sigma_1]$, and suppose that $k \ge C_{\mb u} n/\epss^8$, where $C_{\mb u} > 0$ is a constant depending only on $\mb u$. Consider gradient descent with step size $\eta \leq \|\mathbf{F u}\|_{\infty}^{-2}$($\mathbf{F u}$ is the Fourier transform of $\mathbf{u}$ ) starting from $\mathbf{C}_{0}\sim_{iid}\mathcal{N}\left(0, \omega^{2}\right)$, entries $\omega \propto \frac{\|\mathbf{y}\|_{2}}{\sqrt{n}}$.
Then, for all iterations $t$ obeying $t \leq \frac{100}{\eta \sigma_{p}^{2}}$, the reconstruction error obeys
\begin{align*}
\left\|G_{\mb C^t} \paren{\mb B}- \mb x \right\|_{2} \leq\left(1-\eta \sigma_{p}^{2}\right)^t\| \mb x \|_{2}+ \sqrt{\sum_{i=1}^n \paren{(1-\eta \sigma_i^2)^t - 1}^2 \paren{\mb w_i^\T \mb n}^2}  +\epss \| \mb y \|_{2}
\end{align*}
with probability at least $1-\exp\paren{-k^2} - n^{-2}$. 
\end{theorem}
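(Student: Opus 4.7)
The approach is to bound the WMV by an average of squared reconstruction errors, then apply \cref{thm:denoising_main_theorem} iterate by iterate. Write $\bar{\mb x} \doteq W^{-1}\sum_{i=0}^{W-1} \mb x^{t+i}$ for the window mean. I would first use the triangle inequality together with $(a+b)^2 \le 2(a^2+b^2)$ to get $\|\mb x^{t+w}-\bar{\mb x}\|_2^2 \le 2\|\mb x^{t+w}-\mb x\|_2^2 + 2\|\bar{\mb x}-\mb x\|_2^2$, and then Jensen's inequality to bound $\|\bar{\mb x}-\mb x\|_2^2 \le W^{-1}\sum_{i=0}^{W-1}\|\mb x^{t+i}-\mb x\|_2^2$. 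Averaging the first inequality over $w$ and substituting the Jensen bound produces
\begin{align*}
\mathrm{DISP}_2^2(t) \;\le\; \frac{4}{W}\sum_{w=0}^{W-1}\|\mb x^{t+w}-\mb x\|_2^2.
\end{align*}

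Next I would invoke \cref{thm:denoising_main_theorem} to bound each summand. Since that theorem controls $\|\mb x^{t+w}-\mb x\|_2$ by a sum of three nonnegative quantities, namely the signal-bias $(1-\eta\sigma_p^2)^{t+w}\|\mb x\|_2$, the noise term $E_{t+w}(\mb n)$ with $E_{t+w}^2(\mb n)=\sum_j ((1-\eta\sigma_j^2)^{t+w}-1)^2\langle \mb w_j,\mb n\rangle^2$, and $\eps\|\mb y\|_2$, squaring via the three-term Cauchy--Schwarz inequality $(a+b+c)^2 \le 3(a^2+b^2+c^2)$ introduces an extra factor of $3$. Combined with the $4$ from the previous step this yields the overall constant $12$ in front of each of the three terms of the target bound. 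No union bound over $w$ is required because \cref{thm:denoising_main_theorem} holds on a single high-probability event uniformly over iterates satisfying $t \le 100/(\eta\sigma_p^2)$, so I only need $t+W-1$ to lie in that range.

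It then remains to simplify the three resulting window averages. The signal-bias contribution is a geometric series,
\begin{align*}
\sum_{w=0}^{W-1}(1-\eta\sigma_p^2)^{2(t+w)} = \frac{(1-\eta\sigma_p^2)^{2t}\bigl[1-(1-\eta\sigma_p^2)^{2W}\bigr]}{1-(1-\eta\sigma_p^2)^2},
\end{align*}
which I bound above by dropping the bracketed factor, recovering the first displayed term. For the noise contribution I use the monotonicity observation that, under the step-size condition $\eta\sigma_j^2 \in [0,1]$, the quantity $|(1-\eta\sigma_j^2)^{t+w}-1|$ is nondecreasing in $w$; replacing each summand by its value at $w=W-1$ yields the second displayed term. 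The third term is constant in $w$, so its window average is itself.

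The main obstacle is essentially bookkeeping: tracking the two multiplicative constants $4$ and $3$ cleanly through the squaring steps, and verifying that the step-size hypothesis $\eta \le \|\mb F\mb u\|_\infty^{-2}$ inherited from \cref{thm:denoising_main_theorem} is strong enough to give $1-\eta\sigma_j^2 \in [0,1]$ for every $j$, which is what justifies the monotonicity argument for the noise term. All probabilistic content is imported directly from \cref{thm:denoising_main_theorem}, so no new concentration or union-bound reasoning is needed.
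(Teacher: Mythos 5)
There is a genuine gap here: you have not proved the statement in question at all. The statement is the reconstruction-error bound for gradient descent on the two-layer generator $G_{\mb C}(\mb B)=\operatorname{ReLU}(\mb U \mb B \mb C)\mb v$, i.e., the result imported from Heckel--Soltanolkotabi (\cref{thm:denoising_main_theorem}). Your argument explicitly \emph{invokes} \cref{thm:denoising_main_theorem} to bound each summand, so as a proof of that same statement it is circular: it establishes nothing about the per-iterate bound $\norm{G_{\mb C^t}(\mb B)-\mb x}_2 \le (1-\eta\sigma_p^2)^t\norm{\mb x}_2 + E(\mb n) + \eps\norm{\mb y}_2$, which is exactly what was to be shown. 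What you have written is instead a proof of the downstream windowed-moving-variance bound (\cref{thm:upper_bound}), and indeed your route there---the bound $\mathrm{DISP}_2^2(t)\le \tfrac{4}{W}\sum_w\norm{\mb x^{t+w}-\mb x}_2^2$ via $(a+b)^2\le 2(a^2+b^2)$ and Jensen, the factor $3$ from squaring the three-term bound, the geometric series for the signal term, and the monotonicity argument replacing the noise factor by its value at $w=W-1$, giving the constant $12$---coincides with the paper's own proof of that other theorem. But it does not address the theorem you were asked to prove.

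A correct treatment of the given statement, in the spirit of the paper, does not rederive the result: the paper cites Theorem 2 of the Heckel--Soltanolkotabi work and only observes that, because $\mb B$ has i.i.d.\ Gaussian entries and is therefore full rank almost surely, the original theorem---stated for the reparametrized model $\operatorname{ReLU}(\mb U \mb C')\mb v$ with $\mb C'=\mb B\mb C$---implies the version stated here. If you wanted to actually prove the statement rather than cite it, you would need the substantive machinery of that reference: concentration of the network Jacobian around the fixed reference Jacobian determined by $\mb u$ (whence the condition $k\ge C_{\mb u} n/\eps^8$), control of how far the nonlinear gradient-descent trajectory deviates from the linearized dynamics for $t\le 100/(\eta\sigma_p^2)$, and the spectral decomposition that produces the per-mode factors $(1-\eta\sigma_i^2)^t$ together with the stated failure probability $1-\exp(-k^2)-n^{-2}$. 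None of this appears in your proposal, and none of it is replaced by the bookkeeping steps you describe.
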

Note that since $\mb B \sim_{iid} \mc N\paren{0, 1}$ and hence is full-rank with probability one, the original Theorem 1 \& 2 of \cite{heckel2020denoising} rename $\mb B \mb C$ to $\mb C'$ and state the result directly on $\mb C'$, that is, assume that the model is $\operatorname{ReLU}(\mb U \mb C') \mathbf{v}$. It is easy to see that the original theorems imply the version stated here. 

With this, we can obtain our \cref{thm:upper_bound}, stated in full technical form here: 
\begin{theorem}
Let $\mathbf{x} \in \mathbb{R}^{n}$ be a signal in the span of the first $p$ trigonometric basis functions, and consider a noisy observation $\mathbf{y}=\mathbf{x}+\mb n$, where the noise $\mb n \sim \mc N\left(\mathbf{0}, \xi^{2}/n \cdot \mathbf{I}\right)$. To denoise this signal, we fit a two-layer generator network $G_{\mb C} \paren{\mb B}=\operatorname{ReLU}(\mb U \mb B \mb  C) \mathbf{v}$, where $\mb v = [1, \dots, 1, -1, \dots, -1]/\sqrt{k}$, and $\mb B \sim_{iid} \mc N\paren{0, 1}$, and $\mb U$ is an upsampling operator that implements circular convolution with a given kernel $\mb u$. Denote $\mb \sigma \doteq \norm{\mb u}_2 \lvert \mb F g(\mb u \circledast \mb u/\norm{\mb u}_2^2) \rvert ^{1/2}$ where $g(t) = (1-\cos^{-1} (t)/\pi) t$ and $\circledast$ denotes the circular convolution. Fix any $\epss \in (0, \sigma_p/\sigma_1]$, and suppose $k \ge C_{\mb u} n/\epss^8$, where $C_{\mb u} > 0$ is a constant only depending on $\mb u$. Consider gradient descent with step size $\eta \leq \|\mathbf{F u}\|_{\infty}^{-2}$($\mathbf{F u}$ is the Fourier transform of $\mathbf{u}$ ) starting from $\mathbf{C}_{0}\sim_{iid}\mathcal{N}\left(0, \omega^{2}\right)$, entries $\omega \propto \frac{\|\mathbf{y}\|_{2}}{\sqrt{n}}$.
Then, for all iterates $t$ obeying $t \leq \frac{100}{\eta \sigma_{p}^{2}}$, our WMV obeys
\begin{align}
\mathrm{WMV} \le \frac{12}{W}\norm{\mb x}_2^2 \frac{ \left(1-\eta \sigma_{p}^{2}\right)^{2t}}{1-(1-\eta \sigma_p^2)^2} + 12 \sum_{i=1}^n \paren{\paren{1 - \eta \sigma_i^2}^{t + W-1} - 1}^2 \paren{\mb w_i^\T \mb n}^2 + 12\epss^2 \norm{\mb y}_2^2
\end{align}
with probability at least $1-\exp\paren{-k^2} - n^{-2}$. 
\end{theorem}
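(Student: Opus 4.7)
The plan is to leverage the already-established Theorem~2 of~\cite{heckel2020denoising} (restated in \cref{sec:proof_upper}), which gives a high-probability upper bound on $\|G_{\mb C^t}(\mb B) - \mb x\|_2$ for each iterate, and then push this pointwise bound through the definition of the WMV. The starting observation is that the WMV is invariant under a uniform shift of the iterates, so we can relate it to the residuals $\mb x^{t+w} - \mb x$, for which \cref{thm:denoising_main_theorem} directly applies.

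First, I would write the centered deviations as $\mb x^{t+w} - \bar{\mb x} = (\mb x^{t+w} - \mb x) - (\bar{\mb x} - \mb x)$, apply $\|\mb a - \mb b\|_2^2 \le 2\|\mb a\|_2^2 + 2\|\mb b\|_2^2$, and bound $\|\bar{\mb x} - \mb x\|_2^2$ by $\frac{1}{W}\sum_{i=0}^{W-1}\|\mb x^{t+i} - \mb x\|_2^2$ via Jensen. This gives the crude but clean estimate
\begin{align*}
\mathrm{WMV}(t) \le \frac{4}{W}\sum_{w=0}^{W-1}\|\mb x^{t+w}-\mb x\|_2^2,
\end{align*}
which already contributes the leading factor of $4$. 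Next, I would substitute the bound from \cref{thm:denoising_main_theorem} for each $\|\mb x^{t+w} - \mb x\|_2$ and square, using $(a+b+c)^2 \le 3(a^2+b^2+c^2)$; this contributes the remaining factor of $3$, giving $12$ overall. The three resulting pieces correspond one-to-one to the three terms in the claimed bound.

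The remaining work is elementary but must be done carefully. For the bias-like term, the sum $\sum_{w=0}^{W-1}(1-\eta\sigma_p^2)^{2(t+w)}$ is a geometric series that I would bound from above by $(1-\eta\sigma_p^2)^{2t}/(1-(1-\eta\sigma_p^2)^2)$ (dropping the $1-(1-\eta\sigma_p^2)^{2W}$ factor in the numerator). For the noise term, I would argue that for each $i$, the function $w \mapsto ((1-\eta\sigma_i^2)^{t+w}-1)^2$ is monotonically increasing in $w$ on $[0,W-1]$ (since $1-\eta\sigma_i^2 \in [0,1]$ under the step-size assumption $\eta \le \|\mb F \mb u\|_\infty^{-2}$), so its average over $w$ is bounded by its value at $w = W-1$. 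The $\eps\|\mb y\|_2$ term is constant in $w$ and passes through untouched.

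The main obstacle is not mathematical depth but bookkeeping: keeping track of the two sources of multiplicative constants (triangle/Jensen and Cauchy--Schwarz) so that the final factor is exactly $12$, and correctly invoking the probabilistic bound in \cref{thm:denoising_main_theorem} for all $w \in \{0,\dots,W-1\}$ simultaneously within the stated iteration range $t + W - 1 \le 100/(\eta\sigma_p^2)$ under a single high-probability event. Apart from ensuring the monotonicity-in-$w$ argument is valid for every singular value $\sigma_i$, the rest reduces to algebra.
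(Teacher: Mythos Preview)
Your proposal is correct and matches the paper's proof step for step: the same triangle-plus-Jensen decomposition to obtain the factor of $4$, the same $(a+b+c)^2 \le 3(a^2+b^2+c^2)$ for the factor of $3$, and the same geometric-series and monotonicity-in-$w$ treatments of the three resulting pieces. Your care about needing $t+W-1 \le 100/(\eta\sigma_p^2)$ is in fact stricter than the paper, which simply states the condition as $t \le 100/(\eta\sigma_p^2)$ without further comment.
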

\begin{proof} 
We make use of the basic inequality: $\norm{\mb a - \mb b}_2^2 \le 2\norm{\mb a}_2^2 + 2 \norm{\mb b}_2^2$ for any two vectors $\mb a, \mb b$ of compatible dimension. We have
\begin{align}
    & \frac{1}{W}\sum_{w=0}^{W-1} \|G_{\mb C^{t+w}} \paren{\mb B} - \frac{1}{W}\sum_{j=0}^{W-1} G_{\mb C^{t+j}} \paren{\mb B}\|_2^2 \\
    =\; &  \frac{1}{W}\sum_{w=0}^{W-1} \|G_{\mb C^{t+w}} \paren{\mb B} - \mb x + \mb x -  \frac{1}{W}\sum_{j=0}^{W-1} G_{\mb C^{t+j}} \paren{\mb B}\|_2^2\\
    \le\; &  \paren{\frac{2}{W}\sum_{w=0}^{W-1} \|G_{\mb C^{t+w}} \paren{\mb B} - \mb x\|_2^2}  + 2 \| \mb x -  \frac{1}{W}\sum_{j=0}^{W-1} G_{\mb C^{t+j}} \paren{\mb B} \|_2^2 \\
    \le \; & \frac{2}{W}\sum_{w=0}^{W-1} \|G_{\mb C^{t+w}} \paren{\mb B} - \mb x\|_2^2  + \frac{2}{W} \sum_{j=0}^{W-1} \| G_{\mb C^{t+j}} \paren{\mb B}-\mb x \|_2^2  \\
    & \quad (\text{$\mb z \mapsto \|\mb z - \mb x\|_2^2$ convex and Jensen's inequality}) \nonumber \\
    =\; & \frac{4}{W}\sum_{w=0}^{W-1} \|G_{\mb C^{t+w}} \paren{\mb B} - \mb x\|_2^2. 
\end{align}
In view of \cref{thm:denoising_main_theorem}, 
\begin{align} 
    \left\|G_{\mb C^{t+w}} \paren{\mb B}-\mathbf{x}\right\|_{2}^2 \leq 3\left(1-\eta \sigma_{p}^{2}\right)^{2t+2w}\|\mathbf{x}\|_{2}^2 +3\sum_{i=1}^{n}\left(\left(1-\eta \sigma_{j}^{2}\right)^{t+w}-1\right)^{2} \paren{\mb w_i^\T \mb n}^2 + 3 \epss^2 \norm{\mb y}_2^2. 
\end{align} 
Thus, 
\begin{align} 
    & \sum_{w=0}^{W-1} \|G_{\mb C^{t+w}} \paren{\mb B} - \mb x\|_2^2 \nonumber \\ 
    \le\; & 3\norm{\mb x}_2^2 \sum_{w =0}^{W-1} \left(1-\eta \sigma_{p}^{2}\right)^{2t+2w} + 3 \sum_{w =0}^{W-1} \sum_{i=1}^{n}\left(\left(1-\eta \sigma_{i}^{2}\right)^{t+w}-1\right)^{2} \paren{\mb w_i^\T \mb n}^2 + 3W\epss^2 \norm{\mb y}_2^2 \\
    \le\; & 3\norm{\mb x}_2^2 \frac{ \left(1-\eta \sigma_{p}^{2}\right)^{2t}(1-(1-\eta \sigma_p^2)^{2W})}{1-(1-\eta \sigma_p^2)^2} + 3W \sum_{i=1}^n \paren{\paren{1 - \eta \sigma_i^2}^{t + W-1} - 1}^2 \paren{\mb w_i^\T \mb n}^2 + 3W\epss^2 \norm{\mb y}_2^2 \\
    \le\; & 3\norm{\mb x}_2^2 \frac{ \left(1-\eta \sigma_{p}^{2}\right)^{2t}}{1-(1-\eta \sigma_p^2)^2} + 3W \sum_{i=1}^n \paren{\paren{1 - \eta \sigma_i^2}^{t + W-1} - 1}^2 \paren{\mb w_i^\T \mb n}^2 + 3W\epss^2 \norm{\mb y}_2^2, 
\end{align} 
completing the proof. 
\end{proof} 

\subsection{ES-EMV algorithm} \label{sec:alg2}
The exponential moving variance version of our method is summarized in \cref{alg:framework_emavg}. 
\begin{algorithm}[!htbp]
\caption{DIP with ES--EMV}
\label{alg:framework_emavg} 
\begin{algorithmic}[1]
\Require random seed $\mb z$, randomly-initialized $G_{\mb \theta}$, forgetting factor $\alpha \in (0, 1)$, patience number $P$, iteration counter $k = 0$, $\mathrm{EMA}^0 = 0$, $\mathrm{EMV}^0 = 0$, $\mathrm{EMV}_{\min} = \infty$
\Ensure reconstruction $\mb x^{*}$
\While{not stopped}
\State update $\mb \theta$ via \cref{eq:dip} to obtain $\mb \theta^{k+1}$ and $\mb x^{k+1}$
\State $\mathrm{EMA}^{k+1} = \paren{1-\alpha} \mathrm{EMA}^{k} + \alpha \mb x^{k+1}$
\State $\mathrm{EMV}^{k+1} = \paren{1-\alpha} \mathrm{EMV}^{k} + \alpha(1-\alpha)\|\mb x^{k+1} -\mathrm{EMA}^{k}\|_2^2$\hspace{-0.5em}
\If{$\mathrm{EMV^{k+1}} < \mathrm{EMV}_{\min}$}
\State $\mathrm{EMV}_{\min} \leftarrow \mathrm{EMV^{k+1}}$, $\mb x^{*} \leftarrow \mb x^{k+1}$
\EndIf
\If{$\mathrm{EMV}_{\min}$ stagnates for $P$ iterations}
\State stop and return $\mb x^\ast$ 
\EndIf
\State $k = k+1$
\EndWhile
\end{algorithmic}
\end{algorithm}

\subsection{More details on major DIP variants}
\label{sec:details_variants}

\paragraph{\textcolor{umn_maroon}{Deep Decoder}}~\citep{heckel2018deep} differs from DIP mainly in terms of network architecture: It is typically a \emph{under-parameterized} network consisting mainly of $1 \times 1$ convolutions, upsampling, ReLU and channel-wise normalization layers, while DIP uses an \emph{over-parameterized}, U-net like convolutional network.

\paragraph{\textcolor{umn_maroon}{GP-DIP}}~\citep{ChengEtAl2019Bayesian} uses the original DIP~\citep{ulyanov2018deep} network and formulation, but replaces stochastic gradient descent (SGD) by stochastic gradient Langevin dynamics (SGLD) in the gradient update step. i.e., for the generic gradient step for optimizing \cref{eq:dip} reads: 
\begin{align}
\mb \theta^+ = \mb \theta - t \nabla_{\mb  \theta} \brac{\ell \paren{\mb y, f\paren{G_{\mb \theta}\paren{\mb z}}} + \lambda R\paren{G_{\mb \theta}\paren{\mb z}}} + \eta
\end{align}
where $\eta$ is zero-mean Gaussian with an isotropic variance level $t$. 

\paragraph{\textcolor{umn_maroon}{DIP-TV}}~\citep{cascarano2021combining} uses the original DIP~\citep{ulyanov2018deep} network, with a Total Variation (TV) regularizer
added. Then, the proposed objective is solved with the Alternating Direction Method of Multipliers (ADMM) framework.

\paragraph{\textcolor{umn_maroon}{SIREN}}~\citep{sitzmann2020implicit} treats the object directly as a continuous function on $\RR^2$ or $\RR^3$ (or higher-dimensional spaces depending on the application) and hence parameterizes it as a multi-layer perceptron (MLP):  1) the input to SIREN is the 2D/3D coordinate of each pixel instead of random values, and 2) the network uses a sinusoidal activation function instead of the commonly used ReLU. When substituting the DIP network with SIREN and solve \cref{eq:dip} problems, similar overfitting issue is still observed. 

\subsection{More details on major ES methods} \label{sec:details_ES_methods}
Here, we provide more details on the main competing methods.

\paragraph{\textcolor{umn_maroon}{Spectral Bias (SB)}}~\citet{shi2021measuring} operates on {deep decoder} models and proposes two modifications to change the spectral bias: (1) controlling the operator norm of the weight $\mb w$ for each convolutional layer by normalization 
\begin{align} 
    \mb w' = \frac{\mb w}{\max\paren{1, \norm{\mb w}_{\mathrm{op}}/\lambda}}, 
\end{align} 
ensuring that $\norm{\mb w'}_{\mathrm{op}} \le \lambda$, which in turn controls the Fourier spectrum of the underlying function represented by the layer; (2) performing Gaussian upsampling instead of the typical bilinear upsampling to suppress the smoothness effect of the latter. These two modifications with appropriate parameter setting ($\lambda$, and $\sigma$ in Gaussian filtering) can improve the learning of the high-frequency components by {deep decoder}, and allow the blurriness-over-sharpness stopping criterion.  
\begin{align} \label{eq:sb_metric}
    \Delta r\paren{\mb x^t} = \frac{1}{W} \abs{\sum_{w=1}^W r\paren{\mb x^{t- w}}  - \sum_{w=1}^W r\paren{\mb x^{t- W- w}}}, 
\end{align} 
where $r\paren{\mb x'} = B\paren{\mb x'}/S\paren{\mb x'}$, and $B\paren{\cdot}$ and $S\paren{\cdot}$ are the blurriness and sharpness metrics in \citet{CreteEtAl2007blur} and \citet{BahramiKot2014Fast}, respectively. In other words, the criterion in \cref{eq:sb_metric} measures the change in the average blurriness-over-sharpness ratios in consecutive windows of size $W$, and small changes indicate good ES points. But, as mentioned, this criterion only works for modified DD models and not for other DIP variants, as acknowledged by the authors in \citet{shi2021measuring} and confirmed in our experiment (see \cref{sec:helper_methods}). 

\paragraph{\textcolor{umn_maroon}{DF-STE}}~\citet{jo2021rethinking} targets Gaussian denoising with known noise levels (i.e. $\mb y = \mb x + \mb n$, where $n$ is the i.i.d. Gaussian noise) and considers the objective. 
\begin{align} 
    \min_{\mb \theta} \frac{1}{n^2} \norm{\mb y - G_{\mb \theta}\paren{\mb y}}_F^2 + \frac{\sigma^2}{n^2} \trace \mb J_{G_{\mb \theta}}\paren{\mb y}, 
\end{align} 
where $\trace \mb J_{G_{\mb \theta}}\paren{\mb y}$ is the trace of the network Jacobian with respect to the input, that is, the divergence term in~\citet{jo2021rethinking}. The divergence term is a proxy for controlling the capacity of the network. The paper then proposes a heuristic zero-crossing stopping criterion that stops the iteration when the loss starts to cross zero into negative values. Although the idea works reasonably well on Gaussian denoising with low and known noise level (the variance level $\sigma^2$ is explicitly needed in the regularization parameter ahead of the divergence term), it starts to break down when the noise level increases even if the right noise level is provided; see \cref{sec:exp_competing_methods}. Also, although the paper has extended the formulation to handle Poisson noise, it is unclear how to generalize the idea for handling other types of noise, as well as how to move beyond simple additive denoising problems. 

\paragraph{\textcolor{umn_maroon}{SV-ES}}~\citet{Li2021} proposes training an autoencoder online using the reconstruction sequence $\set{\mb x^t}_{t \ge 1}$:  
\begin{align} 
    \min_{\mb w, \mb v} \sum_{t \ge 1} \ell_{\mathrm{AE}} \paren{\mb x^t, D_{\mb w} \circ E_{\mb v}\paren{\mb x^t}}. 
\end{align} 
Any new $\mb x^t$ passes through the current autoencoder and the reconstruction error $\ell_{\mathrm{AE}}$ is recorded. They observe that the error curve typically follows a U-shaped shape and that the valley of the curve is approximately aligned with the peak of the PNSR curve. Therefore, they design an ES method by detecting the valley of the error curve. This method works reasonably well for different IPs and different DIP variants. A major drawback is efficiency: the overhead caused by the online training of the autoencoder is on an order of magnitude higher than the cost of the DIP update itself, as shown in \cref{tab:wall-clock time}. 

\paragraph{\textcolor{umn_maroon}{DOP}}~\citet{you2020robust} considers only additive sparse noise (e.g., salt and pepper noise) and proposes modeling the clean image and noise explicitly in the objective: 
\begin{align} 
    \min_{\mb \theta, \mb g, \mb h} \; \norm{\mb y - G_{\mb \theta}\paren{\mb z} - \paren{\mb g \circ \mb g - \mb h \circ \mb h}}_F^2,  
\end{align} 
where the overparameterized term $\mb g \circ \mb g - \mb h \circ \mb h$ ($\circ$ denotes the Hadamard product) is meant to capture sparse noise, where a similar idea has been shown to be effective for sparse recovery in~\citet{VaskeviciusEtAl2019Implicit}. Different properly tuned learning rates for the clean image and sparse noise terms are necessary for success. The downside includes the prolonged running time, as it pushes the peak reconstruction to the very last iteration, and the difficulty to extend the idea to other types of noise. 

\subsection{Additional experimental details \& results}  \label{sec:ap_app_detail_result}

\subsubsection{External codes}   \label{sec:external_codes}
 {\small
\begin{itemize} 
    \item DIP: \url{https://github.com/DmitryUlyanov/deep-image-prior}
    \item {Deep decoder}: \url{https://github.com/reinhardh/supplement_deep_decoder}
    \item DIP-TV: \url{https://github.com/sedaboni/ADMM-DIPTV}
    \item GP-DIP: \url{https://people.cs.umass.edu/~zezhoucheng/gp-dip/}
    \item DF-STE: \url{https://github.com/gistvision/dip-denosing}
    \item SV-ES: \url{https://github.com/sun-umn/Self-Validation}
    \item DOP: \url{https://github.com/ChongYou/robust-image-recovery}
    \item SB: \url{https://github.com/shizenglin/Measure-and-Control-Spectral-Bias}
    \item CBSD68: \url{https://github.com/clausmichele/CBSD68-dataset}
\end{itemize} 
 } 

\subsubsection{Experiment Settings} \label{sec:exp_setting}

Our default setup for all experiments is as follows. Our DIP model is the original from~\cite{ulyanov2018deep}; the optimizer is ADAM with a learning rate $0.01$. For all other models, we use their default architectures, optimizers, and hyperparameters. For ES-WMV, the default window size $W= 100$, and the patience number $P = 1000$. We use both PSNR and SSIM to access the reconstruction quality and report PSNR and SSIM gaps (the difference between our detected and peak numbers) as an indicator of our detection performance. \textbf{For most experiments, we repeat the experiments $3$ times to report the mean and standard deviation}; when not, we explain why. 

\paragraph{Noise generation}  \label{sec:app_detail_result}

Following the noise generation rules of \cite{hendrycks2019robustness}\footnote{\url{https://github.com/hendrycks/robustness}}, we simulate four types of noise and three intensity levels for each type of noise. The detailed information is as follows. \textbf{Gaussian noise:} $0$ mean additive Gaussian noise with variance $0.12$, $0.18$ and $0.26$ for low, medium and high noise levels, respectively; \textbf{Impulse noise:} also known as salt-and-pepper noise, replacing each pixel with probability $p \in [0, 1]$ in a white or black pixel with half chance each. Low, medium and high noise levels correspond to $p = 0.3, 0.5, 0.7$, respectively; \textbf{Speckle noise:} for each pixel $x \in [0, 1]$, the noisy pixel is $x\paren{1+\epss}$, where $\epss$ is zero-mean Gaussian with a variance level $0.20$, $0.35$, $0.45$ for low, medium, and high noise levels, respectively; \textbf{Shot noise:} also known as Poisson noise. For each pixel, $x \in [0, 1]$, the noisy pixel is Poisson distributed with the rate $\lambda x$, where $\lambda$ is $25, 12, 5$ for low, medium, and high noise levels, respectively.

\begin{figure*}[!htbp]
    \centering 
    \includegraphics[width=1\linewidth]{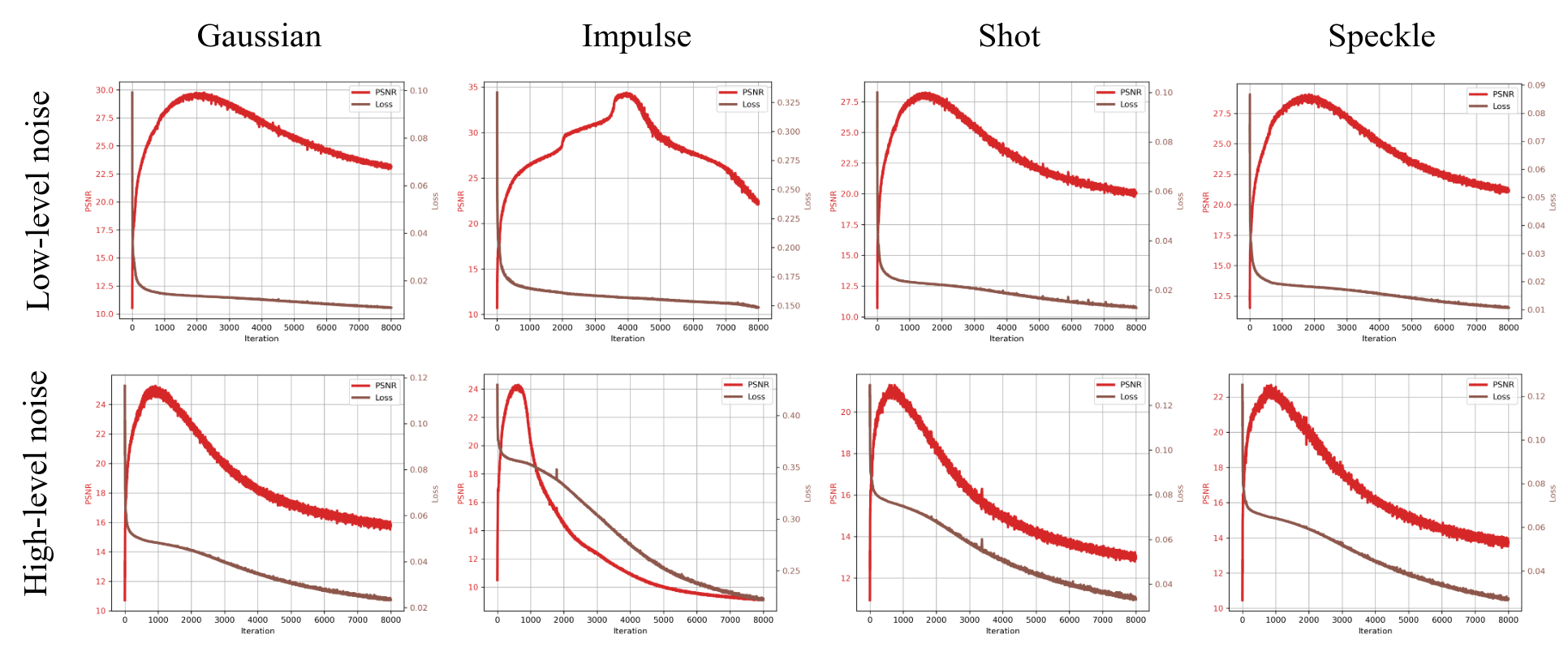}
    \caption{Our ES-WMV method on DIP for denoising ``F16" with different noise types and levels (top: low-level noise; bottom: high-level noise). Red curves are PSNR curves, and brown curves are loss curves. }
    \label{fig:loss_curve}
\end{figure*} 
\subsubsection{Denoising examples}
\label{sec:denoising_eg}
We explore the possibility of using the fitting loss for ES here, but we are unable to find correlations between the trend of the loss and that of the PSNR curve, shown in \cref{fig:loss_curve}


\begin{figure*}[!htbp]
    \centering 
    \includegraphics[width=0.9\linewidth]{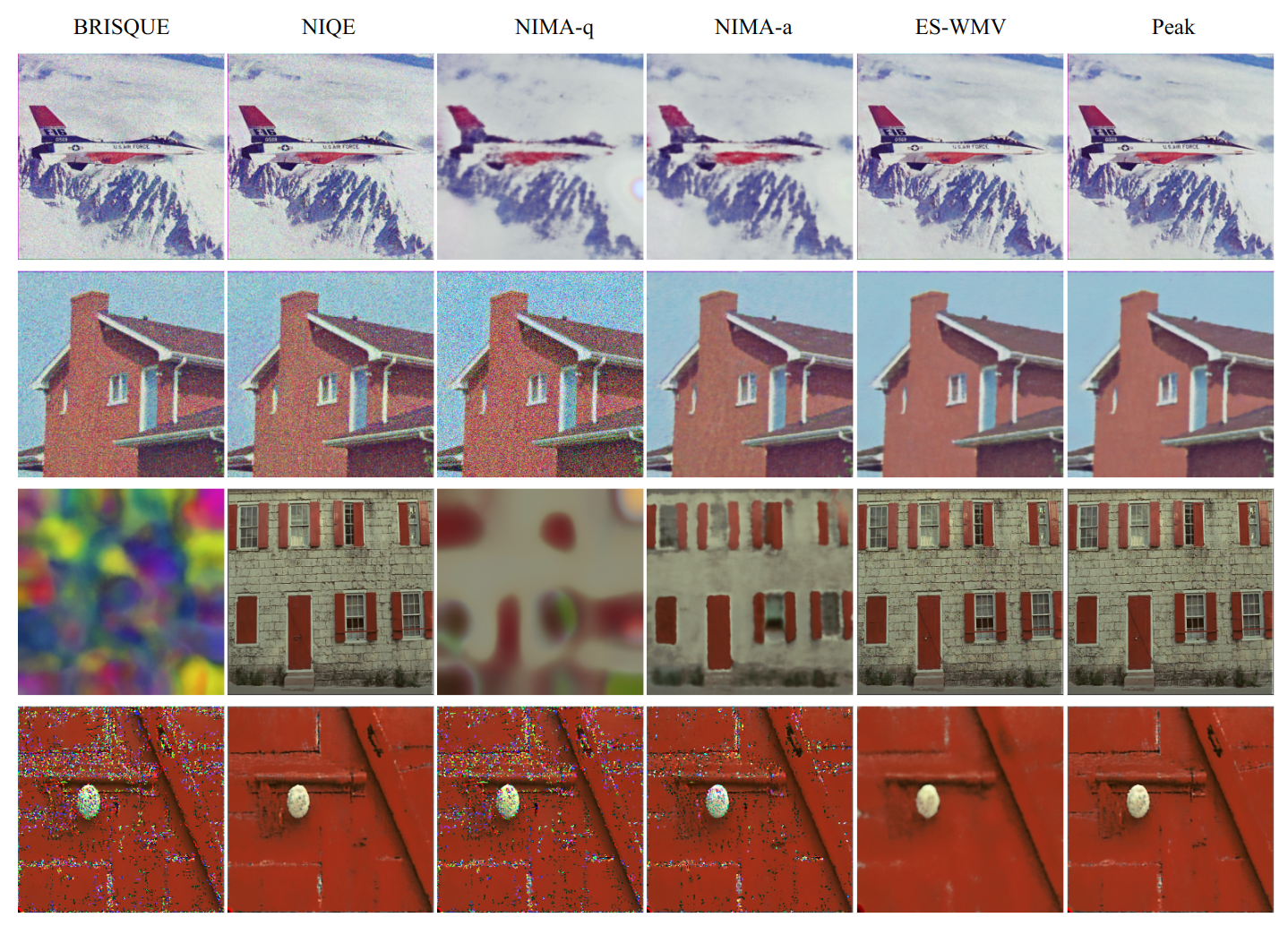}
    \caption{Visual comparisons of NR-IQMs and ES-WMV. From top to bottom: Gaussian noise (low), Gaussian noise (high), impulse noise (low), impulse noise (high).}
    \label{fig:noref_1}
\end{figure*} 

\begin{figure*}
    \centering
    \includegraphics[width=0.75\linewidth]{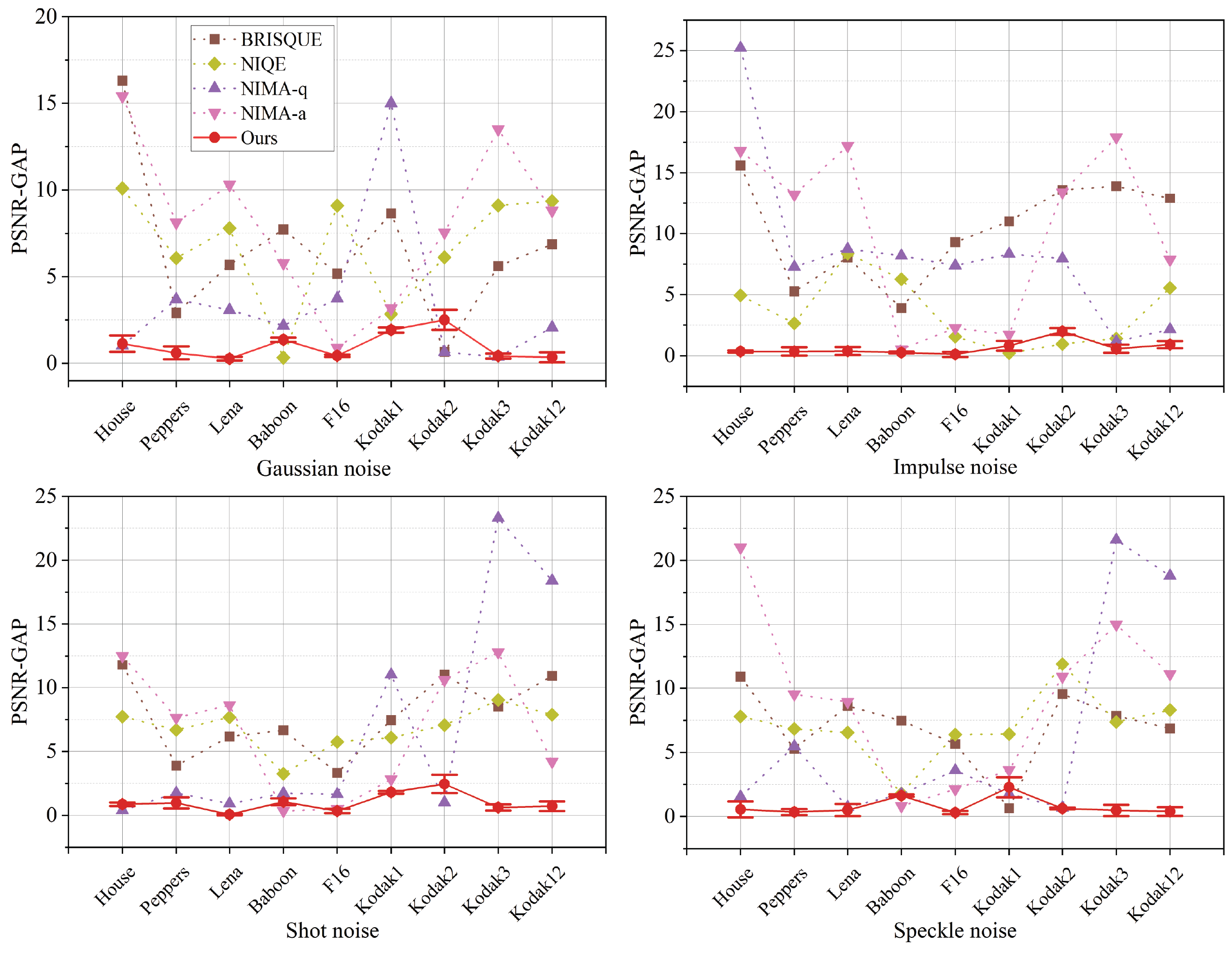}
    \caption{\textbf{High-level noise} detection performance in terms of PSNR gaps. For NIMA, we report both technical quality assessment (NIMA-q) and aesthetic assessment (NIMA-a). Smaller PSNR gaps are better.}
    \label{fig:baseline_h_psnr}
\end{figure*}
\begin{figure*}
    \centering
    \includegraphics[width=0.75\linewidth]{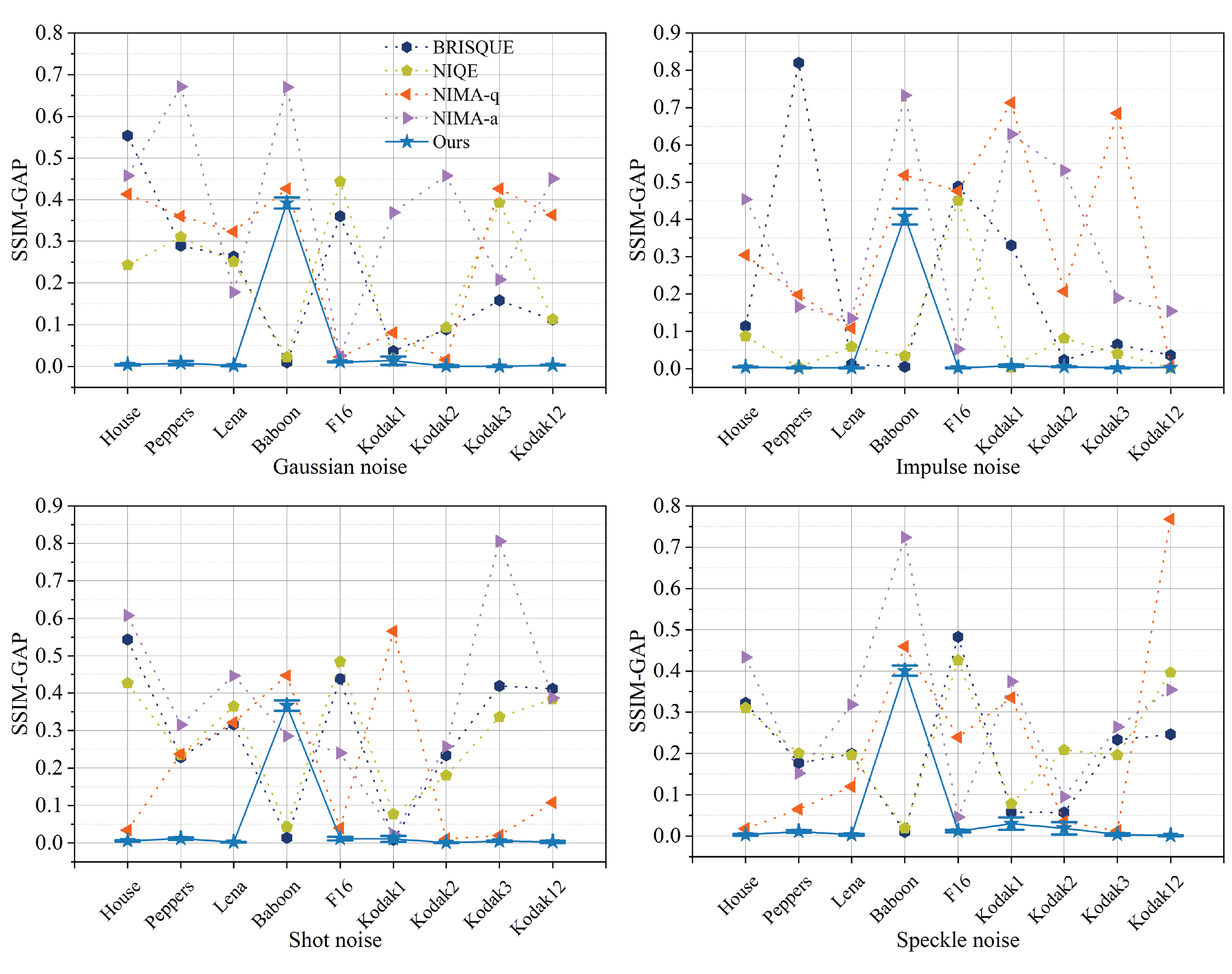}
    \caption{\textbf{Low-level noise} detection performance in terms of SSIM gaps. For NIMA, we report both technical quality assessment (NIMA-q) and aesthetic assessment (NIMA-a). Smaller SSIM gaps are better.}
    \label{fig:baseline_l_ssim}
\end{figure*}
\begin{figure*}
    \centering
    \includegraphics[width=0.75\linewidth]{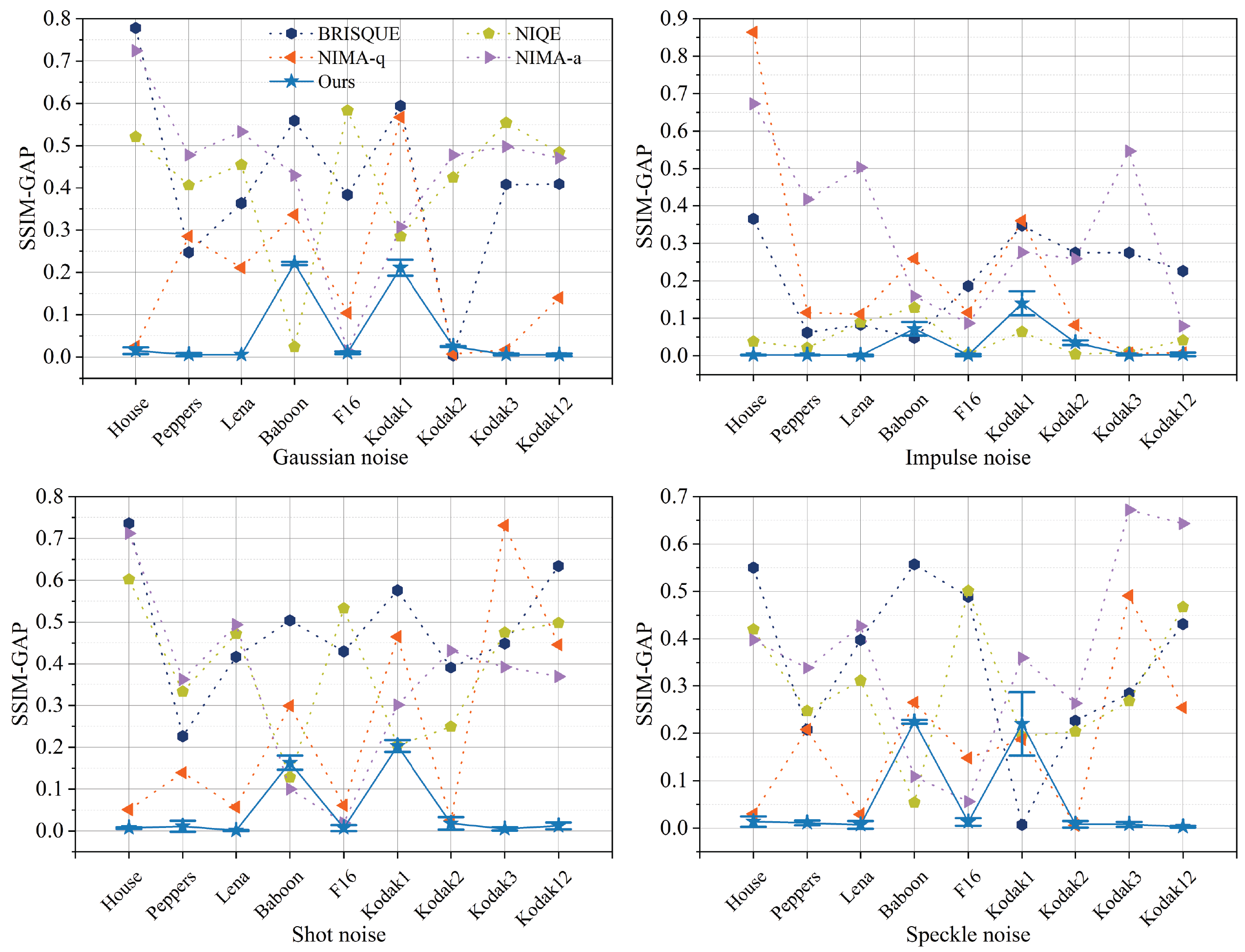}
    \caption{\textbf{High-level noise} detection performance in terms of SSIM gaps. For NIMA, we report both technical quality assessment (NIMA-q) and aesthetic assessment (NIMA-a). Smaller SSIM gaps are better.}
    \label{fig:baseline_h_ssim}
\end{figure*}

\subsubsection{Comparison with baseline methods}
\label{sec:baselines_ap}
To further compare with baseline methods, we report the PSNR gaps in high-level noise cases and the SSIM gaps in low- and high-level noise cases in \cref{fig:baseline_h_psnr},\cref{fig:baseline_l_ssim} and \cref{fig:baseline_h_ssim}, respectively, which show a trend similar to the results of PSNR gaps. The detection gaps of our method are very marginal ($< 0.02$) for most types and levels of noise (except Baboon and Kodak1 for certain types / levels of noise), while the baseline methods can easily exceed $0.1$ for most cases. In addition, we provide some visual detection results in \cref{fig:noref_1,fig:noref_2}. Our ES-WMV significantly outperforms the four baseline methods visually.

\subsubsection{Comparison with competing methods}
\label{sec:competing_ap}
Comparison between ES-WMV with DF-STE for Gaussian and shot noise on the 9 image dataset in terms of SSIM is reported in \cref{fig:df_ste_ssim}. Furthermore, we also test our ES-WMV and DF-STE on CBSD68 in \cref{tab:rethink_cbsd}. Our ES-WMV wins in high-level noise cases but lags behind DF-STE in the low-level cases. The gaps between our ES-WMV and DF-STE for all noise levels mostly come from the peak performance between the original DIP and DF-STE---modifications in DF-STE have affected peak performance, positively for low-level cases and negatively for high-level cases, not much from our ES method, as evident from the uniformly small detection gaps reported in \cref{tab:rethink_cbsd}. Moreover, DF-STE can only handle Gaussian and Poisson noise for denoising, and the exact noise level is a required hyperparameter for their method to work. 

Then we compare our ES-WMV and SV-ES in \cref{fig:sv_es_psnr}. The DIP results with ES-WMV versus DOP in impulse noise are shown in \cref{tab:dop}. For SB, part of the qualitative detection results on the 9 images\footnote{\url{http://www.cs.tut.fi/\~foi/GCF-BM3D/index.html\#ref_results}} are reported in \cref{fig:SB_compare}.

For reference, we compare DIP with the recent one-shot methods based on   diffusion models for solving linear IPs---DDNM+ for image denoising, as shown in \cref{tab:denoising_dm}. Like for \cref{tab:sr_2}, we observe that (1) \textbf{Our ES-WMV is again able to detect near-peak performance for most images}: the average PSNR gap is $\leq 1.02$ and the average SSIM gap is $\leq 0.08$; (2) DDNM+ is sensitive to the noise type and level: from \cref{tab:denoising_dm}, DDNM+ outperforms DIP and DIP+ES-WMV when there is Gaussian noise, but this is when the noise level set for pretraining DDNM+ matches the true noise level, i.e., $\sigma_y=0.18$, \textbf{which is unrealistic in practice as the noise level is not known beforehand}. When the noise level is not set correctly, e.g., as $\sigma_y=0$ in the {DDNM+} ($\sigma_y=.00$) row of \cref{tab:denoising_dm}, the performance of DDNM+ is much worse than that of DIP and DIP+ES-WMV. Also, for impluse noise denoising, DIP is also a clear winner that leads {DDNM+} by a large margin; 
and (3) in \cref{sec:diffusion}, we show that DDNM+ may also suffer from the overfitting issue and our ES-WMV can help DDNM+ to stop around the performance peak as well.
\begin{figure*}
    \centering
    \includegraphics[width=0.75\linewidth]{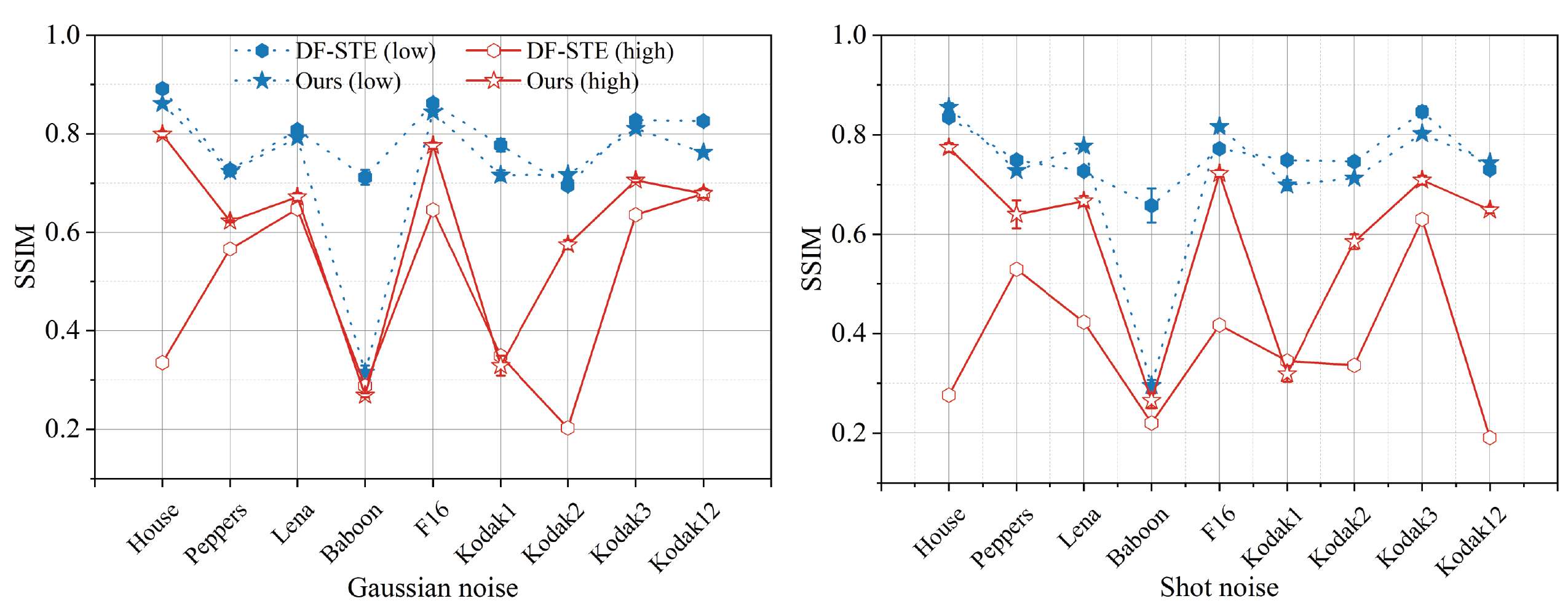}
    \caption{Comparison of DF-STE and ES-WMV for Gaussian and shot noise in terms of SSIM.}
    \label{fig:df_ste_ssim}
\end{figure*}
\begin{table}[!htpb]
    \centering
    \caption{Comparison between ES-WMV and DF-STE for image denoising on the CBSD68 dataset with varying noise level $\sigma$: mean and \scriptsize{(std)}. \normalsize{PSNR gaps below $1.0$ are colored as \textcolor{red}{red}}.}
    \label{tab:rethink_cbsd}
    \setlength{\tabcolsep}{4mm}{
    \begin{tabular}{c c c c}
    \hline
    &
    \multicolumn{1}{c}{\scriptsize{$\sigma=15$}} &
    \multicolumn{1}{c}{\scriptsize{$\sigma=25$}} &
    \multicolumn{1}{c}{\scriptsize{$\sigma=50$}}
    \\
    \hline
    \scriptsize{ES-WMV}
    & \scriptsize{28.7}\tiny({3.2})
    &{\scriptsize{27.4}}\tiny({2.6})
    & {\scriptsize{24.2}}\tiny({2.3})
 
    \\
    \scriptsize{DIP (Peak)}
    & \scriptsize{29.7}\tiny({3.0})
    &{\scriptsize{28.0}}\tiny({2.4})
    & {\scriptsize{24.9}}\tiny({2.3})
    
    \\
    \scriptsize{PSNR Gap}
    & \textcolor{red}{\scriptsize{1.0}}\tiny({0.7})
    &\textcolor{red}{\scriptsize{0.7}}\tiny({0.5})
    & \textcolor{red}{\scriptsize{0.7}}\tiny({0.5})
    
    \\
    
    \scriptsize{DF-STE}
    &{\scriptsize{31.4}}\tiny({1.8})
    & \scriptsize{28.4}\tiny({2.2})
    & \scriptsize{21.1}\tiny({2.5})
    \\
    \hline
    \end{tabular}
    }
    \end{table}
\begin{figure*}
    \centering
    \includegraphics[width=0.75\linewidth]{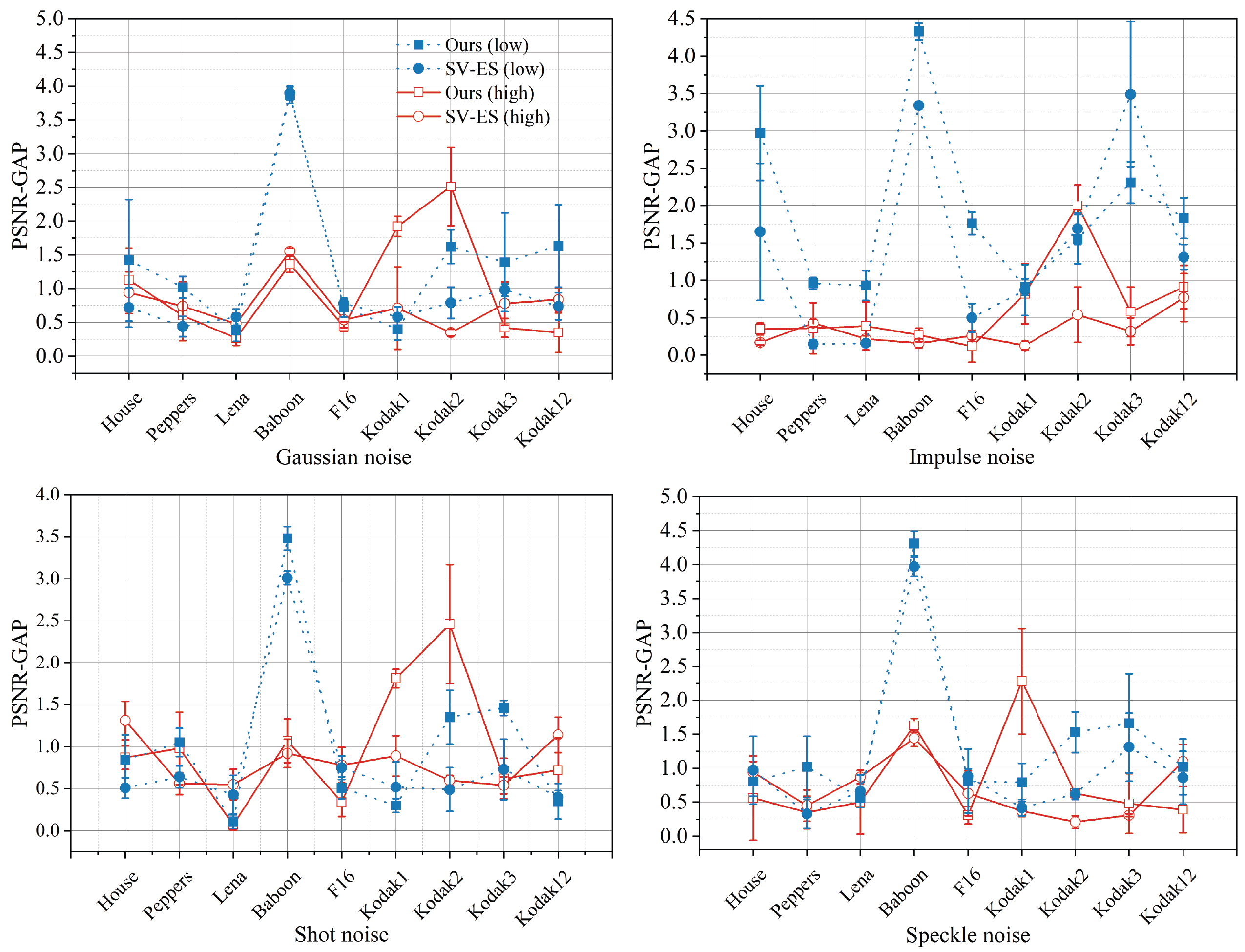}
    \caption{\textbf{Low- and high-level noise} detection performance of SV-ES and ours in terms of PSNR gaps.}
    \label{fig:sv_es_psnr}
\end{figure*}
\begin{table}[!htpb]
    \centering 
    \caption{DIP with ES-WMV vs. DOP on impulse noise: mean and \scriptsize{(std)}.}
    \label{tab:dop}
    \setlength{\tabcolsep}{3mm}{
    \begin{tabular}{c c c c c}
    \hline
    &
    \multicolumn{2}{c}{\scriptsize{Low Level}} &
    \multicolumn{2}{c}{\scriptsize{High Level}}
    \\
    \hline
    &
    \multicolumn{1}{c}{\scriptsize{PSNR}} &
    \multicolumn{1}{c}{\scriptsize{SSIM}}
    &
    \multicolumn{1}{c}{\scriptsize{PSNR}} &
    \multicolumn{1}{c}{\scriptsize{SSIM}}
    \\
    \hline
    \scriptsize{DIP-ES}
    & \scriptsize{31.64} \tiny({5.69})
    & \scriptsize{0.85} \tiny({0.18})
    & \scriptsize{24.74} \tiny({3.23})
    & \scriptsize{0.67} \tiny({0.19})
    \\
    
    \scriptsize{DOP}
    & \scriptsize{32.12} \tiny({4.52})
    & \scriptsize{0.92} \tiny({0.07})
    & \scriptsize{27.34} \tiny({3.78})
    & \scriptsize{0.86} \tiny({0.10})
    \\
    \hline
    \end{tabular}
    }
    \end{table}
\begin{figure*}[!htbp]
    \centering
    \includegraphics[width=0.9\linewidth]{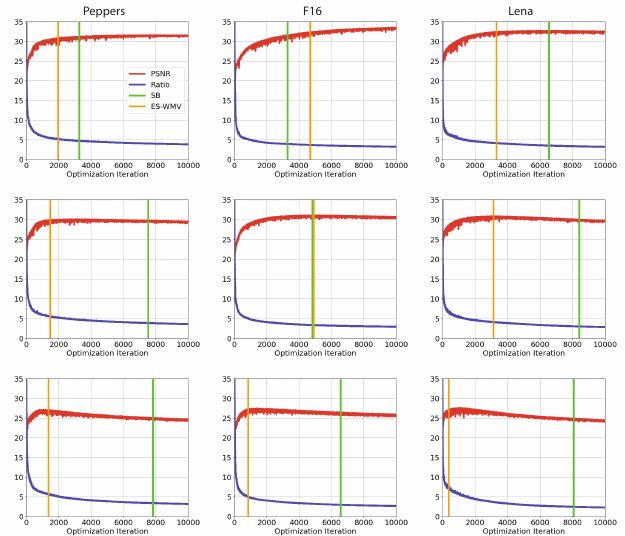}
    \caption{Comparison between ES-WMV and SB for image denoising (top: $\sigma=15$; middle: $\sigma=25$; bottom: $\sigma=50$). The red and blue curves are the PNSR and the ratio metric curves. The orange and green bars indicate the ES points detected by our ES-WMV and SB, respectively. }
    \label{fig:SB_compare}
\end{figure*}

\begin{table}[!htpb]
\begin{center}
\caption{Comparison of ES-WMV for DIP and DDNM+~\cite{wang_zero-shot_2022} for \textbf{denoising images with medium-level Gaussian and impulse noise}: mean and \scriptsize{(std)}. \normalsize{The highest PSNR and SSIM for each task are in \textcolor{red}{red}}. In particular, we set the best hyperparameter for {DDNM+} ($\sigma_y=0.18$), \textbf{which is unfair for the DIP + ES-WMV combination as we fix its hyperparameter setting}.}
\label{tab:denoising_dm}
\setlength{\tabcolsep}{1mm}{
\begin{tabular}{ccccc}
\hline
              & \multicolumn{2}{c}{\footnotesize{PSNR}}                                           & \multicolumn{2}{c}{\footnotesize{SSIM}}                                           \\ \hline
              & \multicolumn{1}{c}{\footnotesize{Gaussian}} &\footnotesize{Impulse} & \multicolumn{1}{c}{\footnotesize{Gaussian}} & \footnotesize{Impulse} \\ \hline
\footnotesize{DIP (peak)}           & \multicolumn{1}{c}{\scriptsize{24.63} \tiny{(2.06)}}            & \textcolor{red}{\scriptsize{37.75}} \tiny{(3.32)}           & \multicolumn{1}{c}{\scriptsize{0.68} \tiny{(0.06)}}             & \textcolor{red}{\scriptsize{0.96}} \tiny{(0.10)}            \\ 
\footnotesize{DIP + ES-WMV} & \multicolumn{1}{c}{\scriptsize{23.61} \tiny{(2.67)}}             & \scriptsize{36.87} \tiny{(4.29)}            & \multicolumn{1}{c}{\scriptsize{0.60} \tiny{(0.13)}}             & \textcolor{red}{\scriptsize{0.96}} \tiny{(0.10)}            \\ 
\footnotesize{{DDNM+} ($\sigma_y=.18$)}  & \multicolumn{1}{c}{\textcolor{red}{\scriptsize{26.93}} \tiny{(2.25)}}            & \scriptsize{22.29} \tiny{(3.00)}           & \multicolumn{1}{c}{\textcolor{red}{\scriptsize{0.78}} \tiny{(0.07)}}             & \scriptsize{0.62} \tiny{(0.12)}            \\ 
\footnotesize{{DDNM+} ($\sigma_y=.00$)}         & \multicolumn{1}{c}{\scriptsize{15.66} \tiny{(0.39)}}            & \scriptsize{15.52} \tiny{(0.43)}           & \multicolumn{1}{c}{\scriptsize{0.25} \tiny{(0.10)}}             & \scriptsize{0.30} \tiny{(0.10)}            \\ \hline
\end{tabular}
}
\end{center}
\end{table}

\subsubsection{ES-WMV as a helper}  
\label{sec:helper_ap}
Performance of ES-WMV on DD, GP-DIP, DIP-TV, and SIREN for Gaussian denoising in terms of SSIM gaps (see \cref{fig:dd_gp_tv_ssim}).

\begin{figure}[!htbp]
    \centering
    \includegraphics[width=0.7\linewidth]{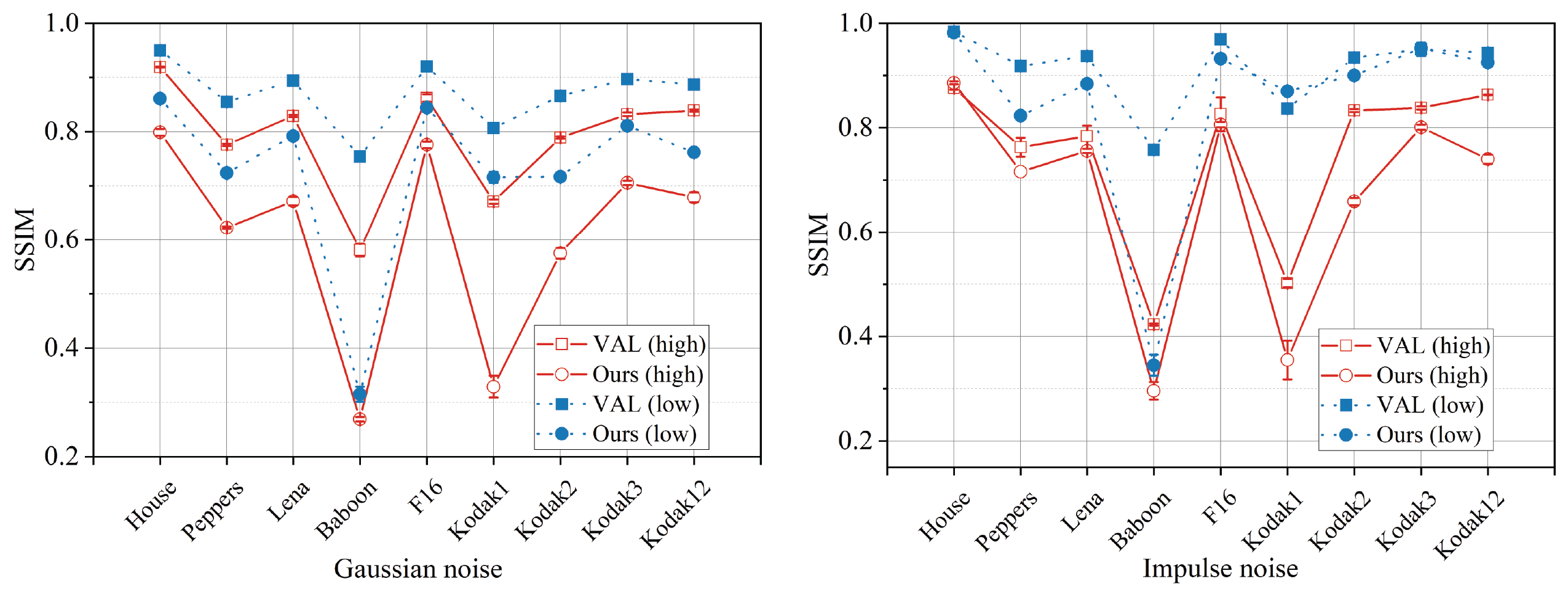}
    \caption{Comparison of VAL and ES-WMV for Gaussian and impulse noise in terms of SSIM.}
    \label{fig:val_ssim}
\end{figure}

\begin{figure}[!htbp]
    \centering
    \includegraphics[width=0.7\linewidth]{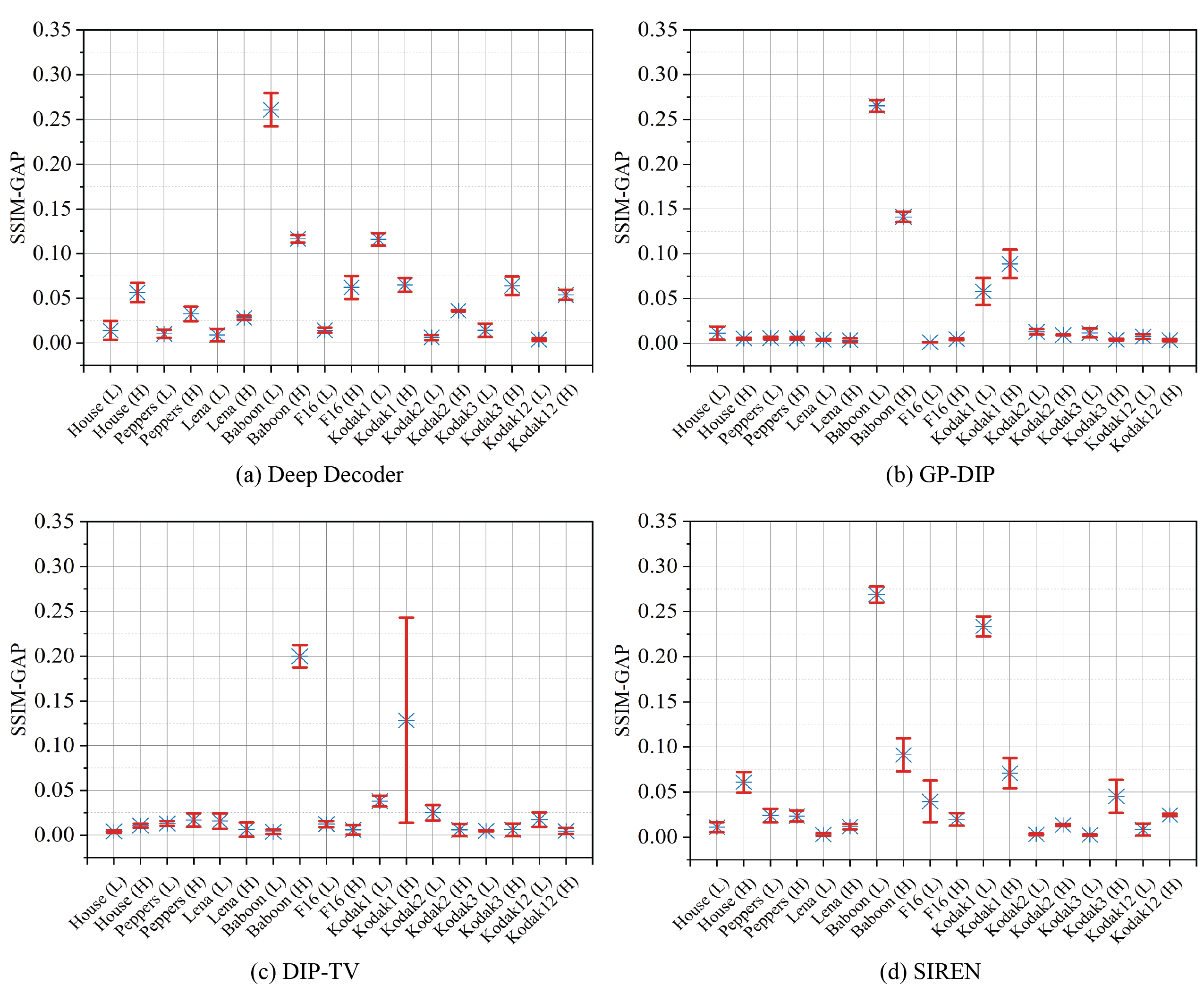}
    \caption{Performance of ES-WMV on DD, GP-DIP, DIP-TV, and SIREN for Gaussian denoising in terms of SSIM gaps. L: low noise level; H: high noise level}
    \label{fig:dd_gp_tv_ssim}
\end{figure}

\subsubsection{Performance on real-world denoising}
\label{sec:real_ap}

\begin{figure}[!htbp]
    \centering
    \includegraphics[width=0.7\linewidth]{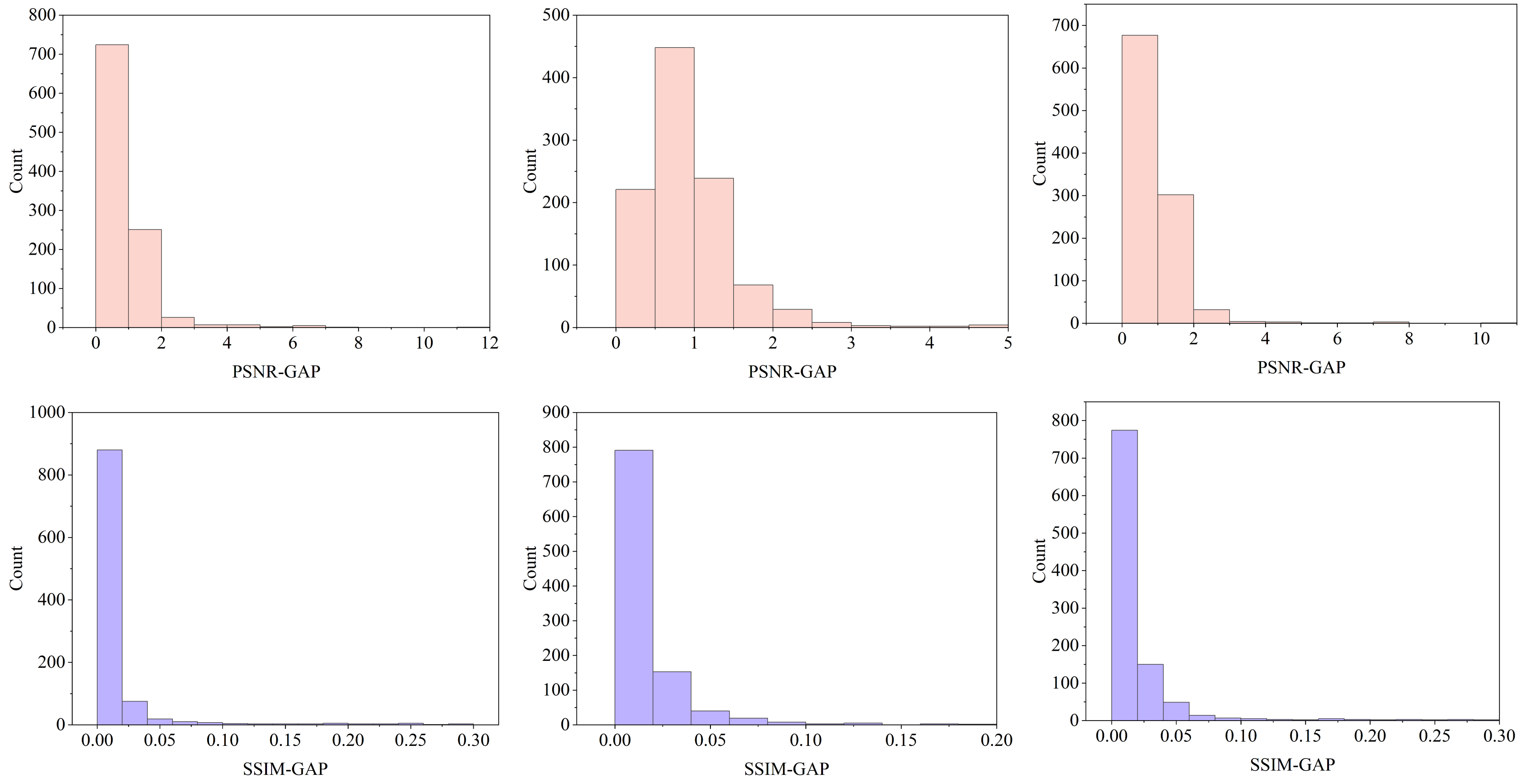}
    \caption{Image denoising of DIP + ES-WMV on the RGB track of the NTIRE 2020 Real Image Denoising Challenge. Top row: histograms of PSNR gaps for DIP (MSE), DIP ($\ell_1$) and DIP (Huber), respectively; bottom row: histograms of SSIM gaps for DIP (MSE), DIP ($\ell_1$) and DIP (Huber), respectively.}
    \label{fig:hist}
\end{figure}

We randomly sample $1024$ images from the RGB track of the NTIRE 2020 Real Image Denoising Challenge~\citep{abdelhamed2020ntire}, and perform DIP-based image denoising. Histograms of PSNR and SSIM gaps are shown in \cref{fig:hist}. For DIP with the three different losses, there are only $4.79\%$, $4.69\%$ and $4.40\%$ images, respectively, whose PSNR gaps are larger than $2dB$.
\begin{table}[!htpb]
    \centering
    \caption{DIP with ES-WMV on real image denoising on the PolyU Dataset: mean and \scriptsize{(std)}. \normalsize{(\textbf{D}: Detected)}}
    \label{tab:real_polyu}
    \setlength{\tabcolsep}{1.5mm}{
    \begin{tabular}{c c c c c}
    \hline
    &
    \multicolumn{1}{c}{\scriptsize{PSNR(\textbf{D})}} &
    \multicolumn{1}{c}{\scriptsize{PSNR Gap}}
    &
    \multicolumn{1}{c}{\scriptsize{SSIM(\textbf{D})}} &
    \multicolumn{1}{c}{\scriptsize{SSIM Gap}}
    \\
    \hline
    \scriptsize{DIP (MSE)}
    & \scriptsize{36.83} \tiny({3.07})
    & \textcolor{red}{\scriptsize{1.26}} \tiny({1.22})
    & \scriptsize{0.98} \tiny({0.02})
    & \textcolor{red}{\scriptsize{0.01}} \tiny({0.01})
    \\
    
    \scriptsize{DIP ($\ell_1$)}
    & \scriptsize{36.20} \tiny({2.81})
    & \textcolor{red}{\scriptsize{1.64}} \tiny({1.58})
    & \scriptsize{0.97} \tiny({0.02})
    & \textcolor{red}{\scriptsize{0.01}} \tiny({0.01})
    \\
    
    \scriptsize{DIP (Huber)}
    & \scriptsize{36.76} \tiny({2.96})
    & \textcolor{red}{\scriptsize{1.28}} \tiny({1.09})
    & \scriptsize{0.98} \tiny({0.02})
    & \textcolor{red}{\scriptsize{0.01}} \tiny({0.01})
    \\
    \hline
    \end{tabular}
    }
    \end{table}
As stated from the beginning, ES-WMV is designed with real-world IPs, targeting unknown noise types and levels. Given the encouraging performance above, we test it on a common real-world denoising dataset---PolyU Dataset~\cite{xu_real-world_2018}, which contains $100$ cropped regions of $512 \times 512$ from $40$ scenes. The results are reported in \cref{tab:real_polyu}. We do not repeat the experiments here; the means and standard deviations are obtained over the $100$ images of the PolyU dataset. On average, our detection gaps are $\le 1.64$ in PSNR and $\le 0.01$ in SSIM for this dataset across various losses. The absolute PNSR and SSIM detected are surprisingly high.



\subsubsection{Image Inpainting} 
\label{sec:inpainting}

\begin{figure*}[!htbp]
    \centering 
    \includegraphics[width=0.9\linewidth]{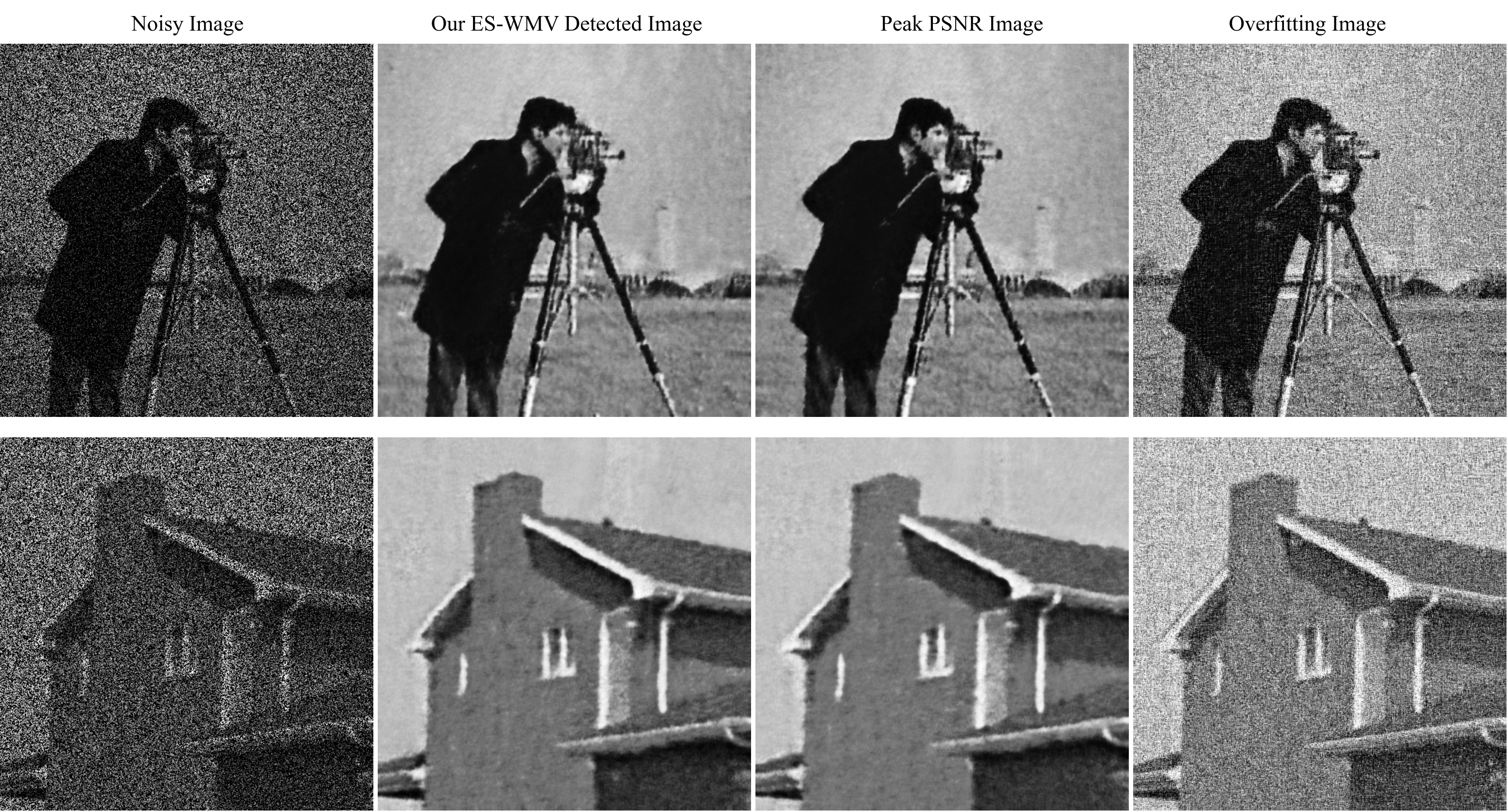}
    \caption{Visual detection performance of ES-WMV on image inpainting.}
    \label{fig:ip}
\end{figure*} 

In this task, a clean image $\mb x_0 \in [0, 1]^{H \times W}$ is contaminated by additive Gaussian noise $\epss$, and then only partially observed to yield the observation $\mb y = (\mb x_0 + \mb \epss) \odot \mb m$, where $\mb m \in \{0, 1\}^{H \times W}$ is a binary mask and $\odot$ denotes the Hadamard product. Given $\mb y$ and $\mb m$, the goal is to reconstruct $\mb x_0$. We consider the formulation reparametrized by DIP, where $G_{\mb \theta}$ is a trainable DNN parametrized by $\mb \theta$ and $\mb z$ is a frozen random seed: 
\begin{equation}
    \ell(\mb \theta) = \norm{ (G_{\mb \theta}\paren{\mb z}-\mb y) \odot \mb m }_F^2. 
\end{equation}
Mask $\mb m$ is generated according to an i.i.d. Bernoulli model with a rate of $50\%$, i.e., half of pixels not observed in expectation. The \textbf{noise $\epss$ is set to the medium level}, i.e., additive Gaussian with $0$ mean and $0.18$ variance. We test our ES-WMV for DIP on the inpainting dataset used in the original DIP paper~\cite{ulyanov2018deep}.  PSNR gaps are $\leq 1.00$ and SSIM gaps are $\leq 0.05$ for most cases (see \cref{tab:image_inpainting}). We also visualize two examples in \cref{fig:ip}.

\begin{table}[!htpb]
\begin{center}
\caption{Detection performance of DIP with ES-WMV for image inpainting: mean and \scriptsize{(std)}. \normalsize{PSNR gaps below $1.00$ are colored as \textcolor{red}{red}; SSIM gaps below $0.05$ are colored as {blue}. (\textbf{D}: Detected)}}
\label{tab:image_inpainting}
\setlength{\tabcolsep}{4mm}{
\begin{tabular}{l c c c c}

\\
\hline
&
\scriptsize{PSNR(\textbf{D})} &
\scriptsize{PSNR Gap} &
\scriptsize{SSIM(\textbf{D})} &
\scriptsize{SSIM Gap}
 \\

\hline
\scriptsize{Barbara}
&{{\scriptsize{21.59}} \tiny{(0.03)}}
&{\textcolor{red}{\scriptsize{0.20}} \tiny{(0.03)}}
&{\scriptsize{0.67} \tiny{(0.00)}}
&{{\scriptsize{0.00}}} \tiny{(0.00)}
\\

\scriptsize{Boat}
&{{\scriptsize{21.91}} \tiny{(0.10)}}
&{\scriptsize{1.16} \tiny{(0.18)}}
&{\scriptsize{0.68} \tiny{(0.00)}}
&{{\scriptsize{0.03}}} \tiny{(0.01)}
\\

\scriptsize{House}
&{{\scriptsize{27.95}} \tiny{(0.33)}}
&{\textcolor{red}{\scriptsize{0.48}} \tiny{(0.10)}}
&{\scriptsize{0.89} \tiny{(0.01)}}
&{{\scriptsize{0.01}}} \tiny{(0.00)}
\\

\scriptsize{Lena}
&{{\scriptsize{24.71}} \tiny{(0.30)}}
&{\textcolor{red}{\scriptsize{0.37}} \tiny{(0.18)}}
&{\scriptsize{0.80} \tiny{(0.00)}}
&{{\scriptsize{0.01}}} \tiny{(0.00)}
\\

\scriptsize{Peppers}
&{{\scriptsize{25.86}} \tiny{(0.22)}}
&{\textcolor{red}{\scriptsize{0.23}} \tiny{(0.05)}}
&{\scriptsize{0.84} \tiny{(0.01)}}
&{{\scriptsize{0.02}}} \tiny{(0.00)}
\\

\scriptsize{C.man}
&{{\scriptsize{25.26}} \tiny{(0.09)}}
&{\textcolor{red}{\scriptsize{0.23}} \tiny{(0.14)}}
&{\scriptsize{0.82} \tiny{(0.00)}}
&{{\scriptsize{0.01}}} \tiny{(0.00)}
\\

\scriptsize{Couple}
&{{\scriptsize{21.40}} \tiny{(0.44)}}
&{\scriptsize{1.21} \tiny{(0.53)}}
&{\scriptsize{0.63} \tiny{(0.01)}}
&{{\scriptsize{0.04}}} \tiny{(0.02)}
\\

\scriptsize{Finger}
&{{\scriptsize{20.87}} \tiny{(0.04)}}
&{\textcolor{red}{\scriptsize{0.24}} \tiny{(0.17)}}
&{\scriptsize{0.77} \tiny{(0.00)}}
&{{\scriptsize{0.01}}} \tiny{(0.01)}
\\

\scriptsize{Hill}
&{{\scriptsize{23.54}} \tiny{(0.08)}}
&{\textcolor{red}{\scriptsize{0.25}} \tiny{(0.11)}}
&{\scriptsize{0.70} \tiny{(0.00)}}
&{{\scriptsize{0.00}}} \tiny{(0.00)}
\\

\scriptsize{Man}
&{{\scriptsize{22.92}} \tiny{(0.25)}}
&{\textcolor{red}{\scriptsize{0.46}} \tiny{(0.11)}}
&{\scriptsize{0.70} \tiny{(0.01)}}
&{{\scriptsize{0.01}}} \tiny{(0.00)}
\\

\scriptsize{Montage}
&{{\scriptsize{26.16}} \tiny{(0.33)}}
&{\textcolor{red}{\scriptsize{0.38}} \tiny{(0.26)}}
&{\scriptsize{0.86} \tiny{(0.01)}}
&{{{\scriptsize{0.03}}} \tiny{(0.01)}}
\\


\hline
\end{tabular}
}
\end{center}
\end{table}

\subsubsection{Image Super-resolution} 
\label{sec:appendix_sr}

Visual comparisons for $2 \times$ image super-resolution task with additional low-level Gaussian noise and impulse noise are shown in \cref{fig:sr_gau,fig:sr_im}, respectively.

\begin{figure*}[!htbp]
    \centering 
    \includegraphics[width=0.9\linewidth]{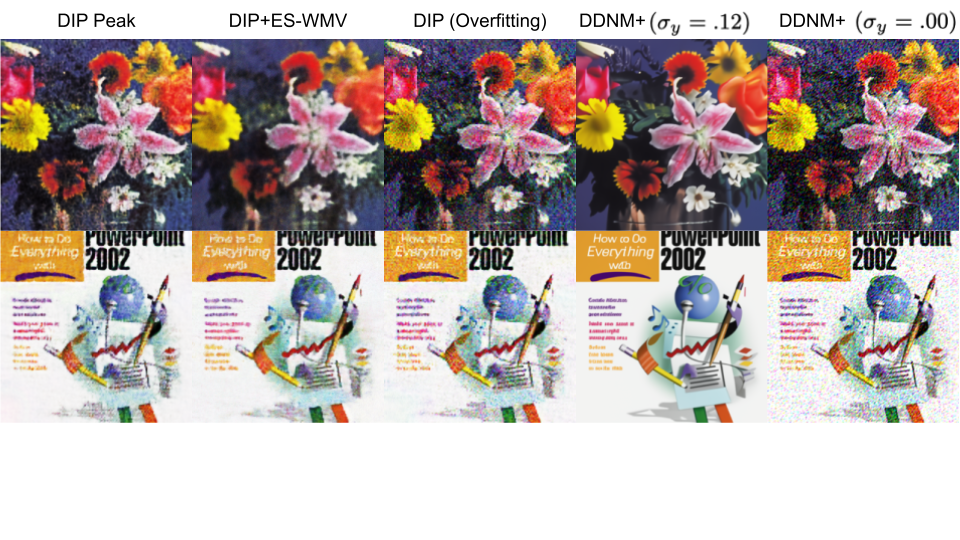}
    \caption{Visual comparisons for $2 \times$ image super-resolution task with additional low-level Gaussian noise.}
    \label{fig:sr_gau}
\end{figure*} 

\begin{figure*}[!htbp]
    \centering 
    \includegraphics[width=0.9\linewidth]{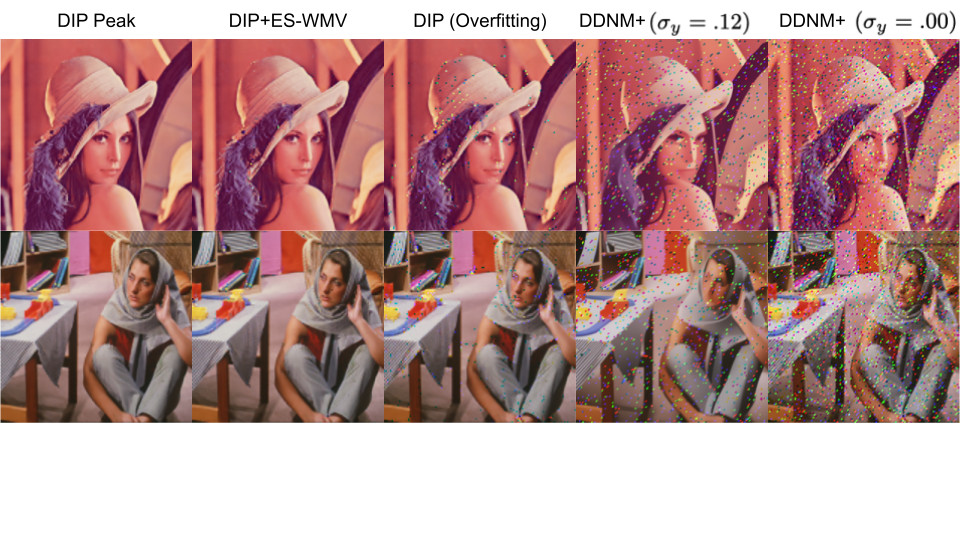}
    \caption{Visual comparisons for $2 \times$ image super-resolution task with additional low-level Impulse noise.}
    \label{fig:sr_im}
\end{figure*} 

\subsubsection{{RAW images demosaicing and denoising}}\label{sec:raw_img_demosaicing}

RAW images demosaicing and denoising are two essential procedures for modern digital cameras to produce high-quality full-color images (~\cite{jdd_doubledip}). Given a noisy RAW image $\wh{\mb x}^{1ch} = \mb x^{1ch} + \mb n$, where $H$ and $W$ are the height and width of the image and $\mb n$ is the noise, the goal is to obtain a high quality full-color image $\mb x^{3ch} \in \R^{H \times W \times 3}$ from it. To achieve this goal, it is obvious that we need to fill in the missing pixels (\emph{demosaicing}) and remove the noisy components $\mb n$ (\emph{denoising}). In this section, we formulate this problem as an image-inpainting problem as~\cite{jdd_doubledip} and adopt DIP to reconstruct the desired full-color image. In addition, we plug our early stopping method into DIP and explore the effectiveness of our method on this low-level vision task. We conduct experiments on the Kodak dataset\footnote{\url{https://r0k.us/graphics/kodak/}} and prepare it following the pipeline in~\cite{jdd_doubledip}. We experiment with Poisson noise ($\lambda=25$; a detailed description of the noise intensity could be found in~\cite{jdd_doubledip}), which is a very common noise under low light conditions. We report the experimental results in \cref{tab:demosaicing}. It is evident that our method could effectively detect the near-peak points and produce reliable early stopping signals for DIP.

\begin{wraptable}{r}{0.45\linewidth}
    \centering
    \caption{RAW images demosaicing and denoising on the Kodak dataset for \textbf{25 cases}: mean and \scriptsize{(std)}. \footnotesize{(\textbf{D}: Detected)}}
    \label{tab:demosaicing}
    \setlength{\tabcolsep}{1mm}{
    \begin{tabular}{c c c c}
    \hline
    
    \multicolumn{1}{c}{\footnotesize{PSNR(\textbf{D})}} &
    \multicolumn{1}{c}{\footnotesize{PSNR Gap}}
    &
    \multicolumn{1}{c}{\footnotesize{SSIM(\textbf{D})}} &
    \multicolumn{1}{c}{\footnotesize{SSIM Gap}}
    \\
    \hline
    \scriptsize{24.22} \tiny({2.49})
    & \textcolor{red}{\scriptsize{0.92}} \tiny({0.87})
    & \scriptsize{0.58} \tiny({0.14})
    & \textcolor{red}{\scriptsize{0.06}} \tiny({0.08})
    \\
    \hline
    \end{tabular}
    }
    \end{wraptable}

\subsubsection{MRI reconstruction}
\label{sec:sup_mri}

\begin{figure}
\centering
    \includegraphics[width=0.5\linewidth]{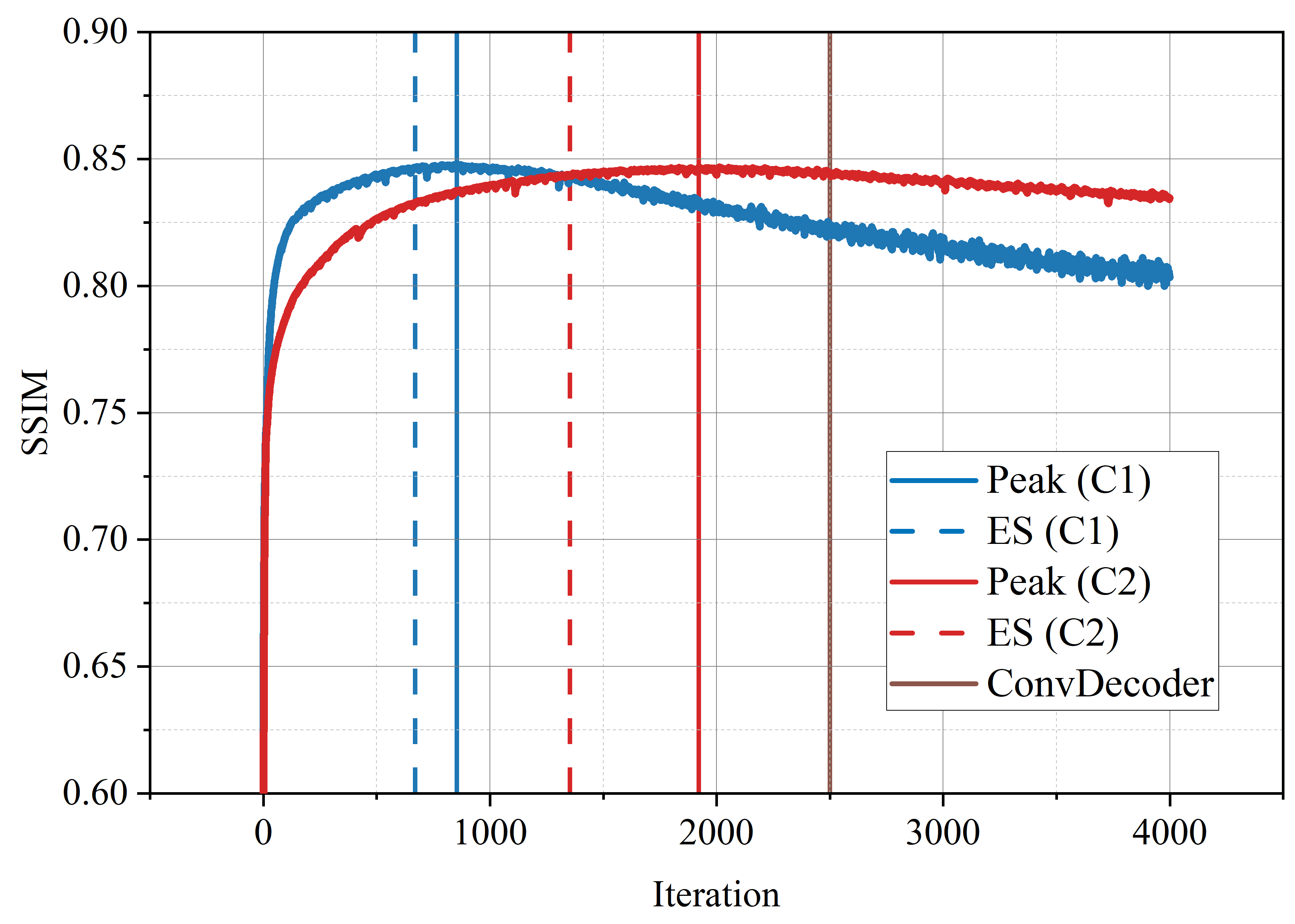}
    \caption{Detection on MRI reconstruction}
    \label{fig:mri_curve}
\end{figure}
We visualize the performance on two random cases (C1: $1001339$ and C2: $1000190$ sampled from \cite{darestani2021accelerated}, part of the fastMRI datatset~\citep{ZbontarEtAl2018fastMRI}) in \cref{fig:mri_curve} (quality measured in SSIM, consistent with~\cite{darestani2021accelerated}).

\subsubsection{Blind image deblurring (BID)}
\label{sec:sup_bid}

In this section, we systematically test our ES-WMV and VAL on the entire standard Levin dataset for both low-level and high-level cases. We set the maximum number of iterations to $10,000$ to ensure sufficient optimization. The detected images of our ES-WMV are substantially better than those of VAL, as shown in \cref{tab:level_bid}.
\begin{table}[!htpb]
    \centering
    \caption{BID detection comparison between ES-WMV and VAL on the Levin dataset for both low-level and high-level noise: mean and \scriptsize{(std)}.\footnotesize{Higher PSNR is in \textcolor{red}{red} and higher SSIM is in {blue}. (\textbf{D}: Detected)}}
    \label{tab:level_bid}
    \setlength{\tabcolsep}{3mm}{
    \begin{tabular}{c c c c c}
    \hline
    &
    \multicolumn{2}{c}{\scriptsize{Low Level}} &
    \multicolumn{2}{c}{\scriptsize{High Level}}
    \\
    \hline
    &
    \multicolumn{1}{c}{\scriptsize{PSNR(\textbf{D})}} &
    \multicolumn{1}{c}{\scriptsize{SSIM(\textbf{D})}} &
    \multicolumn{1}{c}{\scriptsize{PSNR(\textbf{D})}}
    &
    \multicolumn{1}{c}{\scriptsize{SSIM(\textbf{D})}}
    \\
    \hline
    \scriptsize{WMV}
    & \textcolor{red}{\scriptsize{28.54}}\tiny({0.61})
    & {\scriptsize{0.83}}\tiny({0.04})
    & \textcolor{red}{\scriptsize{26.41}}\tiny({0.67})
    & {\scriptsize{0.76}}\tiny({0.04})
    \\
    
    \scriptsize{VAL}
    & {\scriptsize{18.87}}\tiny({1.44})
    & \scriptsize{0.50}\tiny({0.09})
    & \scriptsize{16.69}\tiny({1.39})
    & \scriptsize{0.44}\tiny({0.10})
    \\
    \hline
    \end{tabular}
    }
    \end{table}
    
\subsubsection{ES-WMV vs. ES-EMV}
\label{sec:es_wmv_emv}
We now consider our memory-efficient version (ES-EMV) as described in \cref{alg:framework_emavg}, and compare it with ES-WMV, as shown in \cref{fig:dip_ema}. Besides the memory benefit, ES-EMV runs around 100 times faster than ES-WMV, as reported in \cref{tab:wall-clock time} and does seem to provide a consistent improvement on the detected PSNRs for image denoising tasks on NTIRE 2020 Real Image Denoising Challenge~\citep{abdelhamed2020ntire}, PolyU dataset~\cite{xu_real-world_2018} 
\begin{figure}
    \centering
    \includegraphics[width=0.5\linewidth]{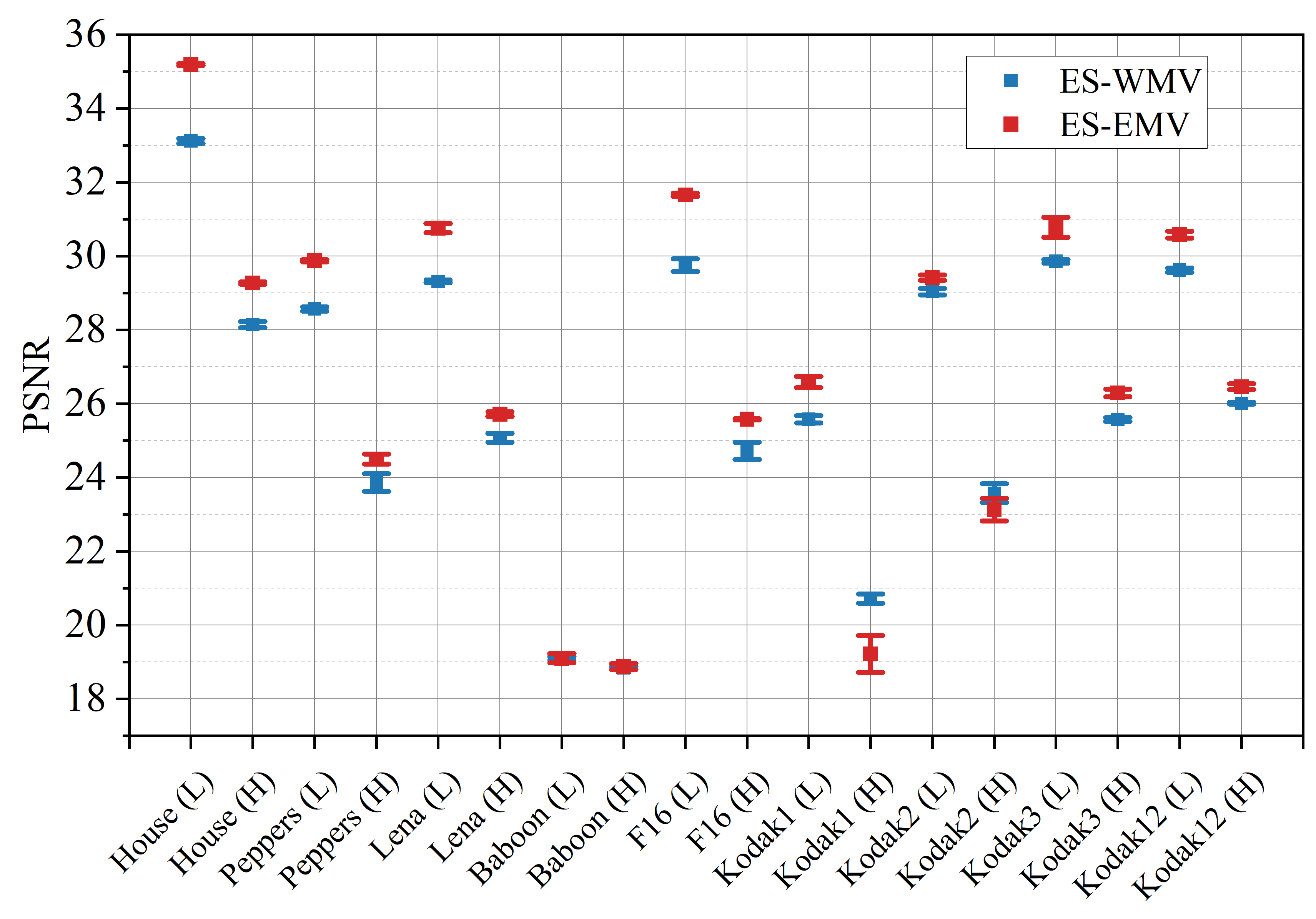}
    \caption{Detected PSNR comparison between DIP with ES-WMV and DIP with ES-EMV on the classic $9$-image dataset~\citep{Dabov2007}.}
    \label{fig:dip_ema}
\end{figure}
and the classic $9$-image dataset~\citep{Dabov2007} (see \cref{tab:wmv_emv_1024,tab:wmv_emv_poly,fig:dip_ema}), due to the strong smoothing effect (we set $\alpha = 0.1$). In this paper, we prefer to keep it simple and leave systematic evaluations of these variants for future work.

\begin{table}[!htpb]
    \centering
    \caption{Detection performance comparison between DIP with ES-WMV and DIP with ES-EMV for real image denoising on $1024$ images from the RGB track of NTIRE 2020 Real Image Denoising Challenge~\citep{abdelhamed2020ntire}: mean and \scriptsize{(std)}. \normalsize{Higher PSNR and SSIM are in \textcolor{red}{red}. (\textbf{D}: Detected)}}
    \label{tab:wmv_emv_1024}
    \setlength{\tabcolsep}{1.5mm}{
    \begin{tabular}{c c c c c}
    \hline
    &
    \multicolumn{1}{c}{\scriptsize{PSNR(\textbf{D})-WMV}} &
    \multicolumn{1}{c}{\scriptsize{PSNR(\textbf{D})-EMV}}
    &
    \multicolumn{1}{c}{\scriptsize{SSIM(\textbf{D})-WMV}} &
    \multicolumn{1}{c}{\scriptsize{SSIM(\textbf{D})-EMV}}
    \\
    \hline
    \scriptsize{DIP (MSE)}
    & \scriptsize{34.04} \tiny({3.68})
    & \textcolor{red}{\scriptsize{34.96}} \tiny({3.80})
    & \scriptsize{0.92} \tiny({0.07})
    & \textcolor{red}{\scriptsize{0.93}} \tiny({0.07})
    \\
    
    \scriptsize{DIP ($\ell_1$)}
    & \scriptsize{33.92} \tiny({4.34})
    & \textcolor{red}{\scriptsize{34.83}} \tiny({4.35})
    & \scriptsize{0.93} \tiny({0.05})
    & \textcolor{red}{\scriptsize{0.94}} \tiny({0.05})
    \\
    
    \scriptsize{DIP (Huber)}
    & \scriptsize{33.72} \tiny({3.86})
    & \textcolor{red}{\scriptsize{34.72}} \tiny({4.04})
    & \scriptsize{0.92} \tiny({0.06})
    & \textcolor{red}{\scriptsize{0.93}} \tiny({0.06})
    \\
    \hline
    \end{tabular}
    }
    \end{table}

\begin{table}[!htpb]
    \centering
    \caption{Detection performance comparison between DIP with ES-WMV and DIP with ES-EMV for real image denoising on the PolyU dataset~\cite{xu_real-world_2018}: mean and \scriptsize{(std)}. \normalsize{Higher PSNR and SSIM are in \textcolor{red}{red}. (\textbf{D}: Detected)}}
    \label{tab:wmv_emv_poly}
    \setlength{\tabcolsep}{1.5mm}{
    \begin{tabular}{c c c c c}
    \hline
    &
    \multicolumn{1}{c}{\scriptsize{PSNR(\textbf{D})-WMV}} &
    \multicolumn{1}{c}{\scriptsize{PSNR(\textbf{D})-EMV}}
    &
    \multicolumn{1}{c}{\scriptsize{SSIM(\textbf{D})-WMV}} &
    \multicolumn{1}{c}{\scriptsize{SSIM(\textbf{D})-EMV}}
    \\
    \hline
    \scriptsize{DIP (MSE)}
    & \scriptsize{36.83} \tiny({3.07})
    & \textcolor{red}{\scriptsize{37.32}} \tiny({3.82})
    & \scriptsize{0.98} \tiny({0.02})
    & \textcolor{red}{\scriptsize{0.98}} \tiny({0.03})
    \\
    
    \scriptsize{DIP ($\ell_1$)}
    & \scriptsize{36.20} \tiny({2.81})
    & \textcolor{red}{\scriptsize{36.43}} \tiny({3.22})
    & \scriptsize{0.97} \tiny({0.02})
    & {\scriptsize{0.97}} \tiny({0.02})
    \\
    
    \scriptsize{DIP (Huber)}
    & \scriptsize{36.76} \tiny({2.96})
    & \textcolor{red}{\scriptsize{37.21}} \tiny({3.19})
    & \scriptsize{0.98} \tiny({0.02})
    & {\scriptsize{0.98}} \tiny({0.02})
    \\
    \hline
    \end{tabular}
    }
    \end{table}



\subsubsection{Ablation study}
\label{sec:ablation_ap}

\begin{figure}
    \centering
    \includegraphics[width=0.7\linewidth]{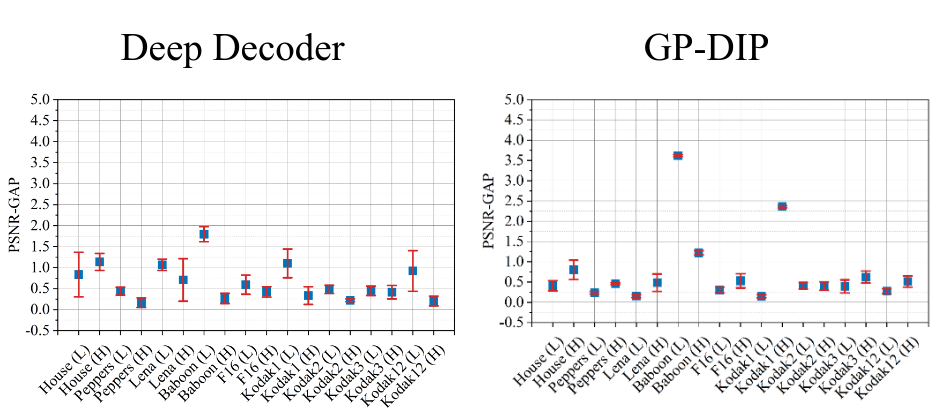}
    \caption{Performance of ES-WMV on {deep decoder} and GP-DIP with smaller learning rates for Gaussian denoising in terms of PSNR gaps. L: low noise level; H: high noise level.}
    \label{fig:dd_sgld_new}
\end{figure}

We also notice that smaller learning rates can smooth out the VAR curves and mitigate the multi-valley phenomenon in \cref{fig:dd_curve}. Therefore, we apply our ES-WMV to {deep decoder} and GP-DIP with smaller learning rates (both $0.001$), as shown in \cref{fig:dd_sgld_new}. Compared to the results of {deep decoder} and GP-DIP with the default learning rates in \cref{fig:dd_gp_tv}, most of the PSNR gaps decrease.

\subsection{Potential of ES-WMV for effective ES in zero-shot super-resolution with diffusion models}  \label{sec:diffusion} 
\begin{figure}[!htbp]
    \centering
    \includegraphics[width=\linewidth]{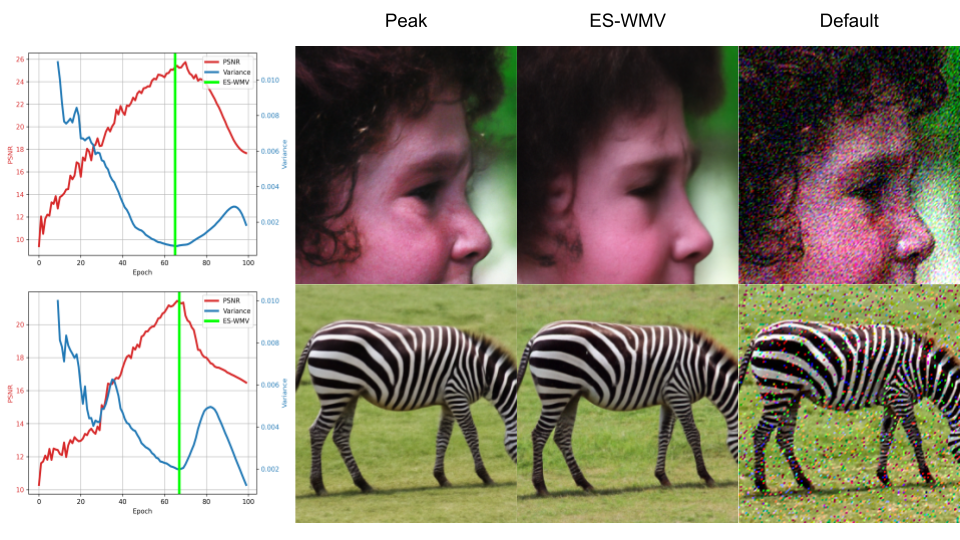}
    \caption{\textbf{Left}: PSNR/VAR curves of DDNM+ for $2 \times$ noisy image super-resolution (top: with Gaussian noise at $\sigma_y = 0.12$; bottom: with impulse noise). \textbf{Right}: visualization of the super-resolved results by DIP (PSNR peak), DIP with ES-WMV, and the default DDNM+. 
    The DDNM+ paper~\cite{wang_zero-shot_2022} fixes the iteration number as $1000$ and stops the reverse diffusion process at the very last iteration by default.}
    \label{fig:diffusion}
\end{figure}
Recently, zero-shot methods based on diffusion models have been proposed to solve linear image restoration tasks, e.g., DDNM+~\cite{wang_zero-shot_2022}\footnote{\url{https://github.com/wyhuai/DDNM/tree/main/hq_demo}}. However, these methods usually rely on pre-trained models from large external training datasets, while DIP does not need any training data or pre-trained models. In \cref{fig:diffusion}, we show that DDNM+ can also have overfitting issues similar to those in DIP methods, especially when the observation $\mb y$ is noisy but the noise type and/or level are not correctly specified to the diffusion models---very likely to happen in practice, as knowing the exact measurement noise type/level is often unrealistic. When DDNM+ is trained assuming no noise, i.e., $\sigma_y = 0.00$, but the downsampled image is contaminated by Gaussian noise at a level $\sigma_y = 0.12$, or by impluse noise, there is substantial overfitting to the noise, as is evident from both the PSNR plots (left of \cref{fig:diffusion}) and direct visualization of the super-resolved images (right of \cref{fig:diffusion}). Moreover, we observe that our ES-MWV method can also help DDNM+ detect near-peak performance! We stress that the experiment here is exploratory and preliminary and that tackling the overfitting issue in DDNM+ style methods for solving inverse problems is out of the scope of this paper. We leave a complete study for future work.

\subsection{{Limitations and analysis of failure cases}}
\label{sec:sup_failure}
\begin{figure}[!htbp]
    \centering
    \includegraphics[width=0.7\linewidth]{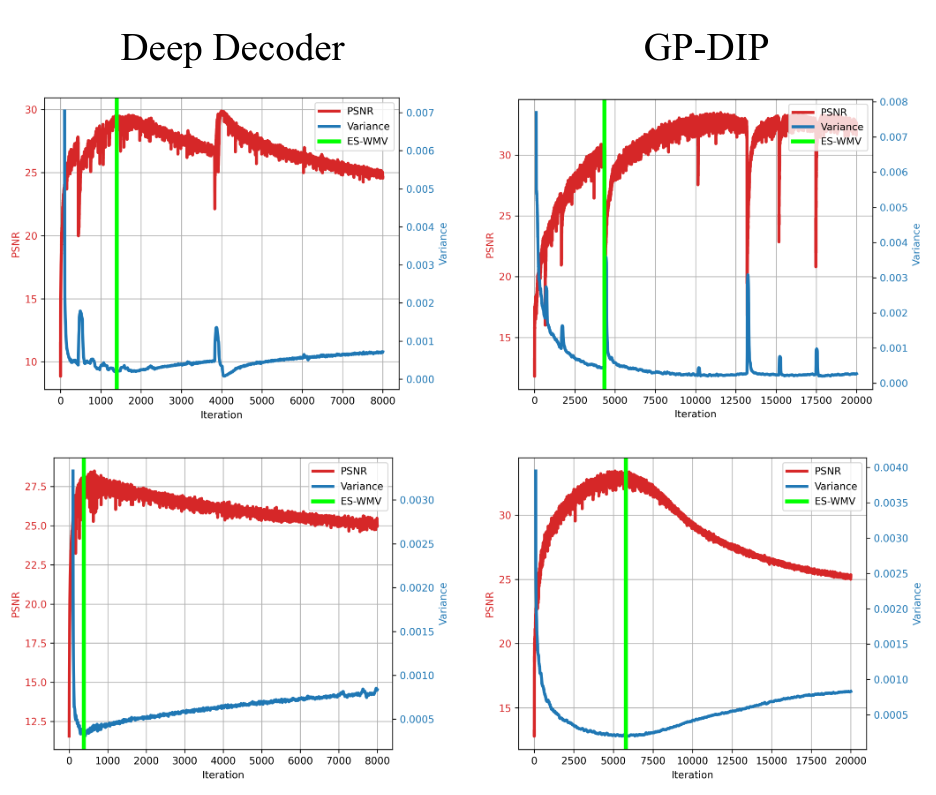}
    \caption{Top left: DD with the default learning rate for ``Lena(L)''; top right: GP-DIP with the default learning rate for ``House(L)''; bottom left: DD with learning rate = $0.001$ for ``Lena(L)''; bottom right: GP-DIP with learning rate = $0.001$ for ``House(L)''.}
    \label{fig:dd_curve}
\end{figure}
\begin{figure}[!htbp]
    \centering
    \includegraphics[width=0.7\linewidth]{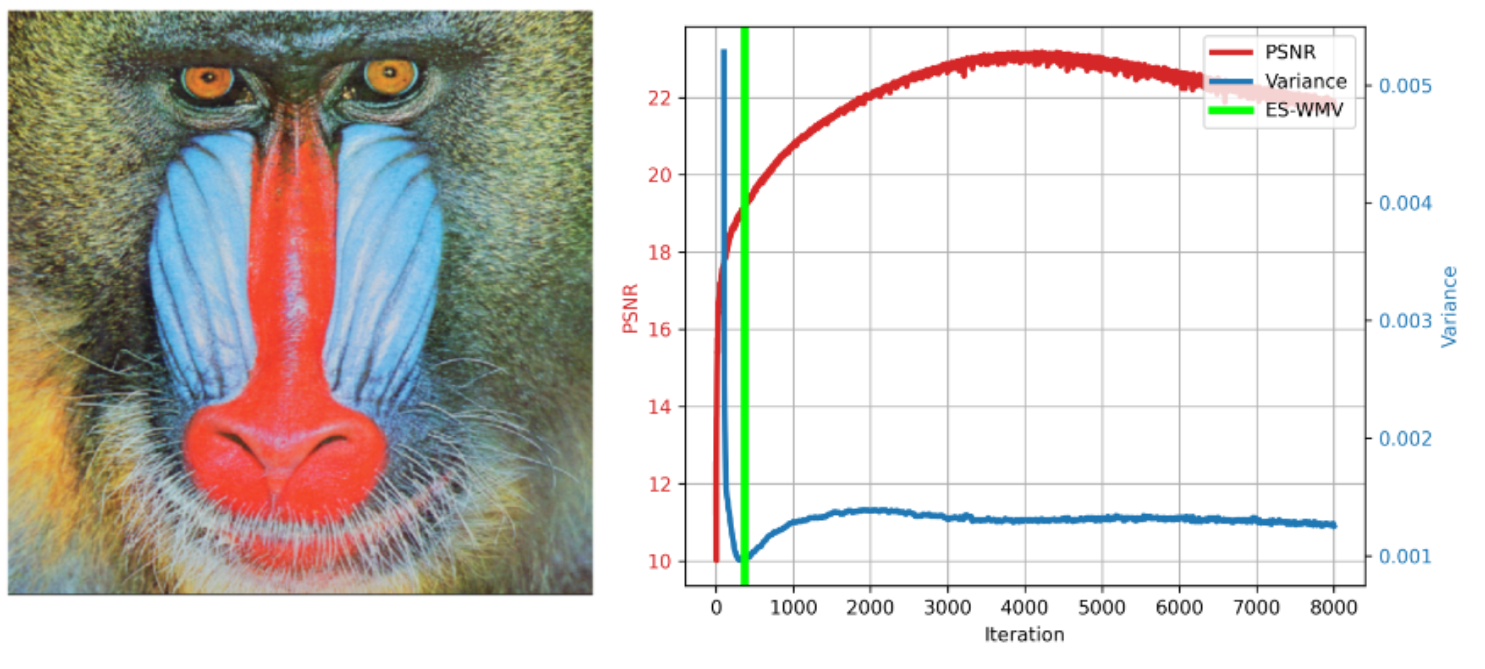}
    \caption{Our ES-WMV method on DIP for denoising ``Baboon" with low-level Gaussian noise. Left: clean ``Baboon"; right: the denoising process.}
    \label{fig:baboon}
\end{figure}

{For limitations, our theoretical justification is only partial, sharing the same difficulty of analyzing DNNs in general; our ES method struggles with images with substantial high-frequency components; our detection is sometimes off the peak in terms of iteration numbers when helping certain DIP variants, e.g. DIP-TV with low-level Gaussian noise (\cref{fig:helper_example}), but the detected PSNR gap is still small.}

Our ES method needs three things to succeed: (1) the U-shape of the VAR curve, (2) the VAR valley aligning with the PSNR peak, and (3) the successful numerical detection of the VAR valley. In this section, we discuss two major failure modes of our ES method: (I) the VAR valley aligns well with the PSNR peak, but the U-shape assumption is violated. A dominant pattern is the presence of multiple valleys, see, e.g., the top row of \cref{fig:dd_curve} that shows such examples with DIP variants, DD and GP-DIP (we do not observe the multi-valley phenomenon on DIP itself in \cref{fig:denoising_example}). Since our numerical valley detection method aims to locate the first major valley, it may not locate the deepest valley among the multiple valleys. Fortunately, for these cases, we observe that using smaller learning rates can help to smooth out their curves and mitigate the multi-valley phenomenon, leading to much smaller detection gaps (see the bottom row of \cref{fig:dd_curve});
(II) the VAR valley does not align well with the PSNR peak, which often happens on images with significant high-frequency components, e.g., \cref{fig:baboon}. We suspect that this is because the initial VAR decrease tends to correlate with the early learning of low-frequency components in DIP. When there are substantial high-frequency components in an image, the PSNR curve takes more time to pick up the high-frequency components, after the VAR curve already reaches the first major valley; hence, the misalignment between the VAR valley and the PSNR peak occurs. In addition, our ES-WMV can fail for images with substantial high-frequency components, e.g. \cref{fig:baboon}.

\end{document}